\newcommand{\qandq}{\quad \text{and} \quad}
\newcommand{\mednorm}[1]{\| #1\| }
\pgfplotsset{compat=1.18}
\newcommand{\maketikzabplot}[0]{
\begin{tikzpicture}
    \begin{axis}[
        domain=-3:3,
        samples=200,
        axis lines=middle,
        axis line style={gray},
        xlabel={$a$},
        ylabel={$b$},
        xmin=-3.5, xmax=3.5,
        ymin=-3.5, ymax=3.5,
        xtick=\empty,
        ytick=\empty,
        restrict y to domain=-3:3,
        ]
        \addplot [
            domain=0:3,
            samples=200,
            name path=upper1,
            draw=none,
        ] {min(1/x,3)};
        \addplot [
            domain=0:3,
            samples=200,
            name path=lower1,
            draw=none,
        ] {0};
        \addplot [
            fill=blue,
            fill opacity=0.15
        ] fill between [
            of=lower1 and upper1,
        ];
        \addplot [
            domain=0.334:3,
            samples=200,
            name path=upper3,
            draw=none,
        ] {3};
        \addplot [
            domain=0.334:3,
            samples=200,
            name path=lower3,
            draw=none,
        ] {1/x};
        \addplot [
            fill=green,
            fill opacity=0.15
        ] fill between [
            of=lower3 and upper3,
        ];
        \addplot [
            domain=0:3,
            samples=200,
            name path=upper5,
            draw=none,
        ] {0};
        \addplot [
            domain=0:3,
            samples=200,
            name path=lower5,
            draw=none,
        ] {-3};
        \addplot [
            fill=orange,
            fill opacity=0.15
        ] fill between [
            of=lower5 and upper5,
        ];
        \addplot [
            domain=-3:0,
            samples=200,
            name path=lower2,
            draw=none,
        ] {max(1/x,-3)};
        \addplot [
            domain=-3:0,
            samples=200,
            name path=upper2,
            draw=none,
        ] {0};
        \addplot [
            fill=blue,
            fill opacity=0.15
        ] fill between [
            of=lower2 and upper2,
        ];
        \addplot [
            domain=-3:0.334,
            samples=200,
            name path=lower4,
            draw=none,
        ] {-3};
        \addplot [
            domain=-3:0.334,
            samples=200,
            name path=upper4,
            draw=none,
        ] {1/x};
        \addplot [
            fill=green,
            fill opacity=0.15
        ] fill between [
            of=lower4 and upper4,
        ];
        \addplot [
            domain=-3:0,
            samples=200,
            name path=lower6,
            draw=none,
        ] {0};
        \addplot [
            domain=-3:0,
            samples=200,
            name path=upper6,
            draw=none,
        ] {3};
        \addplot [
            fill=orange,
            fill opacity=0.15
        ] fill between [
            of=lower6 and upper6,
        ];
        \addplot[
            domain=0.33:3,
            thick,
            black
        ] {1/x};
        \addplot[
            domain=-3:-0.33,
            thick,
            black
        ] {1/x};
        \addplot[
            domain=-3:3,
            thick,
            dashed,
            black
        ] {-x};
        \node at (axis cs:0.5,0.5) {B};
        \node at (axis cs:-0.5,-0.5) {B};
        \node at (axis cs:1.5,1.5) {A};
        \node at (axis cs:-1.5,-1.5) {A};
        \node at (axis cs:-1.5,0.5) {C};
        \node at (axis cs:-0.5,1.5) {C};
        \node at (axis cs:1.5,-0.5) {C};
        \node at (axis cs:0.5,-1.5) {C};
    \end{axis}
\end{tikzpicture}
}
\newcommand{\gdupdatepicture}[0]{
\begin{tikzpicture}
    \begin{axis}[
        domain=-0.5:3,
        samples=200,
        axis lines=middle,
        axis line style={gray},
        xlabel={$a$},
        ylabel={$b$},
        xmin=-0.25, xmax=3,
        ymin=-2, ymax=2,
        xtick=\empty,
        ytick=\empty,
        restrict y to domain=-2:2,
        ]
        \addplot[
            domain=0.33:3,
            thick,
            black
        ] {1/x};
        \addplot[
            domain=1:1.272,
            thick,
            red,
            postaction={decorate, decoration={markings, mark=at position 1 with {\arrow[scale=1.5]{>}}}}
        ] {sqrt(x^2 - 1)};
        \addplot[
            domain=1:2,
            thick,
            red
        ] {-sqrt(x^2 - 1)};
        \addplot[
            only marks,
            mark=*,
            mark size=1.5,
            color=black
        ] coordinates {(2,-1.732)};
        \addplot[
            only marks,
            mark=*,
            mark size=1.5,
            color=blue
        ] coordinates {(1.03, 0.247)};
        \addplot[
            only marks,
            mark=*,
            mark size=1.5,
            color=black
        ] coordinates {(1,1)};
        \draw[->, line width=0.4mm,color=blue] (1.03, 0.247) -- (1.14, 0.708);
        \node[color=red] at (axis cs:1.9,-0.8) {GF Trajectory};
        \node[color=blue] at (axis cs:0.74, 0.49) {GD step};
        \node at (axis cs:1.4, 1.2) {$a=b=\sqrt{\Phi}$};
    \end{axis}  
\end{tikzpicture}
}
\DeclarePairedDelimiter{\abs}{\lvert}{\rvert} %
\DeclarePairedDelimiter{\brk}{[}{]}
\DeclarePairedDelimiter{\crl}{\{}{\}}
\DeclarePairedDelimiter{\prn}{(}{)}
\DeclarePairedDelimiter{\nrm}{\|}{\|}
\newcommand{\inner}[2]{\left\langle #1,\, #2 \right\rangle}
\DeclareMathOperator{\relu}{\mathrm{ReLU}}
\DeclareMathOperator*{\argmin}{arg\,min}
\newcommand{\collapse}[1]{$\dots$}
\newcommand{\R}{{\mathbb{R}}}
\newcommand{\N}{{\mathbb{N}}}
\newcommand{\scale}[0]{\boldsymbol{\lambda}}
\newcommand{\residual}[0]{\boldsymbol{\varepsilon}}
\renewcommand{\a}[0]{\mathbf{a}}
\renewcommand{\b}[0]{\mathbf{b}}
\newcommand{\atb}[0]{\a^\top \b}
\newcommand{\x}[0]{\mathbf{x}}
\newcommand{\y}[0]{\mathbf{y}}
\newtheorem{theorem}{Theorem}
\newtheorem{lemma}{Lemma}
\newtheorem{proposition}{Proposition}
\theoremstyle{definition}
\newtheorem{definition}{Definition}
\icmltitlerunning{Gradient Descent Converges Linearly to Flatter Minima than Gradient Flow}
\begin{document}


\twocolumn[
\icmltitle{
Gradient Descent Converges Linearly to Flatter Minima \\ than Gradient Flow in Shallow Linear Networks
}



\icmlsetsymbol{equal}{*}

\begin{icmlauthorlist}
\icmlauthor{Pierfrancesco Beneventano}{yyy}
\icmlauthor{Blake Woodworth}{xxx}
\end{icmlauthorlist}

\icmlaffiliation{yyy}{Princeton University, Princeton, NJ, USA}
\icmlaffiliation{xxx}{George Washington University, Washington, DC, USA}

\icmlcorrespondingauthor{Pierfrancesco Beneventano}{pierb@princeton.edu}

\icmlkeywords{Gradient Descent, Optimization for deep learning, Implicit Regularization, Linear Networks}

\vskip 0.3in

]


\printAffiliationsAndNotice{}  

\begin{abstract}
We study the gradient descent (GD) dynamics of a depth-2 linear neural network with a single input and output. We show that GD converges at an explicit linear rate to a global minimum of the training loss, even with a large stepsize--about $2/\textrm{sharpness}$. It still converges for even larger stepsizes, but may do so very slowly. We also characterize the solution to which GD converges, which has lower norm and sharpness than the gradient flow solution.
Our analysis reveals a trade off between the speed of convergence and the magnitude of implicit regularization.
This sheds light on the benefits of training at the ``Edge of Stability'', which induces additional regularization by delaying convergence and may have implications for training more complex models.
\end{abstract}

\section{Introduction}

Training modern machine learning (ML) models like deep neural networks via empirical risk minimization (ERM) requires solving difficult high-dimensional, non-convex, under-determined optimization problems. Although they are usually intractable to solve in theory, we train models effectively in practice using algorithms like stochastic gradient descent (SGD). This highlights a disconnect between the worst-case convergence rate of SGD and its convergence on specific ERM problems that arise from training, e.g., neural networks. Even if we can solve the ERM problem, typical minimizers of the under-determined objective will overfit and generalize poorly. That said, the specific solutions found by SGD and its variants usually \emph{do} successfully generalize. Understanding how and why we are able to successfully optimize and generalize with these models is of great interest to the ML community and could help fuel continued progress in applied ML. 

A key feature of popular ML models, including neural networks, is that the model output is related to the product of model parameters in successive layers. For instance, the output of a 2 layer feed-forward network with ReLU activations has output $\mathbf{W}_2 \relu(\mathbf{W}_1 x+\b_1)$, which is closely related to the product of the weight matrices $\mathbf{W}_2\mathbf{W}_1$. Ultimately, this ``self-multiplication'' of different model parameters gives rise to the non-convex and under-determined ERM problems that cause such (theoretical) difficulties. 

In this work, we distill this parameter self-multiplication property down to its simplest form and comprehensively explain how it affects the training optimization dynamics, the ``implicit regularization'' of the model parameters, and the ``edge-of-stability'' dynamics that arise in certain regimes. In particular, we consider the extremely simple problem of learning a univariate linear model $\hat{y} = mx$ to minimize the squared error, except we parameterize the slope as $m = m(\a,\b) = \a^\top \b$ in terms of self-multiplying parameters $\a,\b\in\R^d$. This can also be thought of as a depth-2 linear neural network with $d$ hidden units.
For training data $\crl{(x_i,y_i)\in\R\times\R}_{i=1}^n$, this results in the loss
\begin{equation}
\min_{\a,\b\in\R^d}\bar{L}(\a,\b) \quad := \quad \frac{1}{2n} \sum_{i=1}^n \prn*{\a^\top \b x_i - y_i}^2.
\end{equation}
This objective is equivalent---by rescaling and subtracting a constant---to the even simpler loss\footnote{See Lemma \ref{lem:simplify-objective} in Appendix \ref{appendix:objective} for a simple proof.} 
\begin{equation}
\label{problem:def}
\min_{\a,\b\in\R^d}L(\a,\b) \quad := \quad \frac{1}{2}\prn*{\a^\top\b - \Phi}^2.
\end{equation}
In what follows, we focus on this formulation and assume $\Phi \geq 0$ w.l.o.g.~for simplicity and clarity. 

Despite its simplicity, the objective \eqref{problem:def}, which has also been studied by prior work \citep{lewkowycz_large_2020,wang_large_2022,chen_beyond_2023,ahn_learning_2024,xu_three_2024}, is a useful object of study because it has a number of qualitative similarities to more complex and realistic problems like deep learning training objectives. First, it has similar high-level properties---the problem \eqref{problem:def} is non-convex and highly under-determined because the set of minimizers constitutes the ($2d-1$)-dimensional hyperboloid in $\R^{2d}$ that solves $\atb = \Phi$. It also exhibits some of the same symmetries as realistic neural networks; for example, $\atb$ is invariant to swapping ``neurons'' $(\a_i,\b_i) \leftrightarrow (\a_j,\b_j)$ or to rescaling $(\a_i,\b_i) \to (c\a_i,c^{-1}\b_i)$. More importantly, the dynamics when optimizing \eqref{problem:def} with gradient descent are qualitatively similar to the dynamics of training more complex models \citep[see, e.g.,][]{xu_three_2024}. Simultaneously, the problem \eqref{problem:def} is simple enough that we can provide a detailed and nearly comprehensive characterization of several different aspects of training.


\subsubsection*{Main Contributions}
We analyze the discrete (finite-step) gradient descent (GD) trajectories for the above objective and provide the following results:

\paragraph{1. Convergence of Gradient Descent.}
Prior work \citep{wang_large_2022} shows that GD converges to a global minimum from almost every initialization, despite the non-convexity of \eqref{problem:def}. We strengthen this claim by proving \emph{linear convergence} despite no PL-condition: GD converges at a linear rate, with explicit dependence on the stepsize \(\eta\), the initial parameters \((\mathbf{a}(0), \mathbf{b}(0))\), and the target value \(\Phi\). Additionally, we identify several distinct \emph{phases} in the training dynamics, determined by the relationships among the stepsize \(\eta\), the parameter scale \(\scale := \|\mathbf{a}\|^2 + \|\mathbf{b}\|^2\), and the residual \(\residual := \mathbf{a}^\top \mathbf{b} - \Phi\). Some of these phases align with the so-called Edge of Stability (\textsc{EoS}) phenomenon \citep{cohen_gradient_2021}, where GD continues to reduce the loss even when the Hessian’s largest eigenvalue exceeds \(2/\eta\).



\paragraph{2. Location of Convergence.}
Beyond convergence speed, we also describe \emph{which} global minimizers GD selects out of the many solutions satisfying \(\mathbf{a}^\top \mathbf{b} = \Phi\). We show that GD \emph{implicitly regularizes} the \emph{imbalance} 
\[
Q \;=\;\sum_{i=1}^d\bigl|\mathbf{a}_i^2 - \mathbf{b}_i^2\bigr|,
\]
consistently decreasing \(Q\) whenever \(\eta\) is not too large, whereas \emph{gradient flow} (GF) conserves \(Q\). This discrepancy implies that discretizing the training steps (i.e., using GD) can produce strictly \emph{flatter} solutions compared to GF—since the solution’s ``sharpness,'' or top Hessian eigenvalue, is tied to \(\|\mathbf{a}\|^2 + \|\mathbf{b}\|^2\). In particular, larger stepsizes promote stronger regularization and lower sharpness. While this sharper–flatter distinction has no effect on this model’s \emph{prediction} (the function is the same at all global minima), it can be relevant for generalization in more complex settings \citep{hochreiter_flat_1997,keskar_large-batch_2016,smith_bayesian_2018,park_effect_2019}. 
There is a large body of work, indeed, in other contexts showing that flatter minima of the loss tend to generalize better \citep{hochreiter_flat_1997,keskar_large-batch_2016,smith_bayesian_2018,park_effect_2019}, and our analysis shows how the self-multiplying structure of \eqref{problem:def} tends to regularize the sharpness.

\paragraph{Key Implications.}
Overall, these findings yield a near-complete picture of how the “product-parameterized” objective \(\tfrac12\bigl(\mathbf{a}^\top\mathbf{b} - \Phi\bigr)^2\) behaves under GD.
Beyond their intrinsic interest, we believe these results yield useful intuition for large-step training and implicit regularization for general neural networks.
In particular, 
\begin{enumerate}[leftmargin=1em,itemsep=0.2em,parsep=0.1em]
    \item[(i)] \textbf{GD Regularizes More Than GF.} \ 
    On this model, discretized GD strictly reduces the parameter imbalance \(Q\) (and thus the norm) beyond what continuous gradient flow achieves, with bigger step sizes regularizing more. However, \(Q\) does not necessarily vanish to zero, so solutions are typically not maximally regularized.

    \item[(ii)] \textbf{GF Is Not Always a Good Proxy.} \ 
    Since gradient flow preserves \(Q\), while GD decreases it, using GF to approximate GD can be misleading—particularly at moderate or large \(\eta\). Our results \textit{quantify} such discrepancies.

    \item[(iii)] \textbf{Convergence Speed vs.~Regularization Trade-off.} \ 
    Surprisingly, stronger implicit regularization of \(Q\) generally slows the overall convergence rate, and vice versa. This trade-off is closely linked to the \textsc{EoS} regime, where a large \(\eta\) can flatten the final solution but may cause non-monotonic or slower convergence.
\end{enumerate}



\paragraph{Technical Overview.} The key to our analysis is the following pair of observations. On the one hand, gradient descent iterations change the imbalance like $Q(t+1) = |1-\eta^2\residual(t)^2|Q(t)$, so the imbalance decreases throughout optimization for $0 < \eta < \nicefrac{\sqrt{2}}{\abs{\residual(t)}}$. At the same time, the objective $L$ does \emph{not} globally satisfy the Polyak-{\L}ojasiewicz (PL) condition \citep{polyak_gradient_1963} because the origin is a saddle point, but it \emph{does} satisfy a version of the PL condition along the GD trajectory, which is sufficient to prove linear convergence of GD to a global minimizer. Interestingly, the PL constant along the GD trajectory, which controls the speed of convergence, is equal to the smallest value of $\scale(t)$ encountered along the way, which is itself approximately equal to the value of $Q(t)$ at the first time that $\a(t)^\top\b(t) > 0$. Thus, the \emph{stronger} the implicit regularization of $Q$, the \emph{slower} the convergence of GD, and vice versa, which puts these goals directly at odds.

\section{Related Work}

A large body of research has shown empirically that training neural networks with larger learning rates tends to lead to better generalization \citep{lecun_efficient_2002,bjorck_understanding_2018,li_towards_2019,jastrzebski_break-even_2020}. However, in classical settings, convergence can only be guaranteed when the stepsize is small enough that $\scale_{\max}(\nabla^2 L) < \nicefrac{2}{\eta}$ throughout optimization \citep{bottou_optimization_2018}. Nevertheless, a recent line of work starting with \citet{cohen_gradient_2021} observed that when training neural networks, the maximum eigenvalue of the Hessian, or ``sharpness'', tends to grow throughout training until it reaches, or even surpasses the critical $\nicefrac{2}{\eta}$ threshold. But rather that diverging, the loss continues to decrease (non-monotonically) while the sharpness continues to hover around $\nicefrac{2}{\eta}$, which is referred to as the Edge of Stability (\textsc{EoS}) phenomenon. Understanding more deeply the training of neural networks with large stepsizes is of great interest.

Problems closely resembling \eqref{problem:def} have been studied previously. \citet{ahn_learning_2024} study losses of the form $\ell(ab)$ with $a,b\in\R$ and $\ell$ any convex, Lipschitz, and even function. The assumption that $\ell$ is even means it is minimized at zero (this is analogous to $\Phi=0$ in our case), and they prove convergence to zero from any initialization with any stepsize, but without a rate. However, this result relies crucially on both the loss being Lipschitz and minimized at zero. This is not surprising---we know that GD diverges on realistic objectives when the stepsize is too large. They also show that the limit point of gradient descent satisfies $\abs{a_\infty^2 - b_\infty^2} \approx \min\crl{\nicefrac{2}{\eta},\abs{a_0^2 - b_0^2}}$, i.e.~the imbalance between the weights is implicitly regularized down to the level of $\nicefrac{2}{\eta}$. \citet{chen_beyond_2023} study \eqref{problem:def} with scalar $a,b\in\R$ and prove that the limit point of GD satisfies $a-b \to 0$ when the stepsize is chosen slightly too large for convergence to any minimizer $ab=\Phi$ to be possible. This is qualitatively similar to our work, but they intentionally choose a too-large stepsize in order to highlight the implicit regularization of the imbalance, while we provide conditions on $\eta$ under which convergence to a minimizer and some amount of regularization happen simultaneously. 

In a related study, \citet{xu_three_2024} explore the continuous dynamics of gradient flow using the exact same model discussed here. They demonstrate that the dynamics unfold along a one-dimensional curve, with the location of convergence distinctly defined by conserved quantities. Contrary to their findings, our research reveals this is not the case for gradient descent, highlighting the danger of relying excessively on continuous models to understand discrete non-convex optimization dynamics.

In the most closely related work, \citet{wang_large_2022} study the exact objective \eqref{problem:def} and show that gradient descent using any stepsize up to $\eta \lesssim \nicefrac{4}{\textrm{sharpness}}$---approximately twice as large as the classical threshold of $\nicefrac{2}{\textrm{sharpness}}$---eventually converges to a minimizer, but without a rate. They also show some level of implicit regularization of $\scale$, e.g.~at convergence $\scale \leq \frac{2}{\eta}$. In comparison, we provide an explicit convergence rate for GD and give a more detailed connection between this rate and the implicit regularization.

Finally, many papers have studied other models such as matrix factorization or linear neural networks \citep{saxe_exact_2014,arora_convergence_2019,gidel_implicit_2019,tarmoun_understanding_2021,xu_linear_2023,nguegnang_convergence_2024}, which are more faithful representations of realistic neural networks, but they are also much more difficult to analyze. Due to this difficulty, these results often only apply to gradient flow, or to GD with a very small learning rate, or to GD under additional, hard to interpret assumptions. In this work, we focus on the problem \eqref{problem:def} in order to obtain a simpler, easier to interpret set of results.

\section{Preliminaries}
\label{sec:notation}

We study gradient descent (GD) on the quadratic loss
\begin{equation}\label{eq:loss-def}
    L(\mathbf{a},\mathbf{b}) 
    \;=\; \tfrac12\,\bigl(\mathbf{a}^\top \mathbf{b} \;-\; \Phi\bigr)^2,
    \quad \mathbf{a},\mathbf{b}\,\in\mathbb{R}^d,
\end{equation}
where \(\Phi \ge 0\). The discrete updates take the form
\begin{align}
\!\!
\begin{bmatrix}
\mathbf{a}(t+1)\\[2pt]\mathbf{b}(t+1)
\end{bmatrix} 
&=\,
\begin{bmatrix}
\mathbf{a}(t)\\[2pt]\mathbf{b}(t)
\end{bmatrix}
\,-\,\eta\nabla L\bigl(\mathbf{a}(t), \mathbf{b}(t)\bigr) \nonumber\\
&=\,
\begin{bmatrix}
\mathbf{a}(t)\\[2pt]\mathbf{b}(t)
\end{bmatrix}
\,-\,\eta\bigl(\mathbf{a}(t)^\top\mathbf{b}(t)-\Phi\bigr) \!
\begin{bmatrix}
\mathbf{b}(t)\\[2pt]\mathbf{a}(t)
\end{bmatrix} \!,
\label{eq:parameter-gd-dynamics}
\end{align}
but tracking the evolution of \(\mathbf{a},\mathbf{b}\) directly can be unwieldy. Instead, we reparametrize via three auxiliary quantities that more cleanly describe the training dynamics.

\paragraph{Residuals.}
Define the residual 
\[
\residual 
\;:=\; \mathbf{a}^\top \mathbf{b} \;-\;\Phi.
\]
This scalar measures the distance to the manifold of global minima \(\{\mathbf{a}^\top\mathbf{b} = \Phi\}\). Its sign and magnitude will be key to understanding convergence.

\paragraph{Norm of the parameters.}
Let 
\[
\scale 
\;:=\;
\|\mathbf{a}\|^2 
\;+\;
\|\mathbf{b}\|^2.
\]
Crucially, the Hessian of \eqref{eq:loss-def} at a point \((\mathbf{a}, \mathbf{b})\) satisfying \(\mathbf{a}^\top \mathbf{b} = \Phi\) is
\begin{equation}
\nabla^2 L(\mathbf{a}, \mathbf{b})
\;=\;
\begin{bmatrix}
\mathbf{b}\\[2pt]\mathbf{a}
\end{bmatrix}
\!\begin{bmatrix}
\mathbf{b}\\[2pt]\mathbf{a}
\end{bmatrix}^\top,
\end{equation}
whose top eigenvalue (sharpness) is exactly \(\scale\). In particular, the flattest possible minimizer has \(\scale = 2\Phi\), achieved when \(\mathbf{a}\) and \(\mathbf{b}\) coincide up to a sign (due to Cauchy-Schwarz inequality).
Moreover, \(\scale\) governs how the residuals evolve under \eqref{eq:parameter-gd-dynamics}. Indeed, we have that
\begin{equation}
\label{eq:residuals-update}
\residual(t+1)
\;=\;
\residual(t)\,\Bigl[\,1 \;-\;\eta\,\scale(t)\;+\;\eta^2\,\residual(t)\!\bigl(\residual(t)+\Phi\bigr)\Bigr].
\end{equation}
Hence, the term \(-\,\eta\,\scale(t)\) is the principal driver for reducing \(\residual(t)\). In turn, \(\scale(t)\) evolution is governed by \(\residual(t)\)
\begin{equation}
\label{eq:scale-update}
\scale(t+1)
\;=\;
\bigl[\,1 + \eta^2\,\residual(t)^2\bigr]\;\scale(t)
\,-\,
4\eta\residual(t)\bigl(\residual(t) + \Phi\bigr).
\end{equation}

\paragraph{The imbalance.}
Finally, define \(Q_i := \mathbf{a}_i^2 - \mathbf{b}_i^2\) for \(i = 1,\ldots,d\) and let
\[
Q 
\;:=\; 
\sum_{i=1}^d \bigl|\,Q_i\,\bigr|.
\]
This quantifies how “imbalanced” the individual components of \(\mathbf{a}\) and \(\mathbf{b}\) are. Under \emph{gradient flow}, each \(Q_i\) is conserved, but under \emph{discrete} GD we have
\begin{align}
Q_i(t+1) 
&=\;\bigl[\,1 \;-\;\eta^2\,\residual(t)^2\,\bigr]\;Q_i(t),
\label{eq:Qi-update}
\end{align}
Whenever \(\eta < \sqrt{2}\bigl/\!\bigl|\residual(t)\bigr|\), the term \(\bigl|\,1 - \eta^2\,\residual(t)^2\bigr|\) is in \((0,1)\), so \(Q\) strictly decreases as \(Q_i\), for all $i$, decreases in absolute value. This decline in \(Q\) represents a core \emph{implicit regularization} effect unique to discrete steps. Our analysis leverages \(Q\) in two main ways: (i)~lower-bounding \(\scale\) by \(Q\) to control convergence speed, and (ii)~characterizing which minimizer (among the infinitely many) the algorithm ultimately selects.

\medskip

Together, \(\residual(t), \scale(t)\), and \(Q(t)\) offer a more tractable viewpoint than tracking \(\mathbf{a}(t)\) and \(\mathbf{b}(t)\) directly. In subsequent sections, we use these reparametrized updates \eqref{eq:residuals-update}--\eqref{eq:Qi-update} to analyze both the \emph{speed} of convergence and the \emph{location} (norm/sharpness) of the solution.

\section{Location of Convergence}
\label{section:location}

We first address the question of \emph{which} minimizers (among the infinitely many satisfying $\mathbf{a}^\top \mathbf{b}=\Phi$) is selected by GD. Our main theorem shows that GD reduces the imbalance $Q_i := \mathbf{a}_i^2 - \mathbf{b}_i^2$, whereas GF keeps it constant.

\begin{theorem}\label{theo:loc}
Let $0 < \eta < \min\Bigl\{\tfrac{1}{2|\residual(0)|}, \tfrac{2}{\sqrt{\scale(0)^2 + 4\,\Phi^2}}\Bigr\}.$ 
Then, at the limit point of gradient descent
we have\footnote{A similar result holds for larger stepsizes, but its statement is more involved. See Appendix \ref{app:location}.}
\begin{equation*}
\begin{split}
&0 
\;<\;
|Q_i(0)| \,\exp \Bigl(- \tfrac{\sqrt{\eta}\,\residual(0)^2}{\Phi}\Bigr)
\;<\;
|Q_i(\infty)| \quad \text{and} \\
&|Q_i(\infty)|
\;<\;
|Q_i(0)| \,\exp\Bigl(- \eta^2 \!\sum_{t=0}^{\infty}\residual(t)^2\Bigr)
\;<\;
|Q_i(0)|,
\end{split}
\end{equation*}
for all $i \in \{1,2, \ldots, d\}$.
\end{theorem}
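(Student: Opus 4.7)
The plan is to iterate the imbalance update $Q_i(t+1) = (1 - \eta^2 \residual(t)^2)\,Q_i(t)$ from \eqref{eq:Qi-update}, convert the resulting infinite product into a sum via logarithms, and then control $\sum_t \residual(t)^2$ from above and below. The first step is to verify that, under the stepsize hypothesis, the factor $1 - \eta^2 \residual(t)^2$ stays strictly in $(0,1)$ for every $t$. At $t=0$ this is immediate from $\eta < 1/(2|\residual(0)|)$, and for $t \ge 1$ it follows from the fact that $|\residual(t)|$ does not grow beyond $|\residual(0)|$ along the trajectory, a consequence of the accompanying convergence analysis. Consequently the sign of each $Q_i$ is preserved, $|Q_i(t)| = |Q_i(0)| \prod_{s=0}^{t-1} (1 - \eta^2 \residual(s)^2)$, and $\log(|Q_i(t)|/|Q_i(0)|) = \sum_{s=0}^{t-1} \log(1 - \eta^2 \residual(s)^2)$.

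The upper bound is then straightforward: applying $\log(1-x) \le -x$ termwise and passing to $t \to \infty$ yields $|Q_i(\infty)| \le |Q_i(0)| \exp(-\eta^2 \sum_{t=0}^\infty \residual(t)^2)$, and strict inequality with $|Q_i(0)|$ follows because $\residual(0) \ne 0$ forces the sum to be strictly positive. For the lower bound I would use $\log(1-x) \ge -x/(1-x)$; combined with the stepsize hypothesis to bound $\eta^2\residual(t)^2$ uniformly away from $1$, this gives $\log(|Q_i(\infty)|/|Q_i(0)|) \ge -C\,\eta^2 \sum_t \residual(t)^2$ for an absolute constant $C$.

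The main obstacle is to show that $\eta^2 \sum_{t=0}^\infty \residual(t)^2$ is bounded by a constant times $\sqrt{\eta}\,\residual(0)^2/\Phi$. I would establish this via the linear convergence of the residuals proved elsewhere in the paper: once $\scale(t)$ has grown to order $\Phi$, the contraction $|\residual(t+1)| \le (1 - c\eta)\,|\residual(t)|$ kicks in for an appropriate $c$, so the geometric tail contributes at most $\residual(0)^2/(2c\eta)$ to the sum. The delicate part is accounting for any early phase in which $\scale(t)$ is small and contraction is weak; the second stepsize condition $\eta < 2/\sqrt{\scale(0)^2 + 4\Phi^2}$ keeps this transient controlled and is, I suspect, the source of the $\sqrt{\eta}/\Phi$ scaling rather than the $\eta/\Phi$ one would obtain from a purely geometric argument. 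Exponentiating then completes both directions simultaneously.
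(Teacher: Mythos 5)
Your overall architecture matches the paper's: iterate the update \eqref{eq:Qi-update}, pass to logarithms, bound $\log(1-x)$ termwise from above and below, and reduce everything to controlling $\eta^2\sum_t\residual(t)^2$. The upper bound is handled the same way in both arguments and is fine (a minor difference: the paper uses $1-x\ge e^{-x-x^2}$ for $x\in[0,1/2]$ rather than $\log(1-x)\ge -x/(1-x)$ for the termwise lower bound; either works once $\eta^2\residual(t)^2\le 1/4$ is secured by the stepsize hypothesis and the monotonicity of $|\residual(t)|$).

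The genuine gap is in the lower bound, and you have essentially flagged it yourself: you ``suspect'' the $\sqrt{\eta}$ scaling comes from the second stepsize condition taming the transient, but you do not supply the mechanism, and your quantitative sketch points the wrong way. If the contraction were $|\residual(t+1)|\le(1-c\eta)|\residual(t)|$ with $c$ of order $\Phi$ essentially throughout, as in your geometric-tail estimate, you would get $\eta^2\sum_t\residual(t)^2\lesssim\eta\,\residual(0)^2/\Phi$, a \emph{stronger} bound than claimed---but that is not what holds. The worst case is an initialization in Region C ($\a^\top\b<0$), where the trajectory must pass near the origin: there the PL-type constant governing the contraction is not $\Phi$ but $Q(\tau)$, the imbalance at the step $\tau$ of closest approach, and $Q(\tau)$ is itself being eroded by the very dynamics you are trying to bound. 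The paper closes this circle with the coupled-sequence analysis of Appendix \ref{Appendix:PLAT} (Proposition \ref{prop:final:PLAT}, Lemma \ref{lemma:conv:1}), which yields $Q(\tau)\ge c_1>2\sqrt{\eta}\,\Phi$; substituting $\mu=Q(\tau)\gtrsim\sqrt{\eta}\,\Phi$ into $\eta^2\sum_t\residual(t)^2\approx\eta\,\residual(0)^2/\mu$ is exactly what produces the $\sqrt{\eta}\,\residual(0)^2/\Phi$ exponent. Without a quantitative lower bound on $Q(\tau)$ of this type, your argument cannot exclude that $\sum_t\residual(t)^2$ is large enough to push $|Q_i(\infty)|$ below the claimed threshold, so the first inequality of the theorem remains unproved.
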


\noindent
The proof follows directly from iterating the update in Equation~\eqref{eq:Qi-update}, which governs the evolution of each $Q_i$ under GD. The first bound on the step size is needed to prove the lower bound on $|Q_i(\infty)|$, and the second to show rapid convergence (cf.\ Theorem \ref{theo:speed}).
A full proof is located in Appendix \ref{app:location}.  

\begin{figure}[ht]
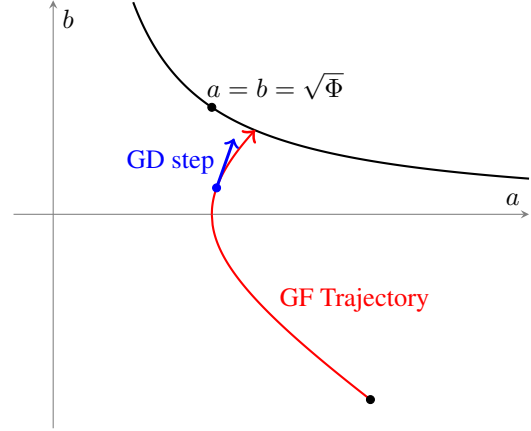

\centering
\gdupdatepicture
\caption{%
\label{Fig:GD_cuts}%
Case for $\a=a,\b=b \in \R$. Under gradient flow, the trajectory curves away from the origin, conserving $Q_i$. 
GD’s discrete step moves along the affine tangent space, shrinking $Q_i$.%
}
\end{figure}

Figure~\ref{Fig:GD_cuts} offers a geometric intuition: GF conserves the quantities $Q_i$ by curving away from the origin:
\[
\dot{Q_i} 
= 
2(\mathbf{a}_i\,\dot{\mathbf{a}}_i - \mathbf{b}_i\,\dot{\mathbf{b}}_i) 
= 
2\bigl(\mathbf{a}_i(-\,\residual\,\mathbf{b}_i) - \mathbf{b}_i(-\,\residual\,\mathbf{a}_i)\bigr) 
=
0.
\]
By contrast, the discretization error, introduced by the fact that gradient descent moves along the parallel vector to the curve, results in GD moving ``inward'' towards the line $\a=\b$, as illustrated by Figure \ref{Fig:GD_cuts}. This results in smaller imbalances $Q_i$, although it never reduces them to exactly $0$ (unless there is a step where $\eta = 1/\residual$ exactly).

\begin{center}
\textbf{Takeaway 1:} 
\emph{GD converges to a solution with \emph{strictly lower} imbalance than GF, 
though the imbalance never vanishes entirely.}
\end{center}

Theorem \ref{theo:loc} describes an implicit regularization effect which is only due to the action of discretizing the dynamics.
Given that GF is frequently used as a simpler analytical stand-in for gradient descent in the literature, Takeaway 1 underscores the risks associated with over-relying on this approximation, potentially leading to inaccurate predictions about real-world behaviors.

\paragraph{Quantifying the Implicit Regularization.}
From Equation~\eqref{eq:Qi-update} and Theorem~\ref{theo:loc}, we can approximate $Q_i(\infty)$ when $\eta\,|\residual(t)|$ stays sufficiently small:
\begin{equation}
\begin{split}
    |Q_i(\infty)|
    &\;=\;
    |Q_i(0)|\cdot\prod_{t=0}^{\infty} \bigl|\,1 - \eta^2\,\residual(t)^2\bigr|\\
    &\;\approx\;
    |Q_i(0)|\cdot\exp\Bigl(-\,\eta^2\!\sum_{t=0}^{\infty}\residual(t)^2\Bigr).
\end{split}
\end{equation}
Hence, $Q_i(\infty)$ depends directly on how quickly $\residual(t)$ (and thus the loss) decreases. In Section~\ref{section:speed}, we show that under certain step-size conditions, $\residual(t)$ converges linearly, implying
\(
\sum_{t=0}^\infty \residual(t)^2 \,\approx\, \frac{\residual(0)^2}{\eta\,\mu},
\)
so the final imbalance $Q(\infty)$ experiences only a modest reduction. Conversely, in slower convergence regimes, $\sum_{t=0}^\infty \residual(t)^2$ can be very large, making $Q(\infty)$ significantly smaller than $Q(0)$. Thus, slower optimization can result in stronger regularization.

\section{Speed of Convergence}
\label{section:speed}

While \citet{wang_large_2022} already showed that gradient descent (GD) converges for this model initialized almost everywhere, we now establish an \emph{explicit rate} of convergence to a global minimizer.
We also establish in Proposition \ref{prop:GF} the exponential convergence of GF for \textit{every} initialization.

\begin{theorem}\label{theo:speed}
Let
\[
0 < \eta < \min\!\Bigl\{\tfrac{1}{2\,|\residual(0)|},\;\tfrac{2}{\sqrt{\scale(0)^2 + 4\,\Phi^2}}\Bigr\},
\]
and define 
\[
\bar{\eta} 
\;:=\;
\min\Bigl\{\,\eta,\; \tfrac{2}{\sqrt{\scale(0)^2 + 4\,\Phi^2}} - \eta\Bigr\}.
\]
Assume $Q(0)\neq 0.$\footnote{Note that we also handle separately the case $Q(t)\neq 0$ for some $t$ in Appendix~\ref{app:speed}.} Then for any $\delta>0$, there exists an iteration $T$ such that $L\bigl(\mathbf{a}(T),\mathbf{b}(T)\bigr)\le \delta$, and
\begin{align}
\label{eq:theo:speed:1}
T  
&\;\le\;
\mathcal{O}\bigg(\frac{\log(\a(0)^\top \b(0))}{\eta \Phi} + \frac{|\a(0)^\top \b(0)|}{\eta \Phi}
  \\
  \label{eq:theo:speed:2}
  &\quad+\;
  \frac{\log\!\bigl(\tfrac1\delta\bigr)}
       {\bar{\eta}\,Q(0)\,\exp \bigl(\min\{-\,\mathbf{a}(0)^\top \mathbf{b}(0),\,0\}\bigr) 
        + \bar{\eta}\,\Phi}
\biggr).
\end{align}
If instead
\[
\min\Bigl\{\tfrac{\sqrt{2}}{|\residual|},\;\tfrac{2}{\sqrt{\scale(0)^2 + 4\,\Phi^2}}\Bigr\}
\;<\;\eta\;<\;
\min\Bigl\{\tfrac{2}{|\residual|},\;\tfrac{2}{\scale} + \tfrac{2\,\residual\,(\residual+\Phi)}{\scale^3}\Bigr\},
\]
then GD converges but may do so at a \emph{logarithmically slow} rate\footnote{Meaning there exists an arbitrarily long phase of decay with the rate $\eta(t+1) - \eta(t) = \big(\substack{\text{small}\\\text{constant}}\big) \cdot \eta(t)^2$ which relates with the ODE $\dot x = -x^2$ which goes as $1/t$ instead of $\exp(-t)$.}.
\end{theorem}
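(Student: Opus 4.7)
The plan is to iterate the residual update \eqref{eq:residuals-update}, writing it as $\residual(t{+}1)=\kappa(t)\,\residual(t)$ with per-step factor $\kappa(t):=1-\eta\scale(t)+\eta^2\residual(t)(\residual(t)+\Phi)$, and to establish a trajectory-wise contraction $|\kappa(t)|\le 1-c\,\bar\eta\,\scale(t)$ for an absolute constant $c>0$. The rate of convergence then reduces to a uniform lower bound on $\scale(t)$ along the iterates---exactly the trajectory PL condition flagged in the technical overview. Combining $\scale(t)\ge Q(t)$ (componentwise $a_i^2+b_i^2\ge|a_i^2-b_i^2|$) and $\scale(t)\ge 2\,\mathbf{a}(t)^\top\mathbf{b}(t)$ (Cauchy--Schwarz) with the lower bound $Q(t)\ge Q(0)\exp(-\sqrt{\eta}\,\residual(0)^2/\Phi)$ from Theorem~\ref{theo:loc} yields the required uniform lower bound on $\scale(t)$. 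The degenerate case $Q(0)=0$ is handled separately, because the trajectory then lives in a lower-dimensional invariant subspace.

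\textbf{Invariants.} I would first induct, using $\eta<2/\sqrt{\scale(0)^2+4\Phi^2}$, to prove $|\kappa(t)|<1$ and hence that $|\residual(t)|$ is non-increasing; this stepsize hypothesis is precisely what makes the coupled recursions \eqref{eq:residuals-update}--\eqref{eq:scale-update} a self-sustaining invariant. Once $|\residual(t)|\le|\residual(0)|$ is secured, the other hypothesis $\eta<1/(2|\residual(0)|)$ gives $\eta|\residual(t)|<1/2<\sqrt{2}$, so Theorem~\ref{theo:loc} activates and the lower bound on $Q$ propagates to every $t$.

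\textbf{Fast regime, split into two phases.} Partition the trajectory at the first time $\tau$ when $\mathbf{a}(t)^\top\mathbf{b}(t)>0$. For $t<\tau$ (warm-up), track $\mathbf{a}^\top\mathbf{b}$ via \eqref{eq:scale-update}: the $-4\eta\residual(\residual+\Phi)$ drift is positive when $\mathbf{a}^\top\mathbf{b}<0$, and a short computation shows $\mathbf{a}^\top\mathbf{b}$ grows by at least $\Omega(\eta\Phi)$ additively per step when near zero and by a multiplicative factor $1+\Omega(\eta\Phi)$ once $|\mathbf{a}^\top\mathbf{b}|$ is away from zero. These combine into the $\log(\mathbf{a}(0)^\top\mathbf{b}(0))/(\eta\Phi)+|\mathbf{a}(0)^\top\mathbf{b}(0)|/(\eta\Phi)$ of \eqref{eq:theo:speed:1}---the log factor is the escape time from the saddle, the linear factor is the time to drift back across zero. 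For $t\ge\tau$, plugging the lower bound $\scale(t)\ge Q(0)\exp(\min\{-\mathbf{a}(0)^\top\mathbf{b}(0),0\})+\Phi$ into the contraction yields geometric decay of $|\residual(t)|$ at rate $\bar\eta\cdot(\text{this bound})$, giving exactly \eqref{eq:theo:speed:2}.

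\textbf{Slow EoS regime, and main obstacle.} In the larger stepsize range, $\eta\scale(t)$ just overshoots $1$, so $\kappa(t)$ sits close to $-1$; the upper bound $\eta<2/\scale+2\residual(\residual+\Phi)/\scale^3$ is a second-order Taylor condition ensuring $\kappa(t)>-1$, but only barely. A direct expansion gives $1-|\kappa(t)|\gtrsim|\residual(t)|/\Phi$ near convergence, so the recursion $|\residual(t{+}1)|\lesssim|\residual(t)|(1-c|\residual(t)|/\Phi)$ is the discrete analogue of $\dot x=-cx^2$, producing $|\residual(t)|=\Theta(1/t)$ and hence a polynomial rather than logarithmic iteration complexity in $1/\delta$. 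I expect the main obstacle throughout to be the warm-up portion of the fast regime: when $\mathbf{a}(0)^\top\mathbf{b}(0)<0$, $\scale(t)$ is small and potentially non-monotone, the quadratic correction in $\kappa(t)$ is not negligible, and the scale--residual recursions must be analyzed jointly rather than by treating $\scale(t)$ as slowly varying. Getting the crossing time to be \emph{linear} (rather than polynomial) in $|\mathbf{a}(0)^\top\mathbf{b}(0)|$, and patching the two phases through a monotonicity or potential argument, is where the delicate stepsize conditions in the theorem statement are forced.
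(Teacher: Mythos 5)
Your overall architecture matches the paper's: a per-step contraction of $\residual$ whose rate is a trajectory-wise PL constant, a uniform lower bound on $\scale(t)$ obtained through $\scale\ge Q$ and Cauchy--Schwarz, a phase split at the first time $\mathbf{a}^\top\mathbf{b}>0$, and the $\dot x=-x^2$ heuristic for the EoS regime. However, there is a genuine gap in how you obtain the lower bound on $\scale(t)$. You propose to import the bound $Q(t)\ge Q(0)\exp(-\sqrt{\eta}\,\residual(0)^2/\Phi)$ from Theorem~\ref{theo:loc}, but in the paper that lower bound is itself \emph{derived from} the linear convergence rate of Theorem~\ref{theo:speed} (the proof in Appendix~\ref{app:location} explicitly sums $\sum_k \residual(k)^2$ using the geometric decay established there). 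Using it here is circular. The non-circular version of the argument is exactly the content of Proposition~\ref{prop:final:PLAT}: while $\mathbf{a}^\top\mathbf{b}<0$ (Region C), one must show that $\residual$ contracts at a multiplicative rate of order $\eta\,\max\{Q,\;|\residual|-\Phi\}$ while $Q$ only decays at rate $\eta^2\residual^2$, so that $Q$ at the crossing time $\tau$ remains bounded below; the paper does this with the coupled dominating sequences $(z_k,w_k)$ of Definition~\ref{def:sequence} and the identity $(z_{k+1}-z_k)^2/(w_{k+1}-w_k)=-w_k$, which converts the question into a discrete energy estimate. Your "short computation" giving $\Omega(\eta\Phi)$ additive growth of $\mathbf{a}^\top\mathbf{b}$ per step has the same circularity: that drift is $-\eta\residual\scale=\Omega(\eta\Phi\scale)$, and $\scale$ is small precisely when the bound is needed, so the residual and imbalance recursions must be closed jointly. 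You correctly flag this as the main obstacle, but the mechanism you offer to resolve it is not valid, and without it neither the $|\mathbf{a}(0)^\top\mathbf{b}(0)|/(\eta\Phi)$ term in \eqref{eq:theo:speed:1} nor the PL constant in \eqref{eq:theo:speed:2} is established.

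A secondary omission: the degenerate behavior is not at $Q(0)=0$ only. Even with $Q(0)\neq 0$, a step with $\eta\,|\residual(t)|=1$ sends the iterate onto the invariant manifold $\mathbf{a}=-\mathbf{b}$ leading to the saddle (Appendix~\ref{appendix:GD}); the paper excludes this via the hypothesis $\eta<1/(2|\residual(0)|)$ together with monotonicity of $|\residual|$, and your induction should record that this keeps $|1-\eta^2\residual(t)^2|$ strictly inside $(0,1)$ for all $t$.
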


\noindent
A detailed proof appears in Appendix~\ref{Appendix:PLAT}-\ref{app:speed}, while Section~\ref{section:proof_sketch} sketches the main ideas. The theorem shows that even when $\eta$ is relatively large (but below certain thresholds), GD converges \emph{linearly}, up to a constant additive term reflecting how long it takes to escape the region around its initialization. Concretely, the convergence can be split into two phases:  
\begin{itemize}[leftmargin=1em,itemsep=0.2em,parsep=0.1em]
    \item \textbf{Equation \eqref{eq:theo:speed:1}:} A phase where $|\residual| > |\Phi|$, during which convergence may slow and the trajectory risks nearing the saddle at the origin, although with probability 1 it avoids it, generally very quickly. We show that the speed of escaping of this phase is exponential up to log factors.
    \item \textbf{Equation \eqref{eq:theo:speed:2}:} A phase where the iterates are sufficiently close to the manifold of minima (\(\mathbf{a}^\top\mathbf{b} \approx \Phi\)), yielding exponential convergence once the dynamics remains in regions A and B (see below). This is the setting in which theorems of convergence of linear networks as the one in \cite{arora_convergence_2019} apply.
\end{itemize}
Hence, the overall convergence speed is exponential.
It is important to note that the rate of convergence in both phases depends on the unbalance $Q$ at the iteration in which we switch phase. This implies that if $Q$ is smaller, convergence happens slower. At the same time, in the previous section we established that if convergence is slower then the implicit regularization on $Q$ is stronger. This unveils an important trade off in the dynamics:

\begin{center}
\textbf{Takeaway 2:}
\emph{Stronger implicit regularization in the first phase of the training slows convergence. Generally, faster speed of training slows implicit regularization and slower speed of convergence imply stronger implicit regularization.}
\end{center}

This speed–regularization trade-off is illustrated in Figure~\ref{fig:experiments}. Even then, we observe an inverse relationship between speed and the strength of $Q$-regularization.

\subsubsection*{Edge-of-Stability Case}
Recent work \citep{cohen_gradient_2021} on training neural networks with MSE shows that for full-batch gradient descent, the Hessian’s largest eigenvalue generally hovers just above $2/\eta$ without causing divergence. For linear gradients, $\eta>\tfrac{2}{\scale}$ ordinarily implies divergence (e.g., the 1D parabola case), yet real neural networks manage to converge. The model we analyze offers a possible explanation: the product structure, combined with discrete updates, still allows convergence for $\eta>\tfrac{2}{\scale}$, but more slowly. 

Moreover, our work hint to a possible important and surprising benefit of training at the Edge of Stability: we prove that the slower the convergence and the larger $\eta$, the smaller the inbalance $Q$. Hence, training at the edge of stability albeit at the cost of longer or more oscillatory paths,  may enhance implicit regularization,.

\section{Proof Sketch}
\label{section:proof_sketch}
In what follows we present informally the proof of Theorem \ref{theo:speed}.
Note that Equation \eqref{eq:residuals-update} implies approximately that
\begin{equation}
\residual(t+1) \approx (1-\eta \scale(t))\residual(t).
\end{equation}
We show in Appendix \ref{appendix:sharpness} that if at initialization $\eta < \tfrac{2}{\sqrt{\scale(0)^2 + 4\,\Phi^2}}$ this will be the case throughout the trajectory. This implies that $\eta \scale(t)$ is upper bounded by a quantity strictly smaller than 2 and decreases.
Precisely,
\begin{lemma}
\label{lemma:bound_norm}
    Let $\eta \leq 1/\residual(0)$ and assume that $|\residual(t)|$ is monotonically decreasing along the trajectory of GD.
    Then $\scale(t)$ is bounded for all steps $t$ by
    \[
    \scale
    \quad \leq \quad
    \sqrt{ \scale(0)^2 + 4\Phi^2} 
    .
    \]
    Analogously, along the trajectory of GF, $\scale(t)$ is bounded for all steps $t$ by the same quantity.
\end{lemma}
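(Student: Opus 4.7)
The plan is to track the almost-conserved quantity $\scale(t)^2 - 4(\a(t)^\top\b(t))^2 = \|\a(t)+\b(t)\|^2\,\|\a(t)-\b(t)\|^2$, which I will show is monotonically non-increasing under GD and exactly conserved under GF. The key observation is that the GD update \eqref{eq:parameter-gd-dynamics} decouples $\a\pm\b$:
\[
\a(t{+}1) \pm \b(t{+}1) \;=\; (1\mp\eta\,\residual(t))\,(\a(t)\pm\b(t)).
\]
Taking squared norms of each equation and multiplying them gives the one-step identity
\[
\scale(t{+}1)^2 - 4(\a(t{+}1)^\top\b(t{+}1))^2 \;=\; (1-\eta^2\residual(t)^2)^2\,\bigl[\scale(t)^2 - 4(\a(t)^\top\b(t))^2\bigr].
\]
For GF, the analogous calculation $\tfrac{d}{dt}(\a\pm\b) = \mp\,\residual\,(\a\pm\b)$ makes the product $\|\a+\b\|^2\|\a-\b\|^2$ strictly conserved.

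Next, under $\eta \le 1/|\residual(0)|$ combined with the hypothesis that $|\residual(t)|$ is monotonically decreasing, we have $\eta^2\residual(t)^2 \le 1$ for every $t$, so the factor $(1-\eta^2\residual^2)^2$ lies in $[0,1]$. Iterating the identity yields
\[
\scale(t)^2 \;\le\; \scale(0)^2 + 4\bigl[(\a(t)^\top\b(t))^2 - (\a(0)^\top\b(0))^2\bigr],
\]
with equality in the GF case. It then remains to bound the bracket by $\Phi^2$. Writing $\a(t)^\top\b(t) = \residual(t)+\Phi$ and using $|\residual(t)|\le|\residual(0)|$, the inner product lies in the interval $[\Phi-|\residual(0)|,\,\Phi+|\residual(0)|]$, so the maximum of its square is $(\Phi+|\residual(0)|)^2$. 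A short case analysis in the sign of $\residual(0)$ shows this exceeds $(\a(0)^\top\b(0))^2$ by at most $\Phi^2$: when $\residual(0)\ge 0$ the difference is actually $\le 0$, and for GF the fact that $\residual(t)$ preserves sign (from $\dot\residual = -\residual\,\scale$) makes the bound immediate.

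The main technical obstacle is this last step in the GD setting when $\residual(t)$ may transiently flip sign, causing $(\a(t)^\top\b(t))^2$ to overshoot $(\a(0)^\top\b(0))^2$; the slack $+\,4\Phi^2$ in the stated bound is precisely what absorbs this, and the sharpest analysis uses that the multiplicative factor $\prod_{s<t}(1-\eta^2\residual(s)^2)^2$ is strictly less than $1$ exactly when $\residual$ has traversed a nontrivial range, providing a compensating contraction of $\scale^2-4(\a^\top\b)^2$ whenever the inner product grows. Putting these pieces together and taking square roots gives $\scale(t) \le \sqrt{\scale(0)^2 + 4\Phi^2}$, and the identical chain of inequalities (with equality) proves the GF case.
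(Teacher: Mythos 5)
Your key identity is correct and is in fact the same quantity the paper tracks: the paper's Lemma~\ref{lemma:conserved_norm} works with $\alpha=\scale^2-8\residual(\residual+\Phi)+4\residual^2$, which equals $\scale^2-4(\a^\top\b)^2+4\Phi^2=\|\a+\b\|^2\|\a-\b\|^2+4\Phi^2$, i.e.\ your quantity up to the additive constant $4\Phi^2$. Your derivation via the decoupled updates $\a(t{+}1)\pm\b(t{+}1)=(1\mp\eta\residual(t))(\a(t)\pm\b(t))$ is cleaner than the paper's term-by-term expansion, and the one-step contraction factor $(1-\eta^2\residual(t)^2)^2\in[0,1]$ under your hypotheses is right.

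The gap is in how you close. You reduce to showing $(\a(t)^\top\b(t))^2-(\a(0)^\top\b(0))^2\le\Phi^2$, but this is false in general: if $\residual(0)<0$ the worst case over $|\residual(t)|\le|\residual(0)|$ is $(\Phi+|\residual(0)|)^2-(\Phi-|\residual(0)|)^2=4\Phi|\residual(0)|$, which exceeds $\Phi^2$ whenever $|\residual(0)|>\Phi/4$ (e.g.\ $\a(0)^\top\b(0)=0$, so $\residual(0)=-\Phi$, with $\residual$ later flipping sign gives an excess of up to $4\Phi^2$, not $\Phi^2$). Your appeal to a ``compensating contraction'' from $\prod_s(1-\eta^2\residual(s)^2)^2$ is not substantiated and cannot work in general, since that product can be arbitrarily close to $1$ while $\a^\top\b$ still moves by $O(\Phi)$. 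The fix — which is what the paper does — is to never compare against $(\a(0)^\top\b(0))^2$: simply drop that nonnegative term to get $\scale(t)^2\le\scale(0)^2+4(\a(t)^\top\b(t))^2$, which gives the claim at every time $t$ with $|\a(t)^\top\b(t)|\le\Phi$, i.e.\ $\residual(t)\in[-2\Phi,0]$. The remaining times ($\residual(t)>0$ or $\residual(t)<-2\Phi$) are handled by a separate monotonicity argument on $\scale$ itself: by \eqref{eq:scale-update} (resp.\ $\dot\scale=-4\residual(\residual+\Phi)$ for GF) the norm is non-increasing outside the band $\residual\in(-\Phi,0)$, so the running supremum of $\scale$ is attained either at initialization or at a time inside the band already covered. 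Without this second ingredient your argument does not establish the stated bound.
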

Instrumental to prove this theorem is the observation that:
\begin{lemma}
\label{lemma:conserved_norm}
    The quantity $\alpha := \scale^2 - 8\residual (\residual + \Phi) + 4\residual^2$ is conserved by the gradient flow on $L$, and it is reduced by gradient descent as long as $\eta \leq \min\{1/\residual(0), 2/\scale\}$, at every step $t$ by the quantity
    \[
    \alpha(t+1)
    \ = \ 
    \alpha(t)
    \ - \ \underbrace{2 \eta^2 \residual(t)^2 Q(t)^2 \big|1 - \eta^2 \residual(t)^2 \big|}_{\text{positive when }\eta \leq 1/\residual(t)}.
    \]
\end{lemma}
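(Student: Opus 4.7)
\textbf{Proof plan for Lemma \ref{lemma:conserved_norm}.}
The strategy is to first rewrite $\alpha$ in a more geometric form, then reduce the GD update of $\alpha$ to a product of scalings of per-component invariants, and finally relate the resulting expression to $Q$.

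The first step is a cosmetic rewriting. Completing the square in $\residual$ we get
\[
\alpha \;=\; \scale^2 - 8\residual(\residual+\Phi) + 4\residual^2 \;=\; \scale^2 - 4(\residual+\Phi)^2 + 4\Phi^2 \;=\; M \;+\; 4\Phi^2,
\]
where $M := \scale^2 - 4(\a^\top\b)^2$. Since $\Phi$ is a constant, conservation/reduction of $\alpha$ is equivalent to conservation/reduction of $M$. For gradient flow this is a one-line calculation: using $\dot{\residual} = -\residual\scale$ and $\dot{\scale} = -4\residual(\residual+\Phi)$ (both easily derived from $\dot{\a}=-\residual\b$, $\dot{\b}=-\residual\a$), one checks that $\dot{M} = 2\scale\dot{\scale} - 8(\residual+\Phi)\dot{\residual} = -8\scale\residual(\residual+\Phi) + 8(\residual+\Phi)\residual\scale = 0$.

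The key step for GD is to decompose $M$ as a sum of squares that GD scales uniformly. A direct expansion gives the Lagrange-type identity
\[
M \;=\; \sum_{i=1}^d (\a_i^2-\b_i^2)^2 \;+\; \sum_{i\neq j}\bigl[(\a_i\a_j-\b_i\b_j)^2 + (\a_i\b_j-\b_i\a_j)^2\bigr].
\]
I would then verify, using only the coordinatewise GD update $\a_i(t+1)=\a_i-\eta\residual\b_i$, $\b_i(t+1)=\b_i-\eta\residual\a_i$, that each of the three ``building blocks'' $\a_i^2-\b_i^2$, $\a_i\a_j-\b_i\b_j$, $\a_i\b_j-\b_i\a_j$ is multiplied by exactly $(1-\eta^2\residual(t)^2)$ in one GD step. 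The identity for $\a_i^2-\b_i^2$ is precisely the given update \eqref{eq:Qi-update}; the other two follow by the same two-line expansion, since the $\eta$-linear terms cancel and only the $\eta^2\residual^2$ terms survive with a minus sign. Consequently $M(t+1) = (1-\eta^2\residual(t)^2)^2\,M(t)$, and therefore
\[
\alpha(t+1)-\alpha(t) \;=\; M(t+1)-M(t) \;=\; -\,\eta^2\residual(t)^2\bigl(2-\eta^2\residual(t)^2\bigr)\,M(t).
\]

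The final step is to replace $M$ by $Q^2$. For this I would prove the pointwise inequality
\[
(\a_i\a_j-\b_i\b_j)^2 + (\a_i\b_j-\b_i\a_j)^2 \;\ge\; |Q_iQ_j|,
\]
which reduces, after expansion, to the two sum-of-squares identities $(\a_i\a_j-\b_i\b_j)^2+(\a_i\b_j-\b_i\a_j)^2 \pm Q_iQ_j = 2(\a_i\a_j\mp\b_i\b_j)^2 \ge 0$. Summing this over $i\neq j$ and using $Q^2 = \sum_i Q_i^2 + \sum_{i\neq j}|Q_i Q_j|$ yields $M\ge Q^2$. Under the step-size hypothesis $\eta\le 1/|\residual(t)|$, which the assumption $\eta \le 1/|\residual(0)|$ together with the monotonicity of $|\residual|$ established earlier (Lemma \ref{lemma:bound_norm}) guarantees, we have $\eta^2\residual(t)^2\le 1$, so $2-\eta^2\residual(t)^2 \ge 2(1-\eta^2\residual(t)^2) = 2|1-\eta^2\residual(t)^2|$. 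Combining with $M\ge Q^2\ge 0$ yields
\[
\alpha(t+1) \;\le\; \alpha(t) \;-\; 2\eta^2\residual(t)^2\,Q(t)^2\,\bigl|1-\eta^2\residual(t)^2\bigr|,
\]
which is the claimed reduction.

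The main obstacle is the algebraic decomposition step: recognizing that $M=\scale^2-4(\a^\top\b)^2$ admits a sum-of-squares decomposition whose summands are \emph{exactly} the scalar invariants that GD rescales by the same factor $(1-\eta^2\residual^2)$. Once this identity is in hand, everything else is a short computation; verifying the rescaling of the off-diagonal blocks $\a_i\a_j-\b_i\b_j$ and $\a_i\b_j-\b_i\a_j$ is the only part that is not already written in the preliminaries, and the bound $M\ge Q^2$ is a clean two-term sum-of-squares argument.
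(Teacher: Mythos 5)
Your proof is correct and, for the gradient-descent half, takes a genuinely different route from the paper's. The paper proves the GF claim exactly as you do (computing $\dot\scale$ and $\dot\residual$ and checking cancellation), but for GD it expands $\scale_1^2$, $4\residual_1^2$ and $-8\residual_1(\residual_1+\Phi)$ by brute force in powers of $\eta$, checking that the orders $\eta$ and $\eta^3$ cancel and that the order-$\eta^2$ coefficient is $-2\residual^2\bigl(\scale^2-4(\residual+\Phi)^2\bigr)$. Your rewriting $\alpha = M + 4\Phi^2$ with $M=\scale^2-4(\a^\top\b)^2$, together with the observation that $M$ is a sum of squares of per-pair invariants each rescaled by exactly $(1-\eta^2\residual^2)$ in one GD step, yields the cleaner multiplicative law $M(t+1)=(1-\eta^2\residual(t)^2)^2M(t)$ and explains structurally why the odd-order terms must vanish. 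Two remarks. First, your route exposes an imprecision in the lemma as stated: the exact one-step decrement is $\eta^2\residual^2(2-\eta^2\residual^2)\,M$, where $M\ge Q^2$ with equality only in degenerate cases such as $d=1$ or $\a\parallel\b$ (the paper silently identifies $\scale^2-4(\residual+\Phi)^2$ with $Q^2$, and its degree-$4$ coefficient should be $\eta^4\residual^4 M$ rather than $2\eta^4\residual^4 Q^2$); your inequality $\alpha(t+1)\le\alpha(t)-2\eta^2\residual(t)^2Q(t)^2\lvert 1-\eta^2\residual(t)^2\rvert$ is the version that actually holds in general, and it suffices for every downstream use (e.g.\ Lemma \ref{lemma:lambda_GD_bound}). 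Second, a small typo: the ``$-$'' case of your sum-of-squares identity should read $(\a_i\a_j-\b_i\b_j)^2+(\a_i\b_j-\b_i\a_j)^2-Q_iQ_j=2(\a_i\b_j-\a_j\b_i)^2$, not $2(\a_i\a_j+\b_i\b_j)^2$; the pointwise bound $\ge\lvert Q_iQ_j\rvert$, and hence $M\ge Q^2$, still follow. Neither point is a gap in the argument.
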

This already roughly establishes convergence somewhere for big learning rates, but not its speed or location.

\begin{figure}[ht]
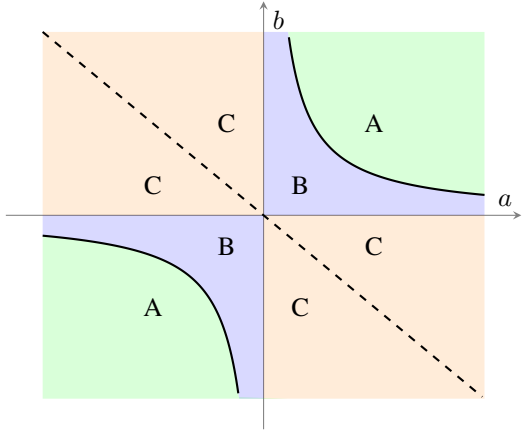

\centering
\maketikzabplot
\caption{%
\label{Fig:regions}%
Schematic of GD behaviors in three different regions: (A) $\residual > 0$, (B) $\residual < 0 < \mathbf{a}^\top\mathbf{b}$, (C) $\mathbf{a}^\top\mathbf{b} < 0$. 
See text for details.%
}
\end{figure}

To establish the rate of convergence for small and moderate step sizes we now must ensure $\scale(t)$ never becomes too small, i.e., the dynamics stays away from the origin. The key idea here is to notice that $\scale \geq Q$ and studying the evolution of its size partitioning the parameter space into the three “regions” depicted in Figure~\ref{Fig:regions}. Precisely, we show that along the trajectory $\scale,Q$ decreases but slowly enough to imply linear convergence both when your step stays within the region\footnote{Except to go from Region $C$ to Region $B$, the dynamics jumps between regions only for big learning rates.}:
\begin{enumerate}[leftmargin=1em,itemsep=0.2em,parsep=0.1em]
\item \textbf{Region~A:} $\residual>0$. 
   By Cauchy–Schwarz
   \[
   2 \Phi \ \leq \ 2\mathbf{a}^\top \mathbf{b} \ \leq \ \|\mathbf{a}\|^2+\|\mathbf{b}\|^2 \ = \ \scale,
   \]
   thus $\residual$ shrinks at a linear rate $(1-2\eta\Phi)$.
   
\item \textbf{Region~B:} $\residual<0$ but $\mathbf{a}^\top\mathbf{b}>0$. 
   Here, $\scale$ is small but each update \emph{increases} $\scale$, preventing it from collapsing to $0$. We prove $\scale(t)\ge \scale(\tau)$ for $\tau$ the time of entry into Region~B. This implies that $\residual$ shrinks at a linear rate of at least $(1-2\eta\scale(\tau))$ and $\scale(\tau)\ge Q(\tau)$.

\item \textbf{Region~C:} $\mathbf{a}^\top\mathbf{b}<0$. 
   Equation \eqref{eq:Qi-update} implies that the imbalance $Q$ evolves via $Q(t+1)=|1-\eta^2\,\residual(t)^2|\,Q(t)$. Equation \eqref{eq:residuals-update} implies that $\residual$ evolves roughly as $\residual(t+1)\approx (1-\eta\,Q(t))\,\residual(t)$. 
   Exiting Region~C requires $\residual$ crossing $-\,\Phi$.
   Since $Q$ shrinks slower than $\residual$ we establish in Appendix \ref{appendix:mu_cases} that roughly in $\mathcal{O}(\eta^{-1})$ steps 
   $Q$ only decreases of size $(1 - \eta)$ ensuring the dynamics is distant from the origin and thus $\residual$ change of $O(1)$ exiting Region~C towards Region~B\footnote{This is the most technical step of the proof.}.
\end{enumerate}
   
Putting all this together, we see that defining $\tau$ is the step in which $\a,\b$ are the closest to the origin, $\tau = \argmin_{t} \scale(t)$, we established that while $Q(t)$ decreases in $t$, $Q(\tau)$ never becomes vanishingly small, ensuring a positive lower bound to $\scale(t)$ and the speed of convergence. This yields linear convergence with rate $(1-\eta Q(\tau))$.

\begin{figure*}[t]
\centering
\includegraphics[width=0.49\linewidth]{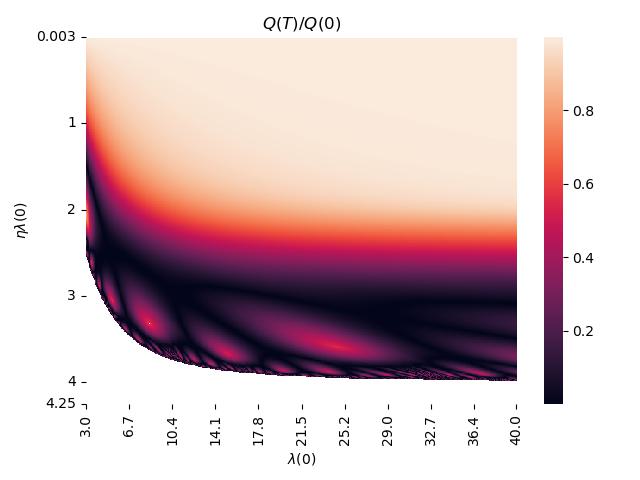}
\includegraphics[width=0.49\linewidth]{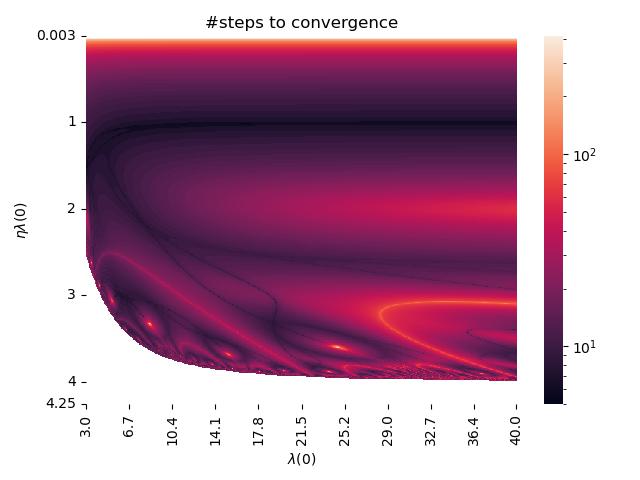}
\caption{\label{fig:experiments}
Gradient descent on \eqref{problem:def} with $\Phi=1$ and various step sizes and initial scales. 
Left: Ratio $Q(T)/Q(0)$ showing how little $Q$ changes when $\eta \,\scale(0)$ is small and how strongly it is reduced for large $\eta \,\scale(0)$. 
Right: The time to converge to a small residual, illustrating slower convergence in cases with stronger $Q$-regularization. The chaotic behavior appears when $\eta \geq 1/\residual$.}
\end{figure*}

\section{Conclusion}
In this paper, we analyzed the gradient descent dynamics of a depth-2 linear neural network, offering a simplified model to explore training behaviors observed in more complex neural networks. Our key technical contributions are:
\begin{enumerate}
    \item \textbf{Linear convergence with large step sizes:} We demonstrated that gradient descent converges at a linear rate to a global minimum, even with larger-than-expected step sizes—up to approximately $2/\textrm{sharpness}$. For even larger step sizes, convergence can still occur, but much more slowly. See Section \ref{section:speed}.
    
    \item \textbf{Location of convergence:} We characterized the solution reached by gradient descent, showing that it implicitly regularizes the parameter imbalance and sharpness, leading to a lower norm solution compared to gradient flow. Notably, as the step size increases, the implicit regularization effect strengthens, flattening the solution. See Section \ref{section:location}.
\end{enumerate}
The key implications of our results are that
\begin{enumerate}
    \item[i.] \textbf{GD always regularizes more than GF:}
    Gradient descent converges to a solution with lower imbalance than gradient flow, but the imbalance always remains non-zero. The solution is still suboptimal from this perspective.  See Section \ref{section:location}.

    \item[ii.] \textbf{GF is not always a good approximation of GD:}
    We prove that even in a very simple model, gradient flow dynamics are inherently different from gradient descent. In particular, our results can be used as a proof that the common use of GF as a theoretical tool for understanding GD is not always well founded.
    See Section \ref{section:location}.
    
    \item[iii.] \textbf{Trade-off Between Speed and Regularization:} Our analysis uncovered a trade-off between the convergence rate and the degree of implicit regularization. See Section \ref{section:speed}. Training at the edge of stability, while slower, induces additional regularization, which may be beneficial for generalization. See Section \ref{section:speed}.
\end{enumerate}
Our findings thus provide insight into different step sizes affect neural network training dynamics and its potential benefits for regularization in more complex models.


\paragraph{Future work:} In this work, we studied the model \eqref{problem:def} because its simplicity allows for a detailed analysis that leads to the useful conclusions detailed above. However, there are several possible extensions of these results that could lend additional insights. For example, it would be interesting to study the case of vector-valued inputs, deeper models, and non-linear models that use ReLU or other activation functions. In addition, we are interested to know how our results would be impacted by using stochastic gradient descent in rather than exact gradient descent.

\bibliographystyle{icml2025}
\bibliography{bibliography_ICLR25_GD}

\newpage
\appendix
\onecolumn

\section{On the Objective}\label{appendix:objective}

\begin{lemma}\label{lem:simplify-objective}
For any $\a, \b, \crl{(x_i,y_i)}_{i=1}^n$, 
\[
\bar L(\a,\b) \ = \ \frac{\sum_{i=1}^nx_i^2}{2n}(\a^\top\b - c)^2 + \textrm{Const}
\ = \ 
\left[\frac{1}{n}\sum_i x_i^2\right]  L(\a,\b) + \textrm{Const}.
\]
where $c = \frac{\sum_{i=1}^nx_iy_i}{\sum_{i=1}^nx_i^2}$ and $\textrm{Const} = \frac{1}{2n}\prn*{\sum_{i=1}^ny_i^2 - \frac{\prn*{\sum_{i=1}^nx_iy_i}^2}{\sum_{i=1}^nx_i^2}}$ does not depend on $\a,\b$.
\end{lemma}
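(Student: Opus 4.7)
The plan is to treat $m := \a^\top\b$ as a single scalar variable, since $\bar L(\a,\b)$ depends on $(\a,\b)$ only through this product. Then the statement reduces to a one-dimensional quadratic identity, which I would prove by completing the square.

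First I would expand the square inside the sum:
\[
\bar L(\a,\b) \ = \ \frac{1}{2n}\sum_{i=1}^n (m x_i - y_i)^2 \ = \ \frac{1}{2n}\bigl[\, m^2 A - 2 m B + C \,\bigr],
\]
where I introduce the shorthands $A := \sum_i x_i^2$, $B := \sum_i x_i y_i$, and $C := \sum_i y_i^2$. (The case $A = 0$ is degenerate and can be dispatched separately, since then $\bar L$ is constant in $\a,\b$.)

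Next I would factor out $A/(2n)$ and complete the square in $m$:
\[
\bar L(\a,\b) \ = \ \frac{A}{2n}\Bigl[\,m^2 - \tfrac{2B}{A}\, m + \tfrac{C}{A}\,\Bigr] \ = \ \frac{A}{2n}\Bigl(m - \tfrac{B}{A}\Bigr)^{\!2} \, + \, \frac{1}{2n}\Bigl(C - \tfrac{B^2}{A}\Bigr).
\]
Substituting $m = \a^\top\b$, $c = B/A$, and recognizing the residual term as the claimed $\textrm{Const}$ gives the first equality in the lemma. The second equality follows immediately since $L(\a,\b) = \tfrac12(\a^\top\b - \Phi)^2$ and the prefactor $A/(2n) = \tfrac{1}{n}\sum_i x_i^2 \cdot \tfrac12$ matches after identifying $\Phi$ with $c$.

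There is no genuine obstacle here; the whole argument is just completing the square in one variable after noting the dependence on $(\a,\b)$ collapses to the scalar $\a^\top\b$. The only minor care needed is handling the edge case $\sum_i x_i^2 = 0$, which I would mention briefly and dismiss since then all $x_i = 0$ and $\bar L$ is a constant independent of $\a,\b$, so the statement holds trivially (with $c$ undefined but irrelevant).
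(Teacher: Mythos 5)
Your proof is correct and follows essentially the same route as the paper's: both expand the squared error in the scalar $m=\a^\top\b$ and complete the square, with your $A,B,C$ being exactly the paper's $\nrm{\x}^2,\inner{\x}{\y},\nrm{\y}^2$. Your brief treatment of the degenerate case $\sum_i x_i^2=0$ is a small addition the paper omits, but it does not change the argument.
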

\begin{proof}
Let $\x$ denote the vector whose $i$th entry is $x_i$, and let $\y$ denote the vector whose $i$th entry is $y_i$. Then we can write
\begin{multline}
\bar L(\a,\b) 
= \frac{1}{2n}\nrm*{\a^\top \b \x - \y}^2
= \frac{1}{2n}\prn*{(\a^\top\b)^2\nrm{\x}^2 -2\a^\top\b\inner{\x}{\y} + \nrm{\y}^2} \\
= \frac{\nrm{\x}^2}{2n}\prn*{(\a^\top\b)^2 -2\a^\top\b\frac{\inner{\x}{\y}}{\nrm{\x}^2} + \frac{\nrm{\y}^2}{\nrm{\x}^2}} \\
= \frac{\nrm{\x}^2}{2n}\prn*{\prn*{\a^\top\b - \frac{\inner{\x}{\y}}{\nrm{\x}^2}}^2 + \frac{\nrm{\y}^2}{\nrm{\x}^2} - \frac{\inner{\x}{\y}^2}{\nrm{\x}^4}} 
\end{multline}
Rewriting this in terms of the $x_i$'s and $y_i$'s completes the proof.
\end{proof}

Note, thus, that all our proofs work on $\bar L$, we thus have to rescale $\residual, \scale, Q, \eta$ accordingly. Precisely,
\begin{equation}
\begin{split}
    \scale &\curvearrowleft \left[\frac{1}{n}\sum_i x_i^2\right] \scale, 
    \quad
    Q \curvearrowleft \left[\frac{1}{n}\sum_i x_i^2\right] Q, 
    \quad
    \residual \curvearrowleft \left[\frac{1}{n}\sum_i x_i^2\right] \residual, 
    \quad
    \text{and} \quad
    \eta \curvearrowleft \left[\frac{1}{n}\sum_i x_i^2\right] \eta.
\end{split}
\end{equation}
Analogously, note that if $\Phi<0$ nothing changes in the analysis of the dynamics. When $\a \neq -\b$ just change $\a$ to $-\a$ and apply the same analysis as before.

\section{From the Residuals to the Loss}
First note that if $\residual$ converges exponentially to zero, then loss $L$ converges exponentially to its minimum.
\begin{lemma}
\label{lemma:conv:loss}
    Assume $|\residual(k)|$ converges linearly fast with rate $(1 - \eta \mu) < 1$.
    Then $L$ converges linearly fast with rate $(1 - \eta \mu)^2$.
    In particular, let $\delta>0$, the loss $L$ is smaller than $\delta$ in a number of steps $t$ that satisfies
    \[
    t \quad \leq \quad \frac{\log L_0 - \log(\delta)}{\eta \mu}.
    \]
\end{lemma}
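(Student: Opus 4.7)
The plan is to exploit the definition $L = \tfrac12 \residual^2$ directly: since the loss is, up to a constant factor, the square of the residual, any linear rate on $|\residual|$ translates immediately into linear convergence of $L$ at the squared rate.

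First, I would expand the hypothesis ``$|\residual(t)|$ converges linearly with rate $(1-\eta\mu)$'' into the bound $|\residual(t)| \le (1-\eta\mu)^t\,|\residual(0)|$. Squaring both sides and multiplying by $\tfrac12$ gives
\[
L(t) \;=\; \tfrac12\,\residual(t)^2 \;\le\; (1-\eta\mu)^{2t}\,L(0),
\]
which establishes the first claim: $L$ decays geometrically with rate $(1-\eta\mu)^2$.

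Second, to convert this geometric decay into an explicit iteration count, I would invoke the elementary inequality $1-x \le e^{-x}$ for $x \in (0,1)$, which yields $(1-\eta\mu)^{2t} \le e^{-2t\eta\mu}$. Setting $e^{-2t\eta\mu}\,L_0 \le \delta$ and rearranging gives the tight sufficient condition $t \ge \tfrac{\log L_0 - \log \delta}{2\,\eta\mu}$. The bound stated in the lemma has no factor of $2$ in the denominator, so it is a looser but still valid sufficient condition: any $t$ exceeding $\tfrac{\log L_0 - \log \delta}{\eta\mu}$ certainly exceeds the tight threshold and hence forces $L(t) \le \delta$.

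There is essentially no obstacle here --- the lemma is a two-line consequence of the identity $L = \tfrac12 \residual^2$ together with $1-x \le e^{-x}$. The only subtlety worth flagging is that the stated constant appears to be off by a factor of two from the tight bound; I would either leave it as a harmless overestimate (which is all that invocations of this lemma inside Theorem~\ref{theo:speed} require) or tighten the denominator to $2\eta\mu$, depending on how closely constants need to be tracked downstream.
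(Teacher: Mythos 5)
Your proof is correct and takes essentially the same route as the paper's: it uses $L\propto\residual^2$ to square the linear rate on $|\residual|$, then applies $1-x\le e^{-x}$ (equivalently $-\log(1-x)\ge x$) to extract the iteration count. Your remark about the factor of two is also accurate --- the tight denominator is $2\eta\mu$, and the paper silently drops this factor, which is harmless since the lemma only claims a sufficient upper bound on $t$.
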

Indeed note that for how we defined $\residual$ we have that $L = \residual^2$, thus $L(k+1) = |\residual(k+1)| \leq (1 - \eta \mu)|\residual(k)|^2$.
Note that this lemma allows us to deal with the convergence of $\residual$ instead of the convergence of $L$ and infer the convergence of $L$. Indeed, if the residuals $\residual$ converge linearly with rate $(1 - \eta \mu) < 1$, then the time it takes to converge is such that $\sqrt{\delta} \geq (1-\eta \mu)^t \sqrt{L_0}$ which is
\begin{equation}
    t \ \leq \ \frac{\log L_0 - \log(\delta)}{-\log(1-\eta \mu)}
    \ \leq \ \frac{\log L_0 - \log(\delta)}{\eta \mu}.
\end{equation}
From now on we will deal with convergence of residuals only.

\section{Bounding the final sharpness}
\label{appendix:sharpness}

\subsection{Size of $\scale$ for Gradient Flow}
Note that we can characterize the norm $\scale_\infty$ found by gradient flow by noticing that 
\begin{lemma}
\label{lemma:lambda_GF}
    Along the gradient flow trajectory, the following quantity is conserved
    \[
    \scale^2 - 8 \residual (\residual + \Phi) + 4 \residual^2.
    \]
\end{lemma}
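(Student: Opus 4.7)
The plan is to verify conservation by direct differentiation. Under gradient flow on $L$, the dynamics are $\dot{\a} = -\nabla_\a L = -\residual \b$ and $\dot{\b} = -\residual \a$. From these, I would first compute the two basic scalar derivatives that everything else reduces to: $\dot{\residual} = \dot{\a}^\top \b + \a^\top \dot{\b} = -\residual(\|\a\|^2 + \|\b\|^2) = -\residual \scale$, and $\dot{\scale} = 2\a^\top\dot{\a} + 2\b^\top\dot{\b} = -4\residual \, \a^\top \b = -4\residual(\residual + \Phi)$. These are clean closed-form ODEs in the reparametrized variables.

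With these in hand, the computation reduces to routine chain-rule bookkeeping. I would differentiate each of the three summands of $F := \scale^2 - 8\residual(\residual+\Phi) + 4\residual^2$ in turn:
\begin{align*}
\tfrac{d}{dt}\scale^2 &= 2\scale \dot{\scale} = -8\scale \residual(\residual+\Phi),\\
\tfrac{d}{dt}\bigl[8\residual(\residual+\Phi)\bigr] &= 8(2\residual + \Phi)\dot{\residual} = -8\scale \residual(2\residual+\Phi),\\
\tfrac{d}{dt}\bigl[4\residual^2\bigr] &= 8\residual \dot{\residual} = -8\scale \residual^2.
\end{align*}
Summing with the correct signs, the coefficient of $8\scale\residual$ becomes $-(\residual+\Phi) + (2\residual+\Phi) - \residual = 0$, so $\dot F \equiv 0$ and $F$ is conserved along any GF trajectory.

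There is essentially no obstacle here: the only content is that the two induced ODEs for $\residual$ and $\scale$ close up in precisely the right way for the quadratic combination $F$ to have a vanishing derivative. The only thing worth highlighting in the writeup is the identity $\a^\top \b = \residual + \Phi$, which is what lets us express $\dot{\scale}$ purely in terms of $\residual$ and $\Phi$ rather than the raw inner product, and which is the reason the cross term appears in the form $8\residual(\residual+\Phi)$ in $F$. After establishing this lemma, I would note that it immediately implies the quantitative bound $\scale(t)^2 \le \scale(0)^2 + 8\residual(0)(\residual(0)+\Phi) - 4\residual(0)^2 + 8\residual(t)(\residual(t)+\Phi) - 4\residual(t)^2 \le \scale(0)^2 + 4\Phi^2$ used in Lemma~\ref{lemma:bound_norm} for the GF half of the statement, by bounding the residual-dependent terms at $t$ using $|\residual(t)| \le |\residual(0)|$ (which is guaranteed since $|\residual|$ decays monotonically under GF).
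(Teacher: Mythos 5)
Your proof is correct and follows essentially the same route as the paper: derive the closed ODEs $\dot{\residual}=-\scale\,\residual$ and $\dot{\scale}=-4\residual(\residual+\Phi)$ and verify by the chain rule that the derivative of the quadratic combination vanishes. (Only the closing aside about Lemma~\ref{lemma:bound_norm} has a sign slip---conservation gives $\scale(t)^2=\scale(0)^2-8\residual(0)(\residual(0)+\Phi)+4\residual(0)^2+8\residual(t)(\residual(t)+\Phi)-4\residual(t)^2$, with the opposite signs on the $\residual(0)$ terms from what you wrote---but that remark is extraneous to the lemma being proved.)
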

This lemma proves the first part of Lemma \ref{lemma:conserved_norm}.
\begin{proof}
The gradient flow dynamics are described by
\begin{equation}
\begin{bmatrix}
\dot{\a}\\\dot{\b}
\end{bmatrix}
= -\nabla L(\a,\b) = - \residual\begin{bmatrix}
\b\\\a
\end{bmatrix}
\end{equation}
First, we compute 
\begin{align}
\dot{\scale} &= \frac{d}{dt}\brk*{\nrm{\a}^2 + \nrm{\b}^2} \\
&= 2\inner{\a}{\dot{\a}} + 2\inner{\b}{\dot{\b}} \\
&= -2\residual\inner{\a}{\b} - 2\residual\inner{\b}{\a} \\
&= -4\residual(\residual + \Phi)
\end{align}
and
\begin{align}
\dot{\residual} &= \frac{d}{dt}\brk*{\inner{\a}{\b} - \Phi} \\
&= \inner{\a}{\dot{\b}} + \inner{\dot{\a}}{\b} \\
&= -\residual\nrm{\a}^2 - \residual\nrm{\b}^2 \\
&= -\scale\residual
\end{align}
Finally, straightforward calculation confirms:
\begin{align}
\frac{d}{dt}\brk*{\scale^2 - 8 \residual (\residual + \Phi) + 4 \residual^2} 
&= 2\scale\dot{\scale} - 8\residual\dot{\residual} - 8 \dot{\residual}(\residual + \Phi) + 8\residual\dot\residual \\
&= 2\scale\dot{\scale} - 8 \dot{\residual}(\residual + \Phi) \\
&= 2\scale\prn*{-4\residual(\residual + \Phi)} - 8 \prn*{-\scale\residual}(\residual + \Phi) \\
&= 0
\end{align}
which completes the proof.
\end{proof}

\begin{lemma}
\label{lemma:lambda_GF_bound}
    $\scale(t)$ along the whole GF trajectory satisfies
    \[
    \scale(\infty)
    \quad \leq \quad
    \sqrt{ \scale(0)^2 + 4\Phi^2} 
    .
    \]
\end{lemma}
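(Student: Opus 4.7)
The plan is to leverage Lemma~\ref{lemma:lambda_GF}, which says that $\alpha := \scale^2 - 8\residual(\residual + \Phi) + 4\residual^2$ is conserved along gradient flow. First I would rewrite $\alpha$ in a more transparent form by completing the square using $\a^\top \b = \residual + \Phi$:
\begin{equation*}
\alpha \;=\; \scale^2 - 4\residual^2 - 8\residual\Phi \;=\; \scale^2 - 4(\residual + \Phi)^2 + 4\Phi^2 \;=\; \scale^2 - 4(\a^\top \b)^2 + 4\Phi^2.
\end{equation*}
Conservation of $\alpha$ then immediately gives, for every time $t \geq 0$,
\begin{equation*}
\scale(t)^2 \;=\; \scale(0)^2 \,+\, 4\bigl[(\a(t)^\top \b(t))^2 \,-\, (\a(0)^\top \b(0))^2\bigr].
\end{equation*}
Thus it suffices to prove the inequality $(\a(t)^\top \b(t))^2 \leq (\a(0)^\top \b(0))^2 + \Phi^2$ along the trajectory.

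To establish this inequality, I would invoke the ODE $\dot{\residual} = -\scale\,\residual$ derived inside the proof of Lemma~\ref{lemma:lambda_GF}. Since $\scale(t) \geq 0$ always, this is a linear ODE with non-positive coefficient, so $\residual(t)$ never changes sign and $|\residual(t)|$ is non-increasing. Consequently $\a(t)^\top \b(t) = \residual(t) + \Phi$ lies in the closed interval bounded by $\Phi$ (the limiting value as $\residual \to 0$) and $\a(0)^\top \b(0) = \residual(0) + \Phi$. A short case split on the sign of $\residual(0)$ and on whether $\a(0)^\top \b(0) \geq 0$ then yields
\begin{equation*}
(\a(t)^\top \b(t))^2 \;\leq\; \max\bigl\{\Phi^2,\;(\a(0)^\top \b(0))^2\bigr\} \;\leq\; (\a(0)^\top \b(0))^2 + \Phi^2,
\end{equation*}
which is the desired bound.

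Putting the two steps together, $\scale(t)^2 \leq \scale(0)^2 + 4\Phi^2$ uniformly in $t$, hence $\scale(\infty) \leq \sqrt{\scale(0)^2 + 4\Phi^2}$. The main obstacle I anticipate is making the case analysis airtight when $\a(0)^\top \b(0) < 0$ (region C in Figure~\ref{Fig:regions}): here the interval $[\a(0)^\top \b(0), \Phi]$ straddles zero, so one must explicitly verify that squaring preserves the bound. This is where the monotonicity of $|\residual|$ coming from the linear ODE is essential, since it prevents $\a(t)^\top \b(t)$ from exiting that interval in either direction.
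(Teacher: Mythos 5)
Your proposal is correct and rests on the same key ingredient as the paper's proof: the conservation of $\alpha = \scale^2 - 8\residual(\residual+\Phi)+4\residual^2$ from Lemma~\ref{lemma:lambda_GF}. The only difference is in execution: the paper bounds $\scale(\infty)^2=\alpha(0)$ by maximizing $-4\residual(\residual+2\Phi)$ over $\residual$ (the maximum $4\Phi^2$ is attained at $\residual=-\Phi$) and then argues separately that $\dot{\scale}>0$ only for $\residual\in[-\Phi,0)$, so the supremum of $\scale$ along the trajectory occurs at $t=0$ or $t=\infty$; you instead complete the square to write $\alpha=\scale^2-4(\a^\top\b)^2+4\Phi^2$ and use the sign-preservation and monotone decay of $|\residual|$ under $\dot{\residual}=-\scale\,\residual$ to obtain the uniform-in-$t$ bound in one step, which is a marginally cleaner route to the same conclusion.
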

\begin{proof}
    Note that
    \[
    \scale(\infty) = \scale(0)^2 - 8 \residual(0) (\residual(0) + \Phi) + 4 \residual(0)^2.
    \]
    Note that the maximum over $\scale = \scale(0)$ of $- 8 \residual(0) (\residual(0) + \Phi) + 4 \residual(0)^2$ is
    \begin{equation}
        4 \max_{\scale = \scale(0)} -\residual(\residual + 2\Phi)
    \end{equation}
    Is at $\residual = -\Phi$. This implies that for all the points with fixed $\scale$ the one with highest $\scale^2 - 8 \residual (\residual + \Phi) + 4 \residual^2$ is the one with $\residual = -\Phi$. Whatever was the initialization with a certain fixed scale, the solution found will have lambda smaller than $\scale^2 - 8 \residual (\residual + \Phi) + 4 \residual^2$, thus of $\sqrt{\scale(0)^2 + 4 \Phi^2}$.
    Next note that $\scale$ has positive derivative only when $\residual \in [-\Phi, 0)$. This implies that the sup for $\scale$ along the trajectory is either initialization or the solution.
\end{proof}

\subsection{Size of $\scale$ for Gradient Descent}

Surprisingly, we show here that if switch to gradient descent the quantity $\scale^2 - 8 \residual (\residual+\Phi) + 4\residual^2$ actually decreases to the second order in $\eta$.

\begin{lemma}
\label{lemma:lambda_GD}
    One step of gradient descent trajectory with step size $\eta > 0$, induces the following change in the quantity $\scale^2 - 8 \residual (\residual + \Phi) + 4 \residual^2$:
    \[
    \scale_1^2 - 8 \residual_1 (\residual_1 + \Phi) + 4 \residual_1^2
    \quad \curvearrowleft \quad 
    \scale^2 - 8 \residual (\residual + \Phi) + 4 \residual^2
    \ - \ 2 \eta^2 \residual^2 Q^2 (1 - \eta^2 \residual^2).
    \]
\end{lemma}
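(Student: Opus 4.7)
The plan is a direct one-step algebraic computation that exploits a hidden $2\times 2$ linear structure in the GD updates of the scalars $\scale$ and $\a^\top\b$, with the GF conservation (Lemma~\ref{lemma:lambda_GF}) serving as a sanity check on the leading-order terms.

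First, I would simplify the definition of $\alpha$. Using $\residual = \a^\top\b - \Phi$ and expanding,
\[
\alpha \;=\; \scale^2 - 8\residual(\residual+\Phi) + 4\residual^2 \;=\; \scale^2 - 4(\a^\top\b)^2 + 4\Phi^2.
\]
Since $\Phi$ does not change under gradient descent, it suffices to track $S := \scale^2 - 4(\a^\top\b)^2$. A short computation starting from \eqref{eq:parameter-gd-dynamics} gives
\begin{align*}
\scale(t+1) &\;=\; (1+\eta^2\residual^2)\,\scale \;-\; 4\eta\residual\,(\a^\top\b),\\
\a(t+1)^\top\b(t+1) &\;=\; (1+\eta^2\residual^2)\,(\a^\top\b) \;-\; \eta\residual\,\scale;
\end{align*}
equivalently, the vector $(\scale,\,2\a^\top\b)^\top$ is multiplied by the symmetric matrix $M := \begin{pmatrix}1+\eta^2\residual^2 & -2\eta\residual \\ -2\eta\residual & 1+\eta^2\residual^2\end{pmatrix}$.

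Next, the eigenvectors of $M$ are $(1,\pm 1)^\top$ with eigenvalues $(1\mp\eta\residual)^2$. Hence $\scale + 2\a^\top\b$ and $\scale - 2\a^\top\b$ scale by $(1-\eta\residual)^2$ and $(1+\eta\residual)^2$ respectively, and their product gives the clean identity
\[
S(t+1) \;=\; (1-\eta\residual)^2(1+\eta\residual)^2\,S(t) \;=\; (1-\eta^2\residual^2)^2\,S(t).
\]
Subtracting and using $1 - (1-x)^2 = x(2-x)$ with $x = \eta^2\residual^2$ then yields
\[
\alpha(t) - \alpha(t+1) \;=\; \eta^2\residual^2\,(2-\eta^2\residual^2)\,S(t).
\]
In the scalar case $d=1$, the elementary identity $(a^2+b^2)^2 - 4(ab)^2 = (a^2-b^2)^2 = Q^2$ reduces $S$ to $Q^2$; in higher dimensions Cauchy--Schwarz gives $S \geq (\sum_i Q_i)^2 \geq 0$, so the decrement has the claimed sign whenever $\eta|\residual| < \sqrt{2}$. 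The computation is essentially routine; the only obstacle is spotting the symmetric $2\times 2$ structure of the update on $(\scale,\,2\a^\top\b)$, after which the eigenvalues $(1\pm\eta\residual)^2$ make the multiplicative identity for $S$ transparent and the lemma reduces to the factorization $1-(1-x)^2 = x(2-x)$.
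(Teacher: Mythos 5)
Your computation is correct and takes a genuinely different, cleaner route than the paper. The paper proves this lemma by brute force: it expands $\scale_1^2$, $4\residual_1^2$, and $-8\residual_1(\residual_1+\Phi)$ separately and collects monomials degree by degree in $\eta$. Your observation that $\a_1\pm\b_1=(1\mp\eta\residual)(\a\pm\b)$, so that $\scale\pm 2\,\a^\top\b=\|\a\pm\b\|^2$ are eigendirections of the update with eigenvalues $(1\mp\eta\residual)^2$, packages the whole calculation into the single exact identity $S_1=(1-\eta^2\residual^2)^2S$ for $S:=\scale^2-4(\a^\top\b)^2=\alpha-4\Phi^2$. This is shorter and structurally more informative: it is the discrete analogue of the GF conservation law of Lemma~\ref{lemma:lambda_GF} and makes the monotone decrease of $\alpha$ transparent.

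Two points of comparison with the stated lemma deserve attention. First, your decrement $\eta^2\residual^2(2-\eta^2\residual^2)\,S$ does not match the displayed $2\eta^2\residual^2Q^2(1-\eta^2\residual^2)$: the quartic coefficient differs by a factor of two. Your version is the correct one. It agrees with the paper's own degree-by-degree bookkeeping, in which the degree-$4$ monomials sum to $\eta^4\residual^4\bigl(\scale^2-4(\residual+\Phi)^2\bigr)$ rather than the $2\eta^4\residual^4Q^2$ asserted in the last line of the paper's proof, and it checks numerically (e.g.\ $d=1$, $a=2$, $b=0$, $\Phi=1$, $\eta=\tfrac12$ gives $\alpha-\alpha_1=7$, matching $\eta^2\residual^2(2-\eta^2\residual^2)S$ but not the lemma's value $6$). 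So your derivation in effect corrects the statement rather than failing to reach it; in particular the decrement is positive for the larger range $\eta|\residual|<\sqrt2$. Second, the identification $S=Q^2$ holds only for $d=1$ or for parallel $\a,\b$ with all $Q_i$ of one sign; in general $S=(\|\a\|^2-\|\b\|^2)^2+4\bigl(\|\a\|^2\|\b\|^2-(\a^\top\b)^2\bigr)$, which you rightly only bound from below. The paper makes the same identification silently via Lemma~\ref{lemma:scale_bound}. The one thing your write-up leaves implicit is that, because of the coefficient mismatch, you can honestly conclude only the sign and your corrected formula, not the formula as printed --- which is the right call.
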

This lemma proves the second part of Lemma \ref{lemma:conserved_norm}.
\begin{proof}
Note that
\begin{equation}
\begin{split}
\scale_1^2 \ &= 
\big( (\a - \eta \residual \b)^2 + (\b - \eta \residual \a)^2 \big)^2
\\ &= \ \big( \scale (1 + \eta^2 \residual^2) - 4 \eta \residual (\residual + \Phi) \big)^2
\\ &= \ \scale^2 - 8 \eta \residual \scale (\residual + \Phi)
+ 2\eta^2 \residual^2 \scale^2 + \eta^4 \residual^4 \scale^2
+ 16 \eta^2 \residual^2 (\residual + \Phi)^2 
- 8 \eta^3 \residual^3 \scale (\residual + \Phi).
\end{split}
\end{equation}
Analogously
\begin{equation}
\begin{split}
4\residual_1^2 \ &= 
4\big( \residual (1 - \eta \scale) + \eta^2 \residual^2 (\residual + \Phi) \big)^2
\\ &= \ 4 \residual^2 - 8 \eta \residual^2 \scale
+ 4\eta^2 \residual^2 \scale^2
+ 8 \eta^2 \residual^3 (\residual + \Phi) 
- 8 \eta^3 \residual^3 \scale (\residual + \Phi)
+ 4 \eta^4 \residual^4 (\residual + \Phi)^2,
\end{split}
\end{equation}
and 
\begin{equation}
\begin{split}
-8\residual_1( \residual_1 + \Phi) \ &= 
-8\big( \residual (1 - \eta \scale) + \eta^2 \residual^2 (\residual + \Phi) \big( \Phi + \residual (1 - \eta \scale) + \eta^2 \residual^2 (\residual + \Phi) \big)
\\ &= \ 
-8\residual( \residual + \Phi)
+ 8 \eta\residual\scale (2 \residual + \Phi)
- 8 \eta^2 \residual^2 \scale^2
\\ & \quad - \ 
8 \eta^2 \residual^2 (\residual + \Phi) 
\big( (2\residual + \Phi) - 2\eta \scale \residual + \eta^2 \residual^2 (\residual + \Phi) \big).
\end{split}
\end{equation}
This (Lemma \ref{lemma:lambda_GF}) implies that the monomials of degree 1 in $\eta$ zeroes out, the monomial of degree 3 zeroes out too:
\begin{equation}
\begin{split}
\eta^3 \residual^3 \cdot 
\big( -8 \scale (\residual + \Phi) -8 \scale (\residual + \Phi)
+ 16 \scale (\residual + \Phi) \big) 
\quad = \quad 0.
\end{split}
\end{equation}
The monomials of degree 2 in $\eta$ are
\begin{equation}
\begin{split}
\eta^2 \residual^2 \cdot 
\big( \scale^2
(2 + 4 - 8) + (\residual + \Phi)^2 (16 - 8) + \residual ( \residual + \Phi) (8-8)  \big) 
\ = \ -2\eta^2 \residual^2 ( \scale^2 - 4(\residual + \Phi)^2).
\end{split}
\end{equation}
This is exactly equal to 
\[
-2\eta^2 \residual^2 Q^2.
\]
Analogously the monomial of degree $4$ in $\eta$ is
\[
2\eta^4 \residual^4 Q^2
\]
which completes the proof.
\end{proof}

\begin{lemma}
\label{lemma:lambda_GD_bound}
    Let $\eta < 1/|\residual(0)|$ and assume $|\residual(t)|$ is monotonically decreasing, along the GD trajectory
    \[
    \scale(t)
    \quad \leq \quad
    \sqrt{ \scale(0)^2 + 4\Phi^2} 
    .
    \]
\end{lemma}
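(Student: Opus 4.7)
The plan is to mimic the proof of the gradient flow analogue (Lemma \ref{lemma:lambda_GF_bound}) but replace the conservation of $\alpha := \scale^2 - 8\residual(\residual+\Phi) + 4\residual^2$ with its monotone decrease under GD, as given by Lemma \ref{lemma:lambda_GD}. First I would use the monotone-decrease hypothesis $|\residual(t)| \leq |\residual(0)|$ together with $\eta < 1/|\residual(0)|$ to conclude that $\eta^2\residual(t)^2 \leq 1$ at every step, so that the per-step decrement $-2\eta^2\residual(t)^2 Q(t)^2(1-\eta^2\residual(t)^2)$ in Lemma \ref{lemma:lambda_GD} is non-positive. Iterating then gives $\alpha(t) \leq \alpha(0)$ for every $t \geq 0$.

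Next I would use the completing-the-square identity $\alpha = \scale^2 - 4(\residual+\Phi)^2 + 4\Phi^2$, rearranged as $\scale(t)^2 = \alpha(t) + 4(\residual(t)+\Phi)^2 - 4\Phi^2$. Maximizing the scalar function $\residual(0) \mapsto \alpha(0)$ at fixed $\scale(0)$ is a one-variable calculus problem whose maximum is attained at $\residual(0) = -\Phi$ and equals $\scale(0)^2 + 4\Phi^2$; hence $\alpha(0) \leq \scale(0)^2 + 4\Phi^2$. Combining with the monotone decrease of $\alpha$ yields
\[
\scale(t)^2 \;\leq\; \scale(0)^2 + 4\Phi^2 + 4\bigl((\residual(t)+\Phi)^2 - \Phi^2\bigr),
\]
so the claim reduces to showing that the residual-dependent correction $(\residual(t)+\Phi)^2 - \Phi^2$ is non-positive along the trajectory, i.e., $\residual(t) \in [-2\Phi, 0]$.

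The main obstacle is justifying this residual-range property. Under GF it is essentially immediate because $\dot\residual = -\scale\residual$ prevents any sign change of $\residual$ and the proof of Lemma \ref{lemma:lambda_GF_bound} further confines $\scale$-growth to the sub-interval $\residual \in [-\Phi, 0] \subset [-2\Phi, 0]$. Under GD, $\residual$ can flip sign in a single step, so I would close the argument by case analysis on the sign and magnitude of $\residual(0)$: if $\residual(0) \geq 0$, monotone decrease of $|\residual|$ gives $\residual(t) \in [-\residual(0), \residual(0)]$ and Cauchy-Schwarz $\scale(0) \geq 2(\residual(0)+\Phi)$ together with the $\alpha$-bound yields the claim; if $\residual(0) \in [-2\Phi, 0]$, the monotonicity $|\residual(t)| \leq |\residual(0)|$ keeps $\residual(t) \in [-2\Phi, 2\Phi]$ and a more careful Cauchy-Schwarz estimate closes the bound. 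The most technical case is $\residual(0) < -2\Phi$, which mirrors the partition into regions A, B, C used elsewhere in the paper and requires step-by-step tracking of the joint evolution of $\residual(t)$ and $\scale(t)$ to rule out excursions of $(\residual(t)+\Phi)^2$ beyond $\Phi^2$ while $\alpha$ is still close to its initial value.
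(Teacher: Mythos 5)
Your proposal is essentially the paper's proof, which is given in one line: replace the conservation of $\alpha = \scale^2 - 8\residual(\residual+\Phi)+4\residual^2$ along GF (Lemma \ref{lemma:lambda_GF}) by its monotone decrease along GD (Lemma \ref{lemma:lambda_GD}, using $\eta\abs{\residual(t)}\le\eta\abs{\residual(0)}<1$), and then repeat the maximization-at-$\residual=-\Phi$ argument of Lemma \ref{lemma:lambda_GF_bound}. The residual-range issue you flag as the main obstacle is precisely the step the paper leaves implicit (the discrete analogue of ``$\scale$ increases only when $\residual\in[-\Phi,0)$, so the sup of $\scale$ is attained at initialization or at the limit where $\residual=0$''), so you have identified the same reduction and the same remaining technical work.
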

The proof follows as the one of Lemma \ref{lemma:lambda_GF_bound} by exchanging the equalities given by Lemma \ref{lemma:lambda_GF} with the inequalities given by Lemma \ref{lemma:lambda_GD}.

\begin{definition}[Maximal Sharpness $\bar \scale$]
    We denote by $\bar \scale$ and we call maximal sharpness the value
    \[
    \bar \scale 
    \quad := \quad
    \sqrt{ (\mednorm{\a(0)}^2 + \mednorm{\b(0)}^2)^2 + 4 \Phi^2 }.
    \]
\end{definition}

\section{PL Condition Along the Trajectories}

\subsection{Continuous Dynamics}
\label{appendix:GF}
\begin{proposition}
\label{prop:GF}
    The loss $L(\a,\b)$ equipped with gradient flow converges exponentially fast no matter the initialization. If $\a = -\b$ it converges to the saddle $\a=\b=0$. Otherwise, it converges to a global minimum.
\end{proposition}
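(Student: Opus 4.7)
The plan is to exploit the identity $\dot{\residual} = -\scale(t)\,\residual$ derived inside the proof of Lemma~\ref{lemma:lambda_GF}, which integrates to $\residual(t) = \residual(0)\exp\!\bigl(-\!\int_0^t \scale(s)\,ds\bigr)$. Any uniform positive lower bound $\scale(t) \geq \mu$ will then give exponential decay of the loss $L = \tfrac{1}{2}\residual^2$ at rate $2\mu$, and combined with the upper bound from Lemma~\ref{lemma:lambda_GF_bound} the velocity $\|\dot{\a}\| = |\residual|\|\b\| \lesssim |\residual(t)|$ is integrable in time, so the trajectory converges to a single limit point. The main task is thus to produce such a lower bound, and I would split on whether $(\a(0),\b(0))$ lies on one of the two invariant submanifolds $\{\a = \b\}$ or $\{\a = -\b\}$.

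In the generic case $\a(0) \neq \pm \b(0)$, the first step is to rearrange the GF-conserved quantity $\alpha$ from Lemma~\ref{lemma:conserved_norm} into the factored form
\[
\alpha \;=\; (\|\a\|^2+\|\b\|^2)^2 - 4(\a^\top\b)^2 + 4\Phi^2 \;=\; \|\a-\b\|^2\,\|\a+\b\|^2 + 4\Phi^2.
\]
Completing the square then gives $\scale(t)^2 = \alpha + 4(\residual+\Phi)^2 - 4\Phi^2 \geq \alpha(0) - 4\Phi^2 = \|\a(0)-\b(0)\|^2\,\|\a(0)+\b(0)\|^2$, which is strictly positive precisely in the generic case. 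This yields the explicit rate $\mu = \|\a(0)-\b(0)\|\cdot\|\a(0)+\b(0)\|$, and since $\residual(t)\to 0$ pins the limit point to $\{\a^\top\b = \Phi\}$ the limit is a global minimum.

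For the two degenerate manifolds I would reduce the flow to a one-dimensional ODE. Using $\dot{(\a-\b)} = \residual\,(\a-\b)$ and $\dot{(\a+\b)} = -\residual\,(\a+\b)$, both $\{\a=\b\}$ and $\{\a=-\b\}$ are invariant under the flow. Setting $y := \|\a\|^2$, on $\a=\b$ I get $\dot{y} = -2(y-\Phi)y$, whose monotonicity gives $y(t) \geq \min(y(0),\Phi) > 0$ whenever $y(0)>0$, so $\scale(t) = 2y(t)$ is bounded below and the previous argument runs, yielding exponential convergence to the global minimum $y=\Phi$. On $\a=-\b$ the reduction gives $\dot{y} = -2(y+\Phi)y \leq -2\Phi y$, so $y(t) \leq y(0)e^{-2\Phi t}$: the parameters decay exponentially to the origin (the saddle), and the loss gap $L - \tfrac{\Phi^2}{2} = \tfrac{y}{2}(y+2\Phi)$ decays at the same rate.

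The hardest step will be spotting and checking the factored identity $\alpha = \|\a-\b\|^2\,\|\a+\b\|^2 + 4\Phi^2$, since without it the generic/degenerate dichotomy and the explicit rate do not fall out naturally; once that is in hand everything else is essentially one-dimensional ODE bookkeeping. A secondary wrinkle worth flagging is the $\Phi=0$ corner of $\a(0) = -\b(0)$, where the reduced ODE becomes $\dot{y} = -2y^2$ with only polynomial decay; but in that setting the ``saddle'' at the origin coincides with a global minimum, so the exponential claim is vacuous and no separate argument is needed.
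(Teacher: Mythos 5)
Your proof is correct, but it takes a genuinely different route from the paper's. The paper lower-bounds $\scale(t)$ via the conservation of each imbalance $Q_i=\a_i^2-\b_i^2$ under gradient flow, using $\scale\geq Q=\sum_i|Q_i|$ to get the rate $\mu\geq Q(0)$; it must then handle the entire degenerate set $\{Q=0\}$ (i.e.\ $\a_i=\pm\b_i$ componentwise) by a separate sub-network/invariant-manifold case analysis. You instead use the \emph{other} conserved quantity, $\alpha=\scale^2-8\residual(\residual+\Phi)+4\residual^2$ from Lemma~\ref{lemma:conserved_norm}, together with the factorization
\[
\alpha-4\Phi^2=\scale^2-4(\a^\top\b)^2=\mednorm{\a-\b}^2\,\mednorm{\a+\b}^2,
\]
which I have checked and is correct; since $\scale^2\geq\scale^2-4(\a^\top\b)^2$ this gives $\scale(t)\geq\mednorm{\a(0)-\b(0)}\,\mednorm{\a(0)+\b(0)}$. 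Your bound is tighter (Cauchy--Schwarz gives $Q\leq\mednorm{\a-\b}\,\mednorm{\a+\b}$) and your exceptional set is strictly smaller: only the two invariant manifolds $\a=\pm\b$ need separate treatment, whereas points like $\a=(1,1),\ \b=(1,-1)$ have $Q=0$ but are covered by your generic argument. The one-dimensional reductions $\dot y=-2(y\mp\Phi)y$ on those manifolds are correct and match the paper's conclusions. One caveat you flag but slightly mischaracterize: for $\Phi=0$ and $\a(0)=-\b(0)\neq 0$ the decay is genuinely polynomial ($L\sim t^{-2}$), so the ``exponential'' claim of the proposition fails there rather than being vacuous; the paper's own proof shares this gap, since its bound $\mu\geq 2\Phi^2>0$ on that manifold silently assumes $\Phi>0$.
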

In the case of gradient flow the pairs $(\a,\b)$ along the trajectory satisfy a PL condition with $\mu(\a,\b) = \mednorm{\a}^2 + \mednorm{\b}^2$, indeed note that $L(\a,\b)$ satisfies
\[
\left( \a^\top \b - \Phi \right)^2 (\mednorm{\a}^2 + \mednorm{\b}^2)
\ = \
\mednorm{\nabla L(\a,\b)}^2 \ = \ 
\mu(t) \cdot L(\a,\b).
\]
Note that for all $i$ the quantity $Q_i=\a_i^2 - \b_i^2$ is conserved along the trajectory, indeed 
\[
\frac{d}{dt} \left( \a_i(t)^2 - \b_i(t)^2 \right)
\quad = \quad
2\residual (\a_i\b_i -\a_i\b_i)
\quad = \quad
0.
\]
Thus we have that $Q(0) \neq 0$ is a lower bound to $\mu$ along the whole trajectory, we thus proved that
\begin{lemma}
    Let $\a(0),\b(0) $ such that $Q(0)\neq0$. The gradient flow starting from $\a,\b$ converges exponentially fast with rate at least $Q$ to the point $\a(\infty), \b(\infty)$ which satisfies that (i) $\a(\infty)^\top \b(\infty) = \Phi$ and (ii) for all $i$ that $Q_i(0)= Q_i(\infty)$ and $sign\big(\a_i(\infty) - \b_i(\infty)\big) = sign\big(\a_i(0) - \b_i(0)\big)$.
\end{lemma}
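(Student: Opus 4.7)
The plan is to invoke the PL-type identity $\nrm{\nabla L}^2 = \scale \cdot \residual^2$ already recorded in the excerpt, combined with the pointwise conservation of each imbalance $Q_i$ under gradient flow, so as to obtain a uniform positive PL constant along the trajectory and the structural properties of the limit point at essentially no extra cost.

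Concretely, I would proceed in four steps. First, I would establish the lower bound $\scale(t) \geq Q(0)$ for all $t \geq 0$: the entrywise inequality $\a_i^2 + \b_i^2 \geq |\a_i^2 - \b_i^2|$ gives $\scale = \sum_i (\a_i^2 + \b_i^2) \geq \sum_i |\a_i^2 - \b_i^2| = Q$, and combined with the conservation computation $\dot Q_i = 2\a_i\dot\a_i - 2\b_i\dot\b_i = 2\residual(\a_i\b_i - \b_i\a_i) = 0$ (already noted in the text), we get $\scale(t) \geq Q(t) = Q(0) > 0$ throughout the trajectory. Second, I would use the scalar ODE $\dot\residual = -\scale\,\residual$ derived inside the proof of Lemma \ref{lemma:lambda_GF} to conclude that
\[
|\residual(t)| \;\leq\; |\residual(0)|\exp\!\Bigl(-\int_0^t\scale(s)\,ds\Bigr) \;\leq\; |\residual(0)|\exp\bigl(-Q(0)\,t\bigr),
\]
which is the advertised exponential rate $Q$, and also yields exponential decay of $L = \tfrac12\residual^2$. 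Third, to upgrade exponential decay of $\residual$ into honest convergence of the pair $(\a(t),\b(t))$ to a limit point, I would invoke Lemma \ref{lemma:lambda_GF_bound} to bound $\scale(t) \leq \bar\scale$ uniformly, so that $\nrm{\dot\a(t)} = |\residual(t)|\,\nrm{\b(t)} \leq |\residual(0)|\sqrt{\bar\scale}\,e^{-Q(0)t}$ is integrable on $[0,\infty)$, and analogously for $\dot\b$. This yields well-defined limits $\a(\infty),\b(\infty)$, and continuity of $\a^\top\b$ together with $\residual(t)\to 0$ establishes claim (i).

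Finally, for claim (ii), the identity $Q_i(\infty) = Q_i(0)$ is immediate from the pointwise conservation used in Step 1. For the sign statement, I would integrate the scalar linear ODE
\[
\tfrac{d}{dt}\bigl(\a_i - \b_i\bigr) \;=\; -\residual\,(\b_i - \a_i) \;=\; \residual(t)\,\bigl(\a_i - \b_i\bigr),
\]
to obtain $\a_i(t) - \b_i(t) = \bigl(\a_i(0) - \b_i(0)\bigr)\exp\!\bigl(\int_0^t\residual(s)\,ds\bigr)$. The improper integral converges as $t\to\infty$ because $|\residual|$ decays exponentially, and the exponential factor is strictly positive at every (possibly infinite) time, so the sign of $\a_i - \b_i$ is preserved in the limit. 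The only step that needs any care beyond bookkeeping is the integrability argument promoting $\residual\to 0$ to convergence of the parameters themselves, and this becomes essentially free once Lemma \ref{lemma:lambda_GF_bound} is in hand; everything else is a direct consequence of the conservation law for $Q_i$ and the scalar ODE for $\residual$.
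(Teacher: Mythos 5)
Your proof is correct and follows essentially the same route as the paper: the PL-type identity $\nrm{\nabla L}^2 = \scale\,\residual^2$ together with conservation of each $Q_i$ and the bound $\scale \geq Q = Q(0)$ gives the exponential rate. In fact your write-up is more complete than the paper's (which stops at the PL constant), since you also supply the integrability argument that upgrades decay of $\residual$ to convergence of $(\a,\b)$ itself and the explicit ODE $\tfrac{d}{dt}(\a_i-\b_i)=\residual\,(\a_i-\b_i)$ for the sign claim.
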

This lemma and the observation of what happens in the case of $Q=0$ in Section \ref{appendix:GD}, prove Proposition \ref{prop:GF}.

\subsection{Initialization such that $Q(0)=0$}
\label{appendix:GD}
Note that for a fixed initialization where $Q\neq 0$, if $\eta$ is such that there exists a step $k$ along the trajectory where $\eta \cdot (\a^\top \b - \Phi) = 1$ exactly, convergence happen to $\a=\b=0$ instead of the global minimum. Indeed, in this case, on the next step we have $\a(k+1) = -\b(k+1) = \a(k) - \b(k)$.
This implies that when $Q \neq 0$, for almost every $\eta$ in the allowed range we have $Q \neq 0$ along the whole trajectory, and as we prove, linear convergence to a global minimum.

We characterize below what happens in the case in which $Q = 0$ at some point along the trajectory.

For both GD and GF if at a certain point during the training (or at initialization) $\a$ and $\b$ are such that $Q(\a,\b) = 0$, then we are on the one dimensional manifold in which for every neuron $i$ we have $\a_i = \pm \b_i$.
\begin{itemize}[leftmargin=1em,itemsep=0.2em,parsep=0.1em]
        \item If $\a = -\b$ then the problem becomes $L = (\mednorm{\a}^2 + \Phi)^2$ and it converges to the minimum $\a = \b = 0$ of the modified loss $\tilde L = \mednorm{a}^2$. The gradient is such that 
        \[
        \mednorm{\nabla L(\a,\b)}^2 \ = \ 
        \left( \mednorm{\a}^2 + \Phi \right)^2 \cdot 2\mednorm{\a}^2 
        \ = \
        \mu(t) \cdot \tilde L(\a,\b).
        \]
        with $\mu(\a,\a) = 2(\mednorm{\a}^2+\Phi)^2 \geq 2\Phi^2>0$. Thus restricted to the manifold where the trajectory lies, we have a function satisfying the PL condition with $\mu \geq 2\Phi^2 > 0$. In this case both GD and GF converge linearly fast to the minimum along this manifold, i.e., the saddle point at the origin.
        \item If $\a \neq -\b$ and there exists a component $i$ such that $\a_i = \b_i$ instead the components $n_1<n$ components satisfying $\a_i = -\b_i$ will converge to $\a_i = \b_i = 0$, the the $n-n_1\geq 1$ other components will converge to the global minimum of $L$ with PL constant given by their norm at initialization $2\sum_{i \text{ s.t. } \a_1 \neq -\b_i} \a_1^2$.
        This implies that in this case we have convergence to $0$ for the neurons in which $\a_i = \b_i$ and the dynamics is as described in the rest of the manuscript for the other neurons in which $\a_i = -\b_i$. 
\end{itemize}
This implies that the manifold where the algorithms converge to the saddle is not just of measure zero, but it is precisely $\a = -\b$. Even in this case, we have linear convergence to the saddle, when the learning rate is smaller than $2/\scale$. 
In all the other cases, if $Q = 0$, we have a sub network where $\a = \b \neq 0$, thus the loss satisfies
\[
    \mednorm{\nabla L(\a,\b)}^2 \ = \ 
    \left( \mednorm{\a}^2 - \Phi \right)^2 \cdot 2\mednorm{\a}^2 
    \ = \
    \mu(t) \cdot \tilde L(\a,\b).
\]
with PL-condition $2\mednorm{\a}_2^2$, which is positive and bounded below by $2\mednorm{\a(0)}_2^2$ if initialized in Region B, and by $2\Phi^2>0$ if we initialized in Region A.

We thus have linear convergence either to the saddle at the origin or to a global minimum for $Q=0$. In the rest we abnalyze the case $Q \neq 0$.

\subsection{Lower bound to $\mu(t)$ in the discrete case.}
\label{appendix:mu_cases}

Note that the derivative in time of $\mu(\a(t),\b(t))$ is
\begin{equation}
\begin{split}
    \dot \mu \quad & = \quad
    -4 \left[\frac{1}{n}\sum x_i^2\right]^2 \left( \a^\top \b - \Phi \right) \ \a^\top \b.
\end{split}
\end{equation}
It thus decreases when $\a^\top \b>\Phi$ and when $\a^\top \b < 0$ and $\a^\top \b<\Phi$, it grows when $\a^\top \b > 0$ and $\a^\top \b<\Phi$.
This means that
\begin{itemize}[leftmargin=1em,itemsep=0.2em,parsep=0.1em]
    \item \textbf{Region A:} When $\a(0)^\top \b(0) > \Phi$, in Region A of Figure \ref{Fig:regions},
    we can bound 
    \[
    \mu(t) \geq \inf_{\a^\top \b>\Phi} \left[\frac{1}{n}\sum x_i^2\right] (\mednorm{\a}^2 + \mednorm{\b}^2) = 2 \Phi.
    \]
    Thus in this area we have that $2\Phi \leq \scale \leq \bar \scale$.
    
    \item \textbf{Region B:} When $\a(0)^\top \b(0) > 0$ and $\a(0)^\top \b(0)<\Phi$, in Region B of Figure \ref{Fig:regions}, 
    we can bound 
    \[
    \mu(t) \geq \mu(0) = \left[\frac{1}{n}\sum x_i^2\right] \big( \mednorm{\a(0)}^2 + \mednorm{\b(0)}^2\big).
    \]
    Thus in this area we have that $\scale(0) \leq \scale \leq 2\Phi \leq \bar \scale$, where $\scale(0)\geq Q_0 > 0$ is the norm of the first step in this area, when $Q \neq 0$.
\end{itemize}   
Note that this implies that our loss equipped with gradient descent is PLAT in Region A and Region B.
\begin{itemize}[leftmargin=1em,itemsep=0.2em,parsep=0.1em]
    \item \textbf{Region C:} When $\a(0)^\top \b(0) < 0$, in Region C of Figure \ref{Fig:regions}, the residuals decreases until $\a^\top \b = 0$. Thus the lowest point for $Q$ will be at the step $\tau$ that is the first step in which $\a^\top \b \geq 0$. This implies that the quantity $\scale$ will be at its minimum either at time $\tau$ or $\tau-1$
    \[
    \mu(t) \geq \min\{\mu(\tau-1), \mu(\tau)\}  \quad \text{where }\tau = \min_{t\in \N} \{ \a(t)^\top \b(t) > 0\}.
    \]
    In particular $\mu(t) \geq \min\{\mu(\tau-1), \mu(\tau)\} \geq Q(\tau_1)$, we need to show that when $Q\neq 0$ then $Q(\tau_1)\neq 0
    $.
    Thus in this area we will prove in the next section that we have that $Q(\tau_1) \leq \scale \leq 2\Phi \leq \bar \scale$.
\end{itemize}
This concludes the argument for all the cases except for $\a(0)^\top \b(0) < 0$,. We will now bound $\big|Q_i(\tau_1) \big|$ in terms of $\big|Q_i(0)\big|$, the learning rate $\eta>0$, and $\a(0)^\top \b(0)$.

\section{Lower bound on $\mu$ in Region C}
\label{Appendix:PLAT}
We prove in this section that
\begin{enumerate}
    \item The loss equipped with gradient descent is PL along the trajectories also in Region C.
    \item That GD escapes Region C very quickly, precisely see Proposition \ref{prop:final:PLAT}.
\end{enumerate}

This strategy achieves the goal of proving that—even when the dynamics stay a long time in Region C—the reduction in $Q$ is controlled and thus the convergence rate is linear.

Precisely, we prove in this section the following result.
\begin{proposition}
\label{prop:final:PLAT}
    Let at initialization $\a(0)^\top \b(0) < 0$ and $\Phi>0$. Let $\eta < \min\left\{\frac{1}{|\residual|}, \frac{2}{\bar \scale}\right\}$. t
    There exists $\tau$ such that $a(\tau)^\top b(\tau) > 0$,
    \begin{align*}
        2\sqrt{\eta}\Phi
        \quad < \quad
        Q(\tau)
        \quad = \quad
        O\left(\exp(-\eta)\right) \ \cdot \ Q(0)
        ,
    \end{align*}
    and
    \begin{align*}
        \tau
        \quad = \quad
        O\left(\frac{\log(\a(0)^\top \b(0))}{\eta \Phi} + \frac{|\a(0)^\top \b(0)|}{\eta \Phi} \right)
        .
    \end{align*}
\end{proposition}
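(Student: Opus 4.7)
My plan is to parametrize Region~C by $u := -\,\a^\top\b > 0$ and $v := -\,\residual = u + \Phi \geq \Phi$, and to establish a two-rate descent on $u$: a geometric rate driving most of the distance, plus an additive floor that forces the final exit in finite time. These two rates are then compared with the (slower) geometric decay of $Q$ to produce both sides of the stated estimate.

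\textbf{Step 1: Geometric decrease of $u$.} Writing out $\a(t+1)^\top \b(t+1)$ from \eqref{eq:parameter-gd-dynamics} gives, in Region~C,
\begin{equation*}
u(t+1) \ =\ u(t) \ -\ \eta\,v(t)\bigl[\,\scale(t)\ -\ \eta\,v(t)\,u(t)\,\bigr].
\end{equation*}
Cauchy--Schwarz gives $\scale \geq 2|\a^\top\b| = 2u$, and the step-size hypothesis, together with the monotonicity of $v$ proved inductively alongside $u$, yields $\eta v \leq \eta|\residual(0)| \leq 1$. Hence $\scale - \eta v u \geq u(2-\eta v) \geq u \geq 0$, which gives
\begin{equation*}
u(t+1)\ \leq\ u(t)\,(1 - \eta v(t))\ \leq\ u(t)\,(1 - \eta\Phi),
\end{equation*}
so $u$ (and therefore $v$) is strictly decreasing at a geometric rate of at least $(1-\eta\Phi)$.

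\textbf{Step 2: Two-phase bound on $\tau$.} The scale update \eqref{eq:scale-update} gives $\scale(t+1) - \scale(t) = \eta^2 v^2 \scale(t) + 4\eta v u \geq 0$ in Region~C, so $\scale$ is nondecreasing and grows at rate at least $(1+\eta^2\Phi^2)$. I would then split the trajectory into a \emph{geometric phase} $u \geq \Phi$, which by Step~1 lasts at most $\log(|\a(0)^\top\b(0)|/\Phi)/(\eta\Phi)$ iterations, and an \emph{additive phase} $u < \Phi$, in which the same identity yields the sharper bound
\begin{equation*}
u(t+1)\ \leq\ u(t)\ -\ \tfrac12\,\eta\,v(t)\,\scale(t)\ \leq\ u(t)\ -\ \tfrac12\,\eta\Phi\,\scale(t),
\end{equation*}
(using $\scale \geq 2u$ once more to get $\scale - \eta v u \geq \scale/2$). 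Combined with the geometric growth of $\scale$, this forces $u$ to hit $0$ in $O(|\a(0)^\top\b(0)|/(\eta\Phi))$ additional steps; summing the two phases gives the stated bound on $\tau$.

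\textbf{Step 3: Controlling $Q(\tau)$.} Iterating \eqref{eq:Qi-update} gives
\begin{equation*}
Q(\tau)\ =\ Q(0)\,\prod_{t=0}^{\tau-1}\bigl(1 - \eta^2 v(t)^2\bigr).
\end{equation*}
The upper bound $Q(\tau) = O(\exp(-\eta))\,Q(0)$ follows from $\log(1-x) \leq -x$ combined with the estimate $\sum_t \eta^2 v(t)^2 \geq \eta^2 \Phi^2\,\tau \gtrsim \eta$, which uses $v \geq \Phi$ and the lower bound on $\tau$ from the additive phase. For the lower bound $Q(\tau) > 2\sqrt{\eta}\,\Phi$, I would use $\log(1-x) \geq -2x$ for $x \leq 1/2$ (valid under the step-size restriction) together with the decomposition $\sum_t v(t)^2 \leq 2\sum_t u(t)^2 + 2\tau\Phi^2$; Step~1 controls $\sum_t u(t)^2 \leq u(0)^2/(\eta\Phi)$ as a geometric series, and Step~2 controls $\tau$, yielding the desired lower bound.

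\textbf{Main obstacle.} The delicate point is the lower bound $Q(\tau) > 2\sqrt\eta\,\Phi$: it requires that the decay rate of $Q$ (order $\eta^2 v^2$) be genuinely $\eta$ times smaller than the descent rate of $u$ (order $\eta v$), \emph{uniformly} along the trajectory. The worst case is the additive phase where $v \approx \Phi$, so both parts of the argument hinge on showing that this phase lasts at most $O(1/(\eta\Phi))$ steps despite the possibly slow initial growth of $\scale$. Bounding this cleanly, without picking up an undesirable factor of $1/\scale(0)$ or $1/Q(0)$, is where the technical work concentrates; I plan to handle it by iterating $\scale(t+1) \geq (1+\eta^2\Phi^2)\scale(t)$ and amortising the first few (slow) additive steps against the geometric growth of $\scale$, so the overall phase length matches the $|\a(0)^\top\b(0)|/(\eta\Phi)$ budget in the statement.
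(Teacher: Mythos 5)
Your Step 2 rests on a sign error that is fatal to the whole plan. In Region C we have $\residual<0$ \emph{and} $\residual+\Phi=\a^\top\b<0$, so the additive term in \eqref{eq:scale-update} is $-4\eta\residual(\residual+\Phi)=-4\eta\,v\,u<0$, not $+4\eta v u$. The per-step change of $\scale$ is $\eta v(\eta v\scale-4u)$, and since the step-size hypothesis forces $\scale<2/\eta$ while $\eta v\le 1$, this is negative whenever $\scale\lesssim 4u/(\eta v)$ — in particular in the worst case $\scale\approx 2u$. So $\scale$ \emph{decreases} throughout Region C (this is exactly the paper's observation that $\dot\mu=-4\residual\,\a^\top\b<0$ there, and why $\argmin_t\scale(t)$ sits at the exit time $\tau$). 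Consequently there is no ``geometric growth of $\scale$ at rate $(1+\eta^2\Phi^2)$'' to amortize against, your additive phase has no floor, and the claim that $u$ hits $0$ in $O(u(0)/(\eta\Phi))$ steps is unsupported. Note also that your Step 1 alone cannot produce the crossing: pure geometric decay keeps $u>0$ forever; the exit requires the decrement $\eta v(\scale-\eta v u)$ to exceed $u$, i.e.\ it requires a lower bound on $\scale$ that is \emph{not} of the form $c\cdot u$.

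The missing idea — and the heart of the paper's proof — is that the only reliable floor is $\scale\ge Q$ (Lemma~\ref{lemma:scale_bound}), and that $Q$ decays at rate $\eta^2v^2$ while $u$ decays at rate $\eta v\scale$, so one must run a genuinely coupled two-sequence argument. The paper sets up bounding sequences $(z_k,w_k)$ for $(\residual,Q)$ and exploits the identity $(z_{k+1}-z_k)^2/(w_{k+1}-w_k)=-w_k$ (Equation~\eqref{eq:lemma:rate:1}) together with Sedrakyan's inequality to convert the total displacement $|z_{\tau_1}-z_0|\approx|\a(0)^\top\b(0)|$ into simultaneous bounds on $w_{\tau_1}$ (hence $Q(\tau)$, via a quadratic inequality giving the absolute floor $c_1>0$) and on $\tau_1$. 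Your Step 3 inherits the gap: the lower bound $\sum_t v(t)^2\le 2\sum_t u(t)^2+2\tau\Phi^2$ only yields $Q(\tau)\gtrsim Q(0)\exp(-\cdots)$, a bound \emph{relative} to $Q(0)$, whereas the proposition asserts the absolute bound $Q(\tau)>2\sqrt{\eta}\,\Phi$; obtaining an absolute floor independent of $Q(0)$ requires the quadratic/self-consistency step above, which your outline does not contain.
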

We establish this this proposition in the next 3 subsections. Precisely, we establish some useful lemmas in the next subsection, then we show the statement for $1 > \eta |\epsilon| \geq 1/2$ in Section \ref{Appendix:PLAT_big_lr}.
Later in Section \ref{Appendix:PLAT_small_lr} we show that we can consider WLOG to be in a specific setting after a small number of steps. We proceed to show the actual speed of convergence.

\subsection{Preliminaries}
Note that
\begin{lemma}[Bound of $Q$ and $|\a^\top \b|$]
\label{lemma:scale_bound}
    Let $\a, \b \in \R^n$. If $n = 1$ $\scale^2 = Q^2 + 4\a^2\b^2$. If $n>1$ then
    \[
    \scale^2 
    \quad = \quad
    Q^2 + 4\mednorm{\a}^2\mednorm{\b}^2
    \quad \geq \quad
    Q^2 + 4|\a^\top\b|^2.
    \]
\end{lemma}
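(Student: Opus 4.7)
The plan is to reduce everything to the elementary identity $(u+v)^2 = (u-v)^2 + 4uv$, combined with the triangle inequality and Cauchy--Schwarz. For $n=1$, setting $u = \a^2$ and $v = \b^2$ yields $\scale^2 = Q^2 + 4\a^2\b^2$ directly, since $Q = |\a^2 - \b^2|$ in the scalar case, so nothing further is needed.

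For $n > 1$, I would apply the same identity to the nonnegative scalars $\mednorm{\a}^2$ and $\mednorm{\b}^2$, obtaining $\scale^2 = (\mednorm{\a}^2 - \mednorm{\b}^2)^2 + 4\mednorm{\a}^2\mednorm{\b}^2$. Two classical inequalities then close the chain: the triangle inequality, $|\mednorm{\a}^2 - \mednorm{\b}^2| = \bigl|\sum_i (\a_i^2 - \b_i^2)\bigr| \leq \sum_i|\a_i^2 - \b_i^2| = Q$, and Cauchy--Schwarz, $|\a^\top\b|^2 \leq \mednorm{\a}^2\mednorm{\b}^2$. Stacked with the identity, these produce the displayed chain and in particular yield the composite bound $\scale^2 \geq Q^2 + 4|\a^\top\b|^2$ that Lemma~\ref{lemma:lambda_GD} relies on later.

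The only (modest) subtlety is that when the $Q_i$'s have mixed signs, the triangle-inequality step is strict, so the middle ``$=$'' in the display is in fact a ``$\leq$''; the direction is still the right one for the composite inequality, which is what matters downstream. For a clean, one-shot proof of $\scale^2 \geq Q^2 + 4|\a^\top\b|^2$ that avoids this issue, I would instead invoke Minkowski's inequality in $\R^2$ applied to the vectors $v_i := (|\a_i^2 - \b_i^2|,\, 2|\a_i\b_i|)$, each of which has Euclidean norm $\a_i^2 + \b_i^2$ by the scalar identity. Minkowski then yields $\scale = \sum_i \nrm{v_i}_2 \geq \nrm{\sum_i v_i}_2 = \sqrt{Q^2 + (2\sum_i|\a_i\b_i|)^2}$, and combining with $|\a^\top\b| \leq \sum_i |\a_i\b_i|$ and squaring finishes the argument. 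This Minkowski step is the only part of the proof where genuine work is required, since a naive coordinatewise application of the scalar identity only gives $\sum_i(\a_i^2+\b_i^2)^2$, which is not $\scale^2$ because $\scale^2$ is the square of a sum rather than a sum of squares.
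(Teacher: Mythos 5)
Your final (Minkowski) argument is correct, and it takes a genuinely different route from the paper --- one that actually repairs a flaw in the paper's own proof. The paper expands $\scale^2 = \bigl(\sum_i Q_i\bigr)^2 + 4\mednorm{\a}^2\mednorm{\b}^2$ and then asserts ``WLOG $\a_i > \b_i$ for all $i$'' so that $\bigl(\sum_i Q_i\bigr)^2 = Q^2$. That reduction is not legitimate: swapping $\a_i \leftrightarrow \b_i$ in a single coordinate changes $\mednorm{\a}^2\mednorm{\b}^2$, and the middle ``$=$'' of the lemma is genuinely false when the $Q_i$ have mixed signs (take $\a = (1,0)$, $\b = (0,1)$: then $\scale^2 = 4$ while $Q^2 + 4\mednorm{\a}^2\mednorm{\b}^2 = 8$). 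You correctly identified this, and your Minkowski step --- writing $\scale = \sum_i \sqrt{Q_i^2 + 4\a_i^2\b_i^2}$, applying the triangle inequality for the Euclidean norm in $\R^2$, and then $|\a^\top\b| \le \sum_i |\a_i\b_i|$ --- is a valid proof of the composite bound $\scale^2 \ge Q^2 + 4|\a^\top\b|^2$ in full generality, which is the only part of the lemma used downstream. The $n=1$ case is the same scalar identity in both your proof and the paper's.

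One sentence of yours is too quick, however: once the middle ``$=$'' is weakened to ``$\le$'', the displayed chain reads $\scale^2 \le Q^2 + 4\mednorm{\a}^2\mednorm{\b}^2 \ge Q^2 + 4|\a^\top\b|^2$, whose two inequalities point in opposite directions and therefore do \emph{not} compose; so it is not true that ``the direction is still the right one for the composite inequality'' along that route. The composite bound is true, but only your Minkowski argument establishes it. Since you do supply that argument, the proof stands; just delete the claim that the naive stacking already yields the downstream inequality, and note explicitly that the lemma's statement should either replace the middle ``$=$'' by the identity $\scale^2 = \bigl(\mednorm{\a}^2-\mednorm{\b}^2\bigr)^2 + 4\mednorm{\a}^2\mednorm{\b}^2$ or drop the intermediate term altogether and keep only $\scale^2 \ge Q^2 + 4|\a^\top\b|^2$.
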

\begin{proof}[Proof of Lemma \ref{lemma:scale_bound}]
    Assume wlog $\a_i>\b_i$ for all $i$. Note that 
    \begin{equation}
    \begin{split}
        \scale^2 
        \quad &= \quad
        \left(\sum_{i}\a_i^2 + \sum_{i}\b_i^2 \right)^2
        \quad = \quad
        \sum_{i} \a_i^4 + \b_i^4 \ + \ 2\sum_{i}\a_i^2\b_i^2
        \ + \ 2\sum_{i \neq j}\a_i^2\b_j^2
        \\&= \quad
        \left(\sum_i Q_i\right)^2 \ + \ 4 \sum_{i} \sum_j \a_i^2\b_j^2
        \quad = \quad
        Q^2 + 4\mednorm{\a\b^\top}_{Frobenius}^2
        \quad = \quad
        Q^2 + 4\mednorm{\a}^2\mednorm{\b}^2
        \\&= \quad
        Q^2 + 4|\a^\top\b|^2  + \ 4 \sum_{i \neq j} \a_i^2\b_j^2.
    \end{split}
    \end{equation}
    This establishes Lemma \ref{lemma:scale_bound}.
\end{proof}

Also note that 
\begin{lemma}[Bounding the update]
\label{lemma:step_bound}
    Let $\a, \b \in $ Region C. Assume $\eta$ is such that one step of GD does not land in a different region. Then the quantity $C(k) := \frac{\scale}{|\a^\top \b|} \geq 2$ increases with one step of GD.
\end{lemma}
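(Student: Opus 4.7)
The plan is to prove the lower bound $C \geq 2$ and monotonicity separately. The bound $C(k) \geq 2$ is immediate: Cauchy-Schwarz gives $|\a^\top\b| \leq \|\a\|\|\b\|$ and AM-GM gives $2\|\a\|\|\b\| \leq \|\a\|^2 + \|\b\|^2 = \scale$, so $\scale \geq 2|\a^\top\b|$. For the monotonicity claim, I would first record the two relevant single-step updates in a form suited to Region C. Equation \eqref{eq:scale-update}, rewritten with $\residual + \Phi = \a^\top\b =: u$, reads $\scale(t+1) = (1+\eta^2\residual^2)\scale - 4\eta\residual u$. A direct expansion of $(\a - \eta\residual\b)^\top(\b - \eta\residual\a)$ (equivalently, shifting \eqref{eq:residuals-update} by $\Phi$) yields $u(t+1) = (1+\eta^2\residual^2)u - \eta\scale\residual$.

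Next I would do the sign bookkeeping. In Region C one has $u < 0$ and $\residual = u - \Phi < 0$, and the hypothesis that the step does not leave Region C gives $u(t+1) < 0$ as well. Rewriting in absolute values,
\[
|u(t+1)| = (1+\eta^2\residual^2)|u| - \eta\scale|\residual|, \qquad \scale(t+1) = (1+\eta^2\residual^2)\scale - 4\eta|\residual||u|.
\]
The desired inequality $C(t+1) \geq C(t)$ becomes $\scale(t+1)\,|u(t)| \geq \scale(t)\,|u(t+1)|$. Substituting and cancelling the common $(1+\eta^2\residual^2)\,\scale\,|u|$ term, the inequality collapses to $\eta|\residual|\bigl(\scale^2 - 4|u|^2\bigr) \geq 0$, i.e.\ $2|u| \leq \scale$, which is exactly the Cauchy-Schwarz/AM-GM bound already established. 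Hence $C$ is non-decreasing along the step.

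The main thing to watch is sign tracking: because both $\residual$ and $u = \residual + \Phi$ are negative in Region C, the term $-4\eta\residual u$ in the $\scale$ update is actually negative, so $\scale$ itself decreases over the step; monotonicity of $C$ is then driven by $|u|$ decreasing proportionally faster than $\scale$, an effect quantitatively encoded by the same Cauchy-Schwarz inequality that gives $C \geq 2$ in the first place. The only degenerate case is $u(t) = 0$, where $C(t)$ would be undefined, but this point lies on the boundary between Regions B and C and is excluded by the hypothesis that the iterate lies in Region C.
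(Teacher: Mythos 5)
Your proof is correct and follows essentially the same route as the paper: write the one-step updates for $\scale$ and $\a^\top\b$, cross-multiply the inequality $C(t+1)\geq C(t)$, cancel the common $(1+\eta^2\residual^2)\scale|\a^\top\b|$ term, and reduce to $4|\a^\top\b|^2\leq\scale^2$, which is Cauchy--Schwarz/AM--GM. Your sign bookkeeping is in fact slightly more careful than the paper's (whose displayed $\scale$-update drops the $\a^\top\b$ factor as a typo), so nothing further is needed.
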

\begin{proof}[Proof of Lemma \ref{lemma:step_bound}]
Remind from Equation \eqref{eq:residuals-update} that
\begin{equation}
    \a(t+1)^\top \b(t+1)
    \quad = \quad
    \a(t)^\top \b(t) (1 + \eta^2 \residual^2) \underbrace{- \eta \residual \scale}_{positive}
\end{equation}
and 
\begin{equation}
    \scale(t+1)
    \quad = \quad
    \big(1 + \eta^2 \residual(t)^2 \big)\scale(t) \ \underbrace{- \ 4 \eta \residual(t)}_{negative}.
\end{equation}
The GD update on the quantity above is
\begin{equation}
\begin{split}
    &-\frac{\scale (1 + \eta^2 \residual^2) - 4\eta \residual \a^\top \b
    }{\a^\top \b (1 + \eta^2 \residual^2) - \eta \residual \scale}
    \quad \geq \quad 
    -\frac{\scale
    }{\a^\top \b}
\end{split}
\end{equation}
if and only if
\begin{equation}
\begin{split}
    &|\a^\top \b| \scale (1 + \eta^2 \residual^2) - 4\eta \residual |\a^\top \b|^2
    \quad \leq \quad 
    \scale \big( |\a^\top \b| (1 + \eta^2 \residual^2) - \eta \residual \scale \big)
\end{split}
\end{equation}
if and only if
\begin{equation}
\begin{split}
    &- 4\eta \residual |\a^\top \b|^2
    \quad \leq \quad 
    - \eta \residual \scale^2 
\end{split}
\end{equation}
if and only if
\begin{equation}
\begin{split}
    &4|\a^\top \b|^2
    \quad \leq \quad 
    \scale^2 
\end{split}
\end{equation}
which holds by Cauchy-Schwartz.

\end{proof}


\subsection{Bigger Step Size}
\label{Appendix:PLAT_big_lr}
Note that WLOG we consider small steps, because if the step is big we either fall in another area or in a place of region C where the step size is much smaller with respect to $\scale$ than at the step before. In particular, when 
\[
\eta \quad = \quad
\frac{\alpha}{|\residual|}
\quad \geq \quad
\frac{C(k)\alpha}{|\residual|}\frac{|\a^\top \b|}{\mednorm{\a}^2 + \mednorm{\b}^2}
\quad \geq \quad
\frac{2\alpha}{|\residual|}\frac{|\a^\top \b|}{\mednorm{\a}^2 + \mednorm{\b}^2},
\]
with $\alpha < 1$. Then the update on $\a^\top \b$ (which is negative) is 
\begin{equation}
\label{eq:big_step_size}
\begin{split}
    \a(t+1)^\top \b(t+1)
    \quad &= \quad
    \a(t)^\top \b(t) \ - \ \eta \residual(t) \scale(t)
    \ + \  \eta^2 \residual(t)^2 \a(t)^\top \b(t)
    \\&\geq \quad
    \left[ \left( 1 - \eta \residual(t) \right)^2 - (C(k)-2) \alpha \right] \cdot
    \a(t)^\top \b(t)
    \\&\geq \quad
    \left[ \left( 1 - \alpha \right)^2 - (C(k)-2) \alpha \right] \cdot
    \a(t)^\top \b(t).
\end{split}
\end{equation}
Here, if algorithm jumped in Region A or B where $\a^\top \b>0$ after one step, when $\scale(t)$ is much bigger than $|\a^\top \b|$ we are done. Otherwise, when $\alpha > 1/2$ or we jumped at least at one quarter of the distance between the initial $\a^\top \b$ and the \emph{positive} target $\Phi$. This implies that $|\residual(t)|$ decreased to at most $\Phi + \frac{\a(t)^\top \b(t)}{\alpha^2}$ from $\Phi + \a(t)^\top \b(t)$. This means that we are still in the assumptions above. Within a $k$ steps we either jumped on the other side or we reached a place where $\a(k)^\top \b(k)< 2^{-k} \a(0)^\top \b(0)$ and $Q(k) > (1 - \alpha^2)^{k} Q(0)$ which decays exponentially but slower. Since $\scale(k)^2 > Q(k)^2 + 4(\a(k)^\top \b(k))^2$ there exists a moment where $\alpha \leq 1/2$ or we reach a place where $Q(k) > 2\sqrt{3}|\a(k)^\top \b(k)|$ and thus $\scale(k) > 4 |\a(k)^\top \b(k)|$. This happens in constant time as $\alpha$ is bounded from above and below. In this regime, 
\begin{equation}
\begin{split}
    \a(t+1)^\top \b(t+1)
    \quad &= \quad
    \a(t)^\top \b(t) \ - \ \eta \residual(t) \scale(t)
    \ + \  \eta^2 \residual(t)^2 \a(t)^\top \b(t)
    \\&\geq \quad
    \a(t)^\top \b(t) \ - \ 4 \eta \residual(t) \a(t)^\top \b(t)
    \ + \  \eta^2 \residual(t)^2 \a(t)^\top \b(t)
    \\&\geq \quad
    \underbrace{\left[ \left( 1 - \alpha \right)^2 -2\alpha \right]}_{negative}
    \a(t)^\top \b(t).
\end{split}
\end{equation}
This implies that we are in Region A or B and $\min_k \scale(k) = Q(k_1) > (1 - \alpha^2)^{k_1} Q(0)$ for some $k_1 \leq k$.

\subsection{Small Step Sizes}
\label{Appendix:PLAT_small_lr}
The difficult case to deal with analytically is the one where the dynamics stays in Region C for long.



We compute here a lower bound on $|Q_i(\tau)|$.
The idea here is that the residuals $\a^\top \b - \Phi$ will converge as $\exp(-\eta t)$ and the quantity $Q_i(t)$ at most as $\exp(-\eta^2 t)$, thus $\a(t)^\top \b(t)$ crosses $0$ before $|Q_i(t)|$ gets too small.

Note that enforcing Cauchy-Schwartz and the fact that $1 \leq \frac{1}{2|\residual(t)|}$, we establish that at every step of gradient descent we have the following updates on the following quantities
\begin{equation}
\begin{split}
\label{eq:iteration:1}
\residual(t+1)
\quad &= \quad
\left(1 - \eta  \scale(t) \right) \residual(t) 
\ + \ \eta^2 \residual(t)^2 \a(t)^\top \b(t)
\\ &\geq \quad 
\left(1 - \eta \scale(t) + \eta^2 \residual(t) \frac{\scale(t)}{2} \right) \residual(t) 
\\ &\geq \quad 
\left(1 - \frac{3}{4} \eta \scale(t)\right) \residual(t),
\end{split}
\end{equation}
Note that the $\geq$ are there because we are working with negative quantities.
Moreover, arguably the multiplying constant $C_1$ for which we have an equality in the Cauchy-Schwartz inequality: $\mednorm{\a(k)}^2 + \mednorm{\b(k)}^2 = C_1(k) |\a(k)^\top \b(k)|$ is increasing with $k$ until $\tau$. This implies that closer to the boundary with Region B, we can have a much better bound than this one. Analogously, remind from Equation \eqref{eq:Qi-update} that
\begin{equation}
\begin{split}
Q_i(t+1)
\quad &= \quad
\left(1 - \eta^2 \residual(t)^2 \right) Q_i(t).
\end{split}
\end{equation}

\paragraph{Bounding Sequences.}
We define here two coupled sequences which serve as bounds to the evolution of $\residual$ and $Q$ along then trajectory. We study their behavior and we infer bounds on the behavior of our system.
\begin{definition}
\label{def:sequence}
    Let $\a(0), \b(0) \in \R^n$. 
    Let $\eta < \min\left\{\frac{1}{2|\residual|}, \frac{2}{\bar \scale}\right\}$. Define the sequence $\{z_k, w_k\}_{k=0}^\infty$ such that $z_0 = \residual(0) < - \Phi$, $w_0 = Q_0 > 0$, and for all $k\in \N$ we have
    \begin{equation}
    \label{eq:def:sequence}
    \begin{split}
    M_{k} \ &= \ \max \Big\{ w_k, -2z_k - 2\Phi\Big\} 
    \\[0.1cm]
    z_{k+1} &= \ \left( 1 - \frac{3\eta}{4}  M_k  \right) z_k
    \\[0.1cm]
    w_{k+1} &= \ \big( 1 - \eta^2 z_k^2 \big) w_k.
    \end{split}
    \end{equation}
    Define $\tau_1:= \min_{k \in \N}\{z_k > -\Phi \}$.
\end{definition}
Note that we have
\begin{lemma}[Bounding with the sequences]
\label{lemma:bound}
For all $1 \leq k < \tau_1$ such that $z_k < 0$ we have 
\[z_k \leq \residual_k
\qandq
w_k < 
Q_k.
\]
Moreover, $w_k,-z_k \geq 0$ are strongly monotone decreasing for $k < \tau_1$ and $\a(\tau_1)^\top \b(\tau_1)>0$, thus for all $k \leq \tau_1$ we have $\eta < 2/\max\{-2z_{k}, w_{k}\}$. 
\end{lemma}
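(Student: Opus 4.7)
The plan is to prove both bounds $z_k \leq \residual_k$ and $w_k \leq Q_k$ by simultaneous induction on $k$. The base case $k=0$ holds with equality by definition of the sequence, and monotonicity/step-size claims reduce to bookkeeping once the two main inequalities are established.

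For the inductive step, the bound $w_{k+1} \leq Q_{k+1}$ is the easier direction: Equation~\eqref{eq:Qi-update} gives $Q_{k+1} = (1-\eta^2 \residual_k^2)Q_k$ while the definition gives $w_{k+1} = (1-\eta^2 z_k^2)w_k$, so the inductive hypothesis $|z_k| \geq |\residual_k|$ implies $(1-\eta^2 z_k^2) \leq (1-\eta^2 \residual_k^2)$. Both multipliers lie in $(0,1)$ under the step-size hypothesis $\eta|z_k| \leq 1/2$, which propagates from $\eta \leq 1/(2|\residual(0)|)$ via $|z_k| \leq |z_0| = |\residual(0)|$. Multiplying by $w_k \leq Q_k$ closes the bound.

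The main obstacle is $z_{k+1} \leq \residual_{k+1}$. Combining the contraction $|\residual_{k+1}| \leq (1 - \tfrac{3\eta}{4}\scale_k)|\residual_k|$ from Equation~\eqref{eq:iteration:1} with the exact update $|z_{k+1}| = (1 - \tfrac{3\eta}{4}M_k)|z_k|$ reduces the claim to
\[
(|z_k| - |\residual_k|) \;\geq\; \tfrac{3\eta}{4}\bigl(M_k|z_k| - \scale_k|\residual_k|\bigr).
\]
This is immediate whenever $M_k \leq \scale_k$, which one might hope to maintain as a third invariant, starting from $M_0 \leq \scale_0$ by Cauchy-Schwarz ($\scale_0 \geq Q_0 = w_0$ and $\scale_0 \geq 2|\a(0)^\top\b(0)| = -2z_0 - 2\Phi$ in Region~C). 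However, $M_k$ can in fact drift above $\scale_k$ when $-2z_k-2\Phi$ dominates $w_k$ and $\scale_k$ has decayed in Region~C. The key technical step is therefore to show that the gap $|z_k| - |\residual_k|$, which equals zero at $k=0$ and grows whenever $M_k < \scale_k$, accumulates enough slack to absorb later excess of $M_k$ over $\scale_k$. I would bound this growth using the refined Cauchy-Schwarz $\scale_k^2 \geq Q_k^2 + (-2\residual_k - 2\Phi)^2$ (which forces a strictly positive slack whenever $Q_k \neq 0$) together with the $\scale$-update bound $\scale_{k+1} \geq (1-\eta|\residual_k|)^2 \scale_k$ derived from Equation~\eqref{eq:scale-update}, and close the induction by tracking the signed quantity $(|z_k|-|\residual_k|) - \tfrac{3\eta}{4}(M_k|z_k| - \scale_k|\residual_k|)$ step by step.

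Strict monotonicity of $-z_k$ and $w_k$ follows because $M_k \geq -2z_k-2\Phi > 0$ for $z_k < -\Phi$ and $\eta^2 z_k^2 > 0$, making both multipliers strictly in $(0,1)$ under the step-size hypothesis. The bound $\eta < 2/\max\{-2z_k, w_k\}$ combines this monotonicity with $\eta \leq 1/(2|\residual(0)|) < 1/|z_0| \leq 1/|z_k| = 2/(-2z_k)$ and $\eta \leq 2/\bar\scale \leq 2/w_0 \leq 2/w_k$. Finally, $\a(\tau_1)^\top\b(\tau_1) > 0$ follows because $z_{\tau_1} > -\Phi$ by definition of $\tau_1$, while the inductive argument extends through step $\tau_1$ to give $\residual_{\tau_1} \geq z_{\tau_1} > -\Phi$, hence $\a(\tau_1)^\top \b(\tau_1) = \residual_{\tau_1} + \Phi > 0$.
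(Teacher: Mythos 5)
Your proposal follows the same skeleton as the paper's proof: induction on $k$, with the $w_k\le Q_k$ half closed by comparing the multipliers $(1-\eta^2 z_k^2)\le(1-\eta^2\residual_k^2)$, the exit claim $\a(\tau_1)^\top\b(\tau_1)>0$ obtained from $\residual_{\tau_1}\ge z_{\tau_1}>-\Phi$, and monotonicity plus the step-size bound propagated forward. That part is fine and matches the paper.

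The issue is the half you yourself flag as the main obstacle. The inductive step $z_{k+1}\le\residual_{k+1}$ reduces, via Equation \eqref{eq:iteration:1}, to $M_k\le\scale_k$; Cauchy--Schwarz (Lemma \ref{lemma:scale_bound}) gives $\scale_k\ge\max\{Q_k,\,-2\residual_k-2\Phi\}$, and while $w_k\le Q_k$ pushes in the right direction, $z_k\le\residual_k$ gives $-2z_k-2\Phi\ge-2\residual_k-2\Phi$, which pushes in the wrong one. You correctly identify that $M_k$ can therefore exceed $\scale_k$, but your repair --- accumulating the slack $|z_k|-|\residual_k|$ and ``tracking the signed quantity step by step'' using the refined bound $\scale_k^2\ge Q_k^2+4(\a_k^\top\b_k)^2$ and the $\scale$-update --- is a plan, not an argument: no inequality is exhibited showing the accumulated slack dominates the later excess of $M_k$ over $\scale_k$, so as written the induction does not close. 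To be fair, the paper's own proof is no more complete here --- it asserts that Equation \eqref{eq:iteration:1}, Cauchy--Schwarz, and the definition of the sequence ``establish the first point,'' which silently requires exactly the comparison $M_k\le\scale_k$ that you show is not automatic (it is clear only in the regime where $w_k$ attains the max, as in Lemma \ref{lemma:rate:1}). So you have located a genuine weak point in the lemma's justification rather than missed an idea the paper supplies; but to count as a proof, your third invariant (or the slack-tracking substitute for it) must actually be established, and currently it is not.
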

\begin{proof}
    Note that this is the case for $k=0$. As for the inductive step, Eq.\ \eqref{eq:iteration:1}, Cauchy-Schwartz inequality, and Eq.\ \eqref{eq:def:sequence} establish the first point. Note that $z_{\tau_1-1} + \Phi < \a(\tau_1-1)\b(\tau_1-1) < 0$, then the first point and the definition of $\tau$ imply that $0 < z_{\tau_1} + \Phi < \a(\tau_1)^\top \b(\tau_1)$. Note that after the first step $w_1<w_0$ and $z_1>z_0$ since Cauchy-Schwartz implies that $\eta < 2/\max\{-2z_0, w_0\}$. Inductively, for all $i$ we have $\eta < 2/\max\{-2z_i, w_i\}$, thus fact that $z_i < 0$ for all $k < \tau_1$ implies that $w_{i+1}<w_i$ is strongly monotonically decreasing, that $z_{i+1}>z_i$ is strongly monotonically increasing, and that $\eta < 2/\max\{-2z_{i+1}, w_{i+1}\}$.
\end{proof}
As explained before, for all $t$ we have $\mu(t) \geq \max\{ \mu(\tau_1), \mu(\tau_1-1)\}$ and $\mu(t)\geq \sum_i |\a_i^2(t) - \b_i^2(t)| \geq w_t \geq w_{t+1}$. Thus for all $t$ we have $\mu(t) \geq w_{\tau_1}$. This and the lemma above show that
\begin{lemma}
    We have that
    $\a(\tau)^\top \b(\tau) > -\Phi$ and for all $t \in \N$ we have $\mu(t) \geq w_{\tau}$.
\end{lemma}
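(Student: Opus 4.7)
The plan is to assemble the two claims from three ingredients already established earlier in the appendix: (i) the Bounding with the Sequences Lemma, which gives $z_k \leq \residual(k)$ and $w_k \leq Q(k)$ for indices up to $\tau_1$; (ii) the Cauchy--Schwarz bound $\mu(t) = \scale(t) \geq Q(t) = \sum_i |\a_i^2(t) - \b_i^2(t)|$; and (iii) the region-wise lower bounds on $\mu$ from Appendix~\ref{appendix:mu_cases} in Regions A and B. Combined correctly, these yield both parts of the statement essentially immediately.

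For the first claim, $\a(\tau)^\top \b(\tau) > -\Phi$, I would apply the bounding lemma at index $\tau_1$. By the defining property of $\tau_1$ we have $z_{\tau_1} > -\Phi$, and the lemma upgrades this to $\residual(\tau_1) \geq z_{\tau_1} > -\Phi$, so $\a(\tau_1)^\top \b(\tau_1) > 0$. Since the bounding lemma also gives $\a(\tau_1 - 1)^\top \b(\tau_1 - 1) < 0$, and $\tau$ was defined as the first step in Region C at which $\a^\top \b$ becomes nonnegative, this forces $\tau = \tau_1$, whence $\a(\tau)^\top \b(\tau) > 0 \geq -\Phi$.

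For the second claim, $\mu(t) \geq w_\tau$ for all $t \in \N$, I would split into two ranges. For $t \leq \tau$ (iterates still in Region C), the bounding lemma combined with Cauchy--Schwarz gives $\mu(t) \geq Q(t) \geq w_t$, and the monotonic decrease of $w_t$ established in the bounding lemma gives $w_t \geq w_\tau$. For $t > \tau$ (iterates in Regions B or A), the argument from Appendix~\ref{appendix:mu_cases} shows $\mu(t) \geq \min\{\mu(\tau - 1), \mu(\tau)\}$, and since both $\mu(\tau - 1)$ and $\mu(\tau)$ are bounded below by $w_\tau$ via the previous case, we conclude $\mu(t) \geq w_\tau$.

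The main obstacle is mostly bookkeeping: making sure the inequality $w_k \leq Q(k)$ applies at exactly the boundary index $k = \tau = \tau_1$ rather than only strictly below it, and that the monotonicity $w_{\tau} \leq w_{\tau - 1}$ lines up cleanly with the minimum $\min\{\mu(\tau - 1), \mu(\tau)\}$ from Appendix~\ref{appendix:mu_cases}. This is handled by comparing the update~\eqref{eq:Qi-update} for $Q_i$ with the sequence update for $w_k$ term by term at the transition step, but it introduces no new analytical difficulty beyond what is already inside the Bounding with the Sequences Lemma.
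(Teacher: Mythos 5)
Your proposal is correct and follows essentially the same route as the paper: the first claim is read off from the ``Bounding with the sequences'' lemma (which already asserts $\a(\tau_1)^\top\b(\tau_1)>0$, identifying $\tau$ with $\tau_1$), and the second claim combines $\mu(t)\ge\scale(t)\ge Q(t)\ge w_t\ge w_\tau$ for $t\le\tau$ with the region-wise bound $\mu(t)\ge\min\{\mu(\tau-1),\mu(\tau)\}$ from Appendix~\ref{appendix:mu_cases} for later times. The only cosmetic difference is that the bound $\scale\ge Q$ is the triangle inequality on $\a_i^2+\b_i^2\ge|\a_i^2-\b_i^2|$ rather than Cauchy--Schwarz, which does not affect the argument.
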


\paragraph{Behavior of the sequence: Case 1.}
We assume in this paragraph that 
$w_0 \geq -2z_0 - 2\Phi$.
We characterize $\tau$ and $Q(\tau)$ in this case.

Note that we can assume that $w_0 \geq -2z_0 - 2\Phi$, indeed note that
\begin{lemma}
\label{lemma:setting_of_sequence}
    Assume we are in Region C and $-2 \a^\top \b \geq Q$. Note that within a finite time we have $Q > -2 \a^\top \b$. Precisely, the number of steps needed is $s_1 + s_2$ where 
    \begin{equation}
    \begin{split}
        s_1 \ &= \ 
        C \frac{\log\big(\max\{|\a(0)^\top \b(0)|, \sqrt{2}\Phi\} \big) - \log(\sqrt{2}\Phi)}{\eta \sqrt{2}\Phi}
        \\ s_2 \ &= \
        C \frac{\log\big(|\a(s_1)^\top \b(s_1)|\big) - \log(Q(s_1))}{\eta \Phi}
        .
    \end{split}
    \end{equation}
    where $C$ is an absolute constant.
\end{lemma}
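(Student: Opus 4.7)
The plan is to track the ratio $r(t):=p(t)/Q(t)$, with $p(t):=|\a(t)^\top\b(t)|$, and show $r(t)<1/2$ within $s_1+s_2$ steps—which then reads $Q>2p=-2\a^\top\b$ as required. If a GD step ever takes the iterate out of Region~C into $\a^\top\b\ge 0$, the conclusion $Q>-2\a^\top\b$ holds trivially because $Q\ge 0>-2\a^\top\b$. So we may assume the iterates remain in Region~C, where $|\residual(t)|=p(t)+\Phi\ge\Phi$.

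Expanding the GD update gives $\a(t+1)^\top\b(t+1)=\a(t)^\top\b(t)(1+\eta^2\residual(t)^2)-\eta\residual(t)\scale(t)$. In Region~C, Cauchy--Schwarz $\scale(t)\ge 2p(t)$ combined with a short completion of the square yields $p(t+1)\le p(t)(1-\eta|\residual(t)|)^2$. Setting $u(t):=\eta|\residual(t)|\le 1/2$ (from the stepsize hypothesis and the inductive monotonicity of $|\residual|$ inherited from the bracketed factor in Equation~\eqref{eq:residuals-update}), and combining with $Q(t+1)=(1-\eta^2\residual(t)^2)Q(t)$, one factors $1-u^2=(1-u)(1+u)$ to get
\[
\frac{r(t+1)}{r(t)} \;\le\; \frac{(1-u(t))^2}{(1-u(t))(1+u(t))} \;=\; \frac{1-u(t)}{1+u(t)} \;\le\; 1-\tfrac{u(t)}{2}.
\]
Hence $r$ contracts per step by at least $1-\eta|\residual(t)|/2\ge 1-\eta\Phi/2$, so eventually $r<1/2$.

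To match the stated two-phase count I split at $p(t)=\sqrt 2\,\Phi$. In Phase~1 ($p(t)>\sqrt 2\,\Phi$) we have $|\residual(t)|\ge(1+\sqrt 2)\Phi$, and the elementary inequality $(1-x)^2\le 1-\tfrac{3x}{2}$ for $x\in(0,1/2]$ turns the $p$-update into $p(t+1)\le p(t)\bigl(1-\tfrac{3(1+\sqrt 2)\eta\Phi}{2}\bigr)$, so $p$ reaches $\sqrt 2\,\Phi$ within at most $s_1$ steps. In Phase~2 ($p(t)\le\sqrt 2\,\Phi$) we invoke the ratio bound $r(t+1)\le r(t)(1-\eta\Phi/2)$, which drives $r(s_1)=p(s_1)/Q(s_1)$ below $1/2$ within at most $s_2$ further steps.

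The main obstacle is the elementary inequality $(1-u)/(1+u)\le 1-u/2$ for $u\in(0,1/2]$, which after clearing denominators reduces to $u(3-u)\ge 0$; the sharper bound $(1-x)^2\le 1-3x/2$ is verified by $(1-3x/2)-(1-x)^2=x(1/2-x)\ge 0$. Beyond this, the remaining work is the bookkeeping of absolute prefactors so that the stated denominators $\eta\sqrt 2\,\Phi$ in $s_1$ and $\eta\Phi$ in $s_2$ hold with a uniform constant $C$, and verifying the inductive monotonicity of $|\residual|$ in Region~C from Equation~\eqref{eq:residuals-update} (the bracketed factor lies in $(0,1)$ under the stepsize assumption, since $\eta\scale\le 2\eta|\residual|\cdot(|\residual|+\Phi)/|\residual|\le 2$ fails only outside the admissible range).
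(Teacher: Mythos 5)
Your proof is correct and follows essentially the same route as the paper: Phase 1 contracts $|\a^\top\b|$ geometrically at rate $\Theta(\eta\Phi)$ down to the $\sqrt{2}\Phi$ scale via Cauchy--Schwarz ($\scale \ge 2|\a^\top\b|$, which is exactly the paper's Equation \eqref{eq:big_step_size} with $C(k)\ge 2$), and Phase 2 exploits that $Q$ decays only at rate $1-\eta^2\residual^2$ while $|\a^\top\b|$ decays at rate $(1-\eta|\residual|)^2$, which is the paper's observation that the ratio of rates behaves like $\tfrac{1-\eta\Phi}{1-\eta^2\Phi^2}$. Your explicit bookkeeping via the ratio $r(t)=|\a^\top\b|/Q$ and the per-step contraction $\tfrac{1-u}{1+u}$ is a cleaner formalization of the same mechanism, not a different argument.
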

\begin{proof}[Proof of Lemma \ref{lemma:setting_of_sequence}]
Let us define $\alpha(k)$ as above 
\[
\alpha(k) = \eta |\residual(k)|.
\]
Note that for all $k < \tau_1$ we have $\alpha(k+1) < \alpha(k)$ as $|\residual(k)|$ is strictly decreasing. However, note that we have the following bounds $\alpha(\tau_1-1) \geq \eta \Phi \geq \alpha(\tau)$.
This and Equation \eqref{eq:big_step_size} implies that
\begin{equation}
\a(k)^\top \b(k)
\quad = \quad
(1 - \eta \Phi)^k \a(0)^\top \b(0).
\end{equation}
Note that for all $\Phi$ such that $\eta \leq 1/2\Phi$ there exists a constant $c \leq (\sqrt{2}-1) \Phi$ such that $\eta(\Phi+c)^2\leq \Phi$. Note that when $|\epsilon(k)| > \sqrt{2}\Phi$ then a number of steps that is big-O of $\log\big(\max\{|\a(0)^\top \b(0)|, \sqrt{2}\Phi\} \big) - \log(\sqrt{2}\Phi)$ divided by $\sqrt{2}\Phi \eta$ we have $\a(0)^\top \b(0) \geq (\sqrt{2}-1) \Phi$.
From here on, $Q$ decreases slower than $|\a(0)^\top \b(0)|$ here again, after a number of steps which is big-O of $1/\eta$ multiplied by the log ratio of the two quantities steps, $|\a(k)^\top \b(k)|$ becomes $1/2$ of $Q$ in size\footnote{In reality this is much faster, this is a construction of the proof. However, it is tight enough to conclude with a linear rate.}. The ratio of their rates asymptotically goes as $\frac{1 - \eta \Phi}{1 - \eta^2 \Phi^2}$.
\end{proof}

\begin{lemma}[Rate of convergence 1 - Sequence.]
    \label{lemma:rate:1}
    If $w_0 \geq -2z_0 - 2\Phi$, define $c_1 := \frac{w_0}{2} - \frac{\sqrt{w_0 (w_0 - 4\eta z_0^2)}}{2} > 0$, then
    \begin{equation}
        c_1
        \quad < \quad 
        w_{\tau_1}
        \quad < \quad
        w_0 - \eta^2 \left( \Phi \right)^2 \frac{\big(\a(0)^\top \b(0)\big)^2}{w_0}
    \end{equation}
    and
    \begin{equation}
        \frac{1}{\eta (w_0)^{3/2}} 
        \quad < \quad 
        \tau_1
        \quad < \quad 
        \frac{\a(0)^\top \b(0)}{\eta c_1^{3/2}} \left(\Phi\right)^{-1} 
        + 1
        .
    \end{equation}
\end{lemma}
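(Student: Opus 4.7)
The plan is to analyze the coupled recursion in the regime where the assumption $w_0 \geq -2z_0 - 2\Phi$ forces $M_0 = w_0$, and argue that this ``$w$-dominated'' regime persists for all $k \leq \tau_1$. The first step is to verify that $M_k = w_k$ throughout: since $w_k$ is strictly decreasing (as $\eta^2 z_k^2 < 1$ by the step-size bound $\eta < 1/(2|z_0|)$) and $-2z_k - 2\Phi$ is also decreasing (as $z_k$ increases monotonically toward $-\Phi$), the inequality $w_k \geq -2z_k - 2\Phi$ is preserved provided $w_k$ decreases slowly enough relative to $-2z_k - 2\Phi$. I would establish this by directly comparing the per-step decrements $\eta^2 z_k^2 w_k$ of $w_k$ and $\tfrac{3\eta}{2} M_k |z_k|$ of $-2z_k - 2\Phi$.

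Once the regime is pinned down, I would derive the main bounds via a ``phase-portrait'' argument. Writing $z_{k+1} = z_k(1 - \tfrac{3\eta}{4} w_k)$ and $w_{k+1} = w_k(1 - \eta^2 z_k^2)$ and taking the ratio yields the discrete analogue of the continuous system $\dot z = -\tfrac{3\eta}{4} w z,\; \dot w = -\eta^2 z^2 w$, which admits the invariant
\[
H(z,w) \;=\; w - \tfrac{2\eta}{3} z^2.
\]
I would use $H$ (together with a second-order discretization correction) to translate the distance $z$ must travel from $z_0$ to $-\Phi$ into a lower bound on the total decrease in $w$. For the claimed upper bound $w_{\tau_1} < w_0 - \eta^2 \Phi^2 (\mathbf{a}(0)^\top\mathbf{b}(0))^2/w_0$, I would plug the $H$-identity into the multiplicative telescoping of $w_k$ and use $z_{\tau_1-1} + \Phi \in [z_0 + \Phi, 0)$ to lower-bound $\sum_{k<\tau_1} z_k^2$ by $\Phi^2 (\mathbf{a}(0)^\top\mathbf{b}(0))^2 / w_0$.

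For the lower bound $w_{\tau_1} > c_1$, the key observation is that $c_1$ is the smaller root of the quadratic $x^2 - w_0 x + \eta z_0^2 w_0 = 0$, i.e.\ $c_1(w_0 - c_1) = \eta z_0^2 w_0$. I would therefore guess that $\Psi_k := w_k(w_0 - w_k) - \eta z_k^2 w_0$ is a sub-invariant of the iteration (non-increasing in $k$), initialized at $\Psi_0 = -\eta z_0^2 w_0 < 0$; monotonicity of $\Psi_k$ together with $w_k \leq w_0$ would then force $w_k \geq c_1$ for all $k$, in particular at $k = \tau_1$. The time bounds on $\tau_1$ then follow mechanically from the $z$-update $z_{k+1} - z_k = -\tfrac{3\eta}{4} w_k z_k$: the lower bound $\tau_1 > (\eta w_0^{3/2})^{-1}$ uses the maximal step size $|z_{k+1}-z_k| \leq \tfrac{3\eta}{4} w_0 |z_0|$ combined with $|z_0| \lesssim \sqrt{w_0}$ (from $\scale$ bounds) over the distance $|z_0 + \Phi|$, while the upper bound uses $w_k \geq c_1$ and $|z_k| \geq \Phi$ for $k < \tau_1$, so each step advances $z$ by at least $\tfrac{3\eta}{4} c_1 \Phi$.

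The main obstacle will be verifying the sub-invariance of $\Psi_k$ exactly (or finding the correct discrete correction): the lower bound $c_1$ has a specific quadratic-root form that cannot be obtained from the continuous-time approximation alone, and the discrete cross-terms between the $z$- and $w$-updates must be handled carefully, especially in the regime $4\eta z_0^2 \approx w_0$ where $c_1$ is close to $w_0/2$ and the ``slack'' in $\Psi_k$ is small. A secondary difficulty is confirming that the $M_k = w_k$ regime is not broken in the last few iterations near $\tau_1$, where $-2z_k - 2\Phi$ becomes very small but so might $w_k$; I expect this to be resolved by showing that $w_{\tau_1} \geq c_1$ is bounded away from zero uniformly in $k$.
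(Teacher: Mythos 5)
Your overall architecture matches the paper's (pin down the $M_k=w_k$ regime, telescope the coupled recursion, recognize $c_1$ as a root of $x^2-w_0x+\eta z_0^2w_0$), and your upper-bound route via the quantity $H_k=w_k-\tfrac{2\eta}{3}z_k^2$ is sound: a direct computation gives $H_{k+1}-H_k=-\tfrac{3\eta^3}{8}z_k^2w_k^2<0$, so $w_0-w_{\tau_1}\ge\tfrac{2\eta}{3}(z_0^2-z_{\tau_1}^2)\gtrsim\eta\Phi\,|\a(0)^\top\b(0)|$, which implies the stated upper bound on $w_{\tau_1}$ under the step-size assumptions. The paper instead uses the ratio identity $(z_{k+1}-z_k)^2/(w_k-w_{k+1})=w_k$ together with Sedrakyan's (Engel-form Cauchy--Schwarz) inequality; your Lyapunov route is arguably cleaner for that half.

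The genuine gap is the lower bound $w_{\tau_1}>c_1$. Your proposed sub-invariant $\Psi_k:=w_k(w_0-w_k)-\eta z_k^2w_0$ cannot be non-increasing: $\Psi_k\le\Psi_0=-\eta z_0^2w_0$ would give $w_k(w_0-w_k)\le\eta w_0(z_k^2-z_0^2)\le 0$ (since $|z_k|\le|z_0|$), contradicting $0<w_k<w_0$ already at $k=1$. So this is not merely a missing ``discrete correction''---the per-step monotonicity approach is structurally wrong here. The paper's argument is instead a one-shot global bound: write $w_0-w_{\tau_1}=\eta^2\sum_{k<\tau_1}z_k^2w_k$, bound $z_k^2\le z_0^2$, and dominate $w_k$ by a geometric sequence whose ratio is controlled by the \emph{terminal} values $w_{\tau_1},z_{\tau_1}$, so that the sum is at most $\eta z_0^2w_0/w_{\tau_1}$; this yields the quadratic inequality $w_{\tau_1}(w_0-w_{\tau_1})<\eta z_0^2w_0$ only at the exit time $\tau_1$, from which $c_1$ emerges as a root. (Both you and the paper then still owe an argument that the inequality places $w_{\tau_1}$ on the upper branch $w_{\tau_1}\ge w_0-c_1\ge c_1$ rather than the lower branch $w_{\tau_1}\le c_1$; some no-overshoot control on the decrements of $w_k$ is needed.) A minor further discrepancy: your step count gives $\tau_1\lesssim|\a(0)^\top\b(0)|/(\eta c_1\Phi)$, with exponent $1$ on $c_1$ rather than the stated $3/2$; this is harmless (indeed stronger) but should be reconciled with the lemma as written.
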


\begin{proof}
    Note that for all $k < \tau_1$ we have
    \begin{equation}
    \label{eq:lemma:rate:1}
        \frac{(z_{k+1} - z_k)^2}{w_{k+1} - w_k}
        \ = \ 
        \frac{\eta^2 w_k^2 z_k^2}{-\eta^2 z_k^2 w_k}
        \ = \ 
        -w_k.
    \end{equation}
    Note that $z_{\tau_1-1}-z_0 < 
    |\a(0)^\top \b(0)| \leq z_{\tau_1}-z_0$. We thus obtain that 
    \begin{equation}
    \begin{split}
        \a(0)^\top \b(0)
        \ &\sim \
        z_0-z_{\tau_1}
        \ = \
        \sum_{k=0}^{\tau_1-1}
        z_k-z_{k+1}
        \ = \
        \sum_{k=0}^{\tau_1-1} \sqrt{w_k-w_{k+1}} \cdot w_k
        \ = \
        \eta \sum_{k=0}^{\tau_1-1} z_k (w_k)^{3/2}
    \end{split}
    \end{equation}
    This implies that
    \begin{equation}
        \eta (\tau_1-1) \Phi (w_{\tau-2})^{3/2}
        \ < \
        \a(0)^\top \b(0)
        \ \leq \
        \eta \tau_1z_0 (w_0)^{3/2}.
    \end{equation}
    Next we proceed bounding $w_{\tau_1}$ so that we can bound $\tau_1$.
    Note that the fact that $z_k < - \Phi < 0$ for all $k < \tau_1$ and Sedrakyan's lemma imply that
    \begin{equation}
    \begin{split}
        w_0-w_{\tau_1} &= \sum_{i=0}^{\tau_1-1} w_k-w_{k+1} = \sum_{k=0}^{\tau_1-1} \frac{(z_{k+1}-z_k)^2}{w_k} > \frac{(z_{\tau_1} - z_0)^2}{\sum_{i=0}^{\tau_1-1} w_k} 
        \\&> \frac{\big(\a(0)^\top \b(0)\big)^2}{\sum_{i=0}^{\tau_1-1} (1 - \eta^2 z_{\tau_1-1}^2) w_0}
        = \eta^2 z_{\tau_1}^2\frac{\big(\a(0)^\top \b(0)\big)^2}{w_0}
        \\&>
        \frac{\a(0)^\top \b(0)}{\tau_1w_0}.
    \end{split}
    \end{equation}
    And this implies that
    \begin{equation}
        w_{\tau_1}= w_0 + (w_{\tau_1}- w_0) < w_0 - \eta^2 \left( \Phi \right)^2 \frac{\big(\a(0)^\top \b(0)\big)^2}{w_0}
    \end{equation}
    Moreover, we have
    \begin{equation}
    \begin{split}
        w_0-w_{\tau_1} &= \sum_{k=0}^{\tau_1-1} w_k-w_{k+1} =
        \eta^2 \sum_{i=0}^{\tau_1-1} z_k^2 w_k 
        < \eta^2 \sum_{i=0}^{\tau_1-1} z_k^2 (1 - \eta^2 z_{\tau_1}^2)^k w_0 
        \\&< \eta^2 z_0^2 w_0 \sum_{i=0}^{\tau_1-1} (1 - \eta w_{\tau_1})^k (1 - \eta^2 z_{\tau_1}^2)^k
        < \eta z_0^2 w_0 \frac{1}{ w_{\tau_1}+ \eta z_{\tau_1}^2 - \eta^2 w_{\tau_1}z_{\tau_1}^2}
        \\&
        < \eta \frac{z_0^2 w_0}{w_{\tau_1}}
        .
    \end{split}
    \end{equation}
    This implies with $0 < w_{\tau_1}< w_0$ that $w_{\tau_1}(w_0-w_{\tau_1}) < \eta z_0^2 w_0$, thus
    \begin{equation}
        w_{\tau_1}^2 - w_0 w_{\tau_1}+ \eta z_0^2 w_0 > 0.
    \end{equation}
    Note that $w_0 > -2z_0$ and $\eta < -2/z_0 $ implies that $w_0 \geq \eta z_0^2$ then, solving, we obtain
    \begin{equation}
        0 < c_1 := \frac{w_0}{2} - \frac{\sqrt{w_0 (w_0 - 4\eta z_0^2)}}{2} 
        < w_{\tau_1}
        < w_0 - \eta^2 \left( \Phi \right)^2 \frac{\big(\a(0)^\top \b(0)\big)^2}{w_0}
    \end{equation}
    $w_{\tau_1}> \frac{w_0}{2} - \frac{\sqrt{w_0 (w_0 - 4\eta z_0^2)}}{2}$
    Thus, opportunely bounding $w_{\tau-2}$ we obtain
    \begin{equation}
        \eta (\tau_1-1) \Phi c_1^{3/2}
        \ < \
        \a(0)^\top \b(0).
    \end{equation}
    That we can reorganize as
    \begin{equation}
        \tau_1
        \ < \
        \frac{\a(0)^\top \b(0)}{\eta c_1^{3/2}} \left(\Phi\right)^{-1}
        +1
        .
    \end{equation}

    Thus
    \begin{equation}
        \frac{1}{\eta (w_0)^{3/2}} 
        \quad \leq \quad
        \tau_1
        \quad \leq \quad
        \frac{\a(0)^\top \b(0)}{\eta c_1^{3/2}} \left(\Phi\right)^{-1} 
        + 1
        .
    \end{equation}
\end{proof}

\begin{lemma}[Rate of convergence 1.]
    \label{lemma:conv:1}
    If $\sum_i |\a_i^2(0) - \b_i^2(0)| \geq -2 \a(0)^\top \b(0) > 0$, define $c_1 := \frac{w_0}{2} - \frac{\sqrt{w_0 (w_0 - 4\eta z_0^2)}}{2} > 2\sqrt{\eta}\Phi$ as above, then
    \begin{equation}
        c_1
        \quad < \quad 
        \sum_i |a_i^2(\tau_1) - b_i^2(\tau_1)|
        \quad < \quad
        \sum_i |\a_i^2(0) - \b_i^2(0)| - \eta^2 \left( \Phi \right)^2 \frac{\big(\a(0)^\top \b(0)\big)^2}{\sum_i |\a_i^2(0) - \b_i^2(0)|},
    \end{equation}
    \begin{equation}
        \a(0)^\top \b(0)
        \quad > \quad 
        0,
    \end{equation}
    and
    \begin{equation}
        \frac{1}{\eta \left( \sum_i |\a_i^2(0) - \b_i^2(0)| \right) ^{3/2}} 
        \quad < \quad 
        \tau_1
        \quad < \quad 
        \frac{\a(0)^\top \b(0)}{\eta c_1^{3/2}} \left(\Phi\right)^{-1} 
        + 1
        .
    \end{equation}
\end{lemma}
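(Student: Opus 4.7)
The plan is to derive this lemma as a direct corollary of Lemma \ref{lemma:rate:1} on the auxiliary sequence $(z_k,w_k)$ of Definition \ref{def:sequence}, initialized with $z_0 = \residual(0)$ and $w_0 = Q(0) = \sum_i|\a_i^2(0)-\b_i^2(0)|$, then translated back to the actual GD iterates via Lemma \ref{lemma:bound}. The hypothesis $\sum_i|\a_i^2(0)-\b_i^2(0)| \geq -2\a(0)^\top\b(0) > 0$ is precisely the Case~1 hypothesis $w_0 \geq -2z_0 - 2\Phi > 0$ of Lemma \ref{lemma:rate:1}, and the constants $c_1$ and $\tau_1$ are defined identically in both statements, so Lemma \ref{lemma:rate:1} applies verbatim and produces bounds on $w_{\tau_1}$ and $\tau_1$ that we promote to statements about $Q(\tau_1)$.

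Three of the four claims then follow with little additional work. The bounds on $\tau_1$ transfer without change, since $\tau_1$ is defined through the $z$-sequence and $w_0 = Q(0)$. The claim $\a(\tau_1)^\top\b(\tau_1) > 0$ (surely a typo in the statement for $\a(\tau_1)^\top\b(\tau_1)$ rather than $\a(0)^\top\b(0)$, which the hypothesis forces to be negative) is immediate from Lemma \ref{lemma:bound}: by definition $z_{\tau_1} > -\Phi$, so $\a(\tau_1)^\top\b(\tau_1) \geq z_{\tau_1} + \Phi > 0$. For the lower bound $c_1 < Q(\tau_1)$, I would first show inductively that $w_k \leq Q_k$ for every $0 \leq k \leq \tau_1$: the base case $k=0$ holds by construction, and on $k < \tau_1$ Lemma \ref{lemma:bound} gives $z_k \leq \residual_k < 0$, hence $z_k^2 \geq \residual_k^2$, so $w_{k+1}/w_k = 1 - \eta^2 z_k^2 \leq 1 - \eta^2 \residual_k^2 = Q_{k+1}/Q_k$, preserving the ratio $w_k/Q_k \leq 1$. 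Evaluating at $k = \tau_1$ and combining with $w_{\tau_1} > c_1$ from Lemma \ref{lemma:rate:1} yields $Q(\tau_1) \geq w_{\tau_1} > c_1$.

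The main obstacle is the upper bound $Q(\tau_1) < Q_0 - \eta^2\Phi^2(\a(0)^\top\b(0))^2/Q_0$. The clean Sedrakyan identity $(z_{k+1}-z_k)^2/(w_k-w_{k+1}) = w_k$ used in Lemma \ref{lemma:rate:1} does not transfer directly, because the true update $\residual_{k+1} - \residual_k = -\eta\scale_k\residual_k + \eta^2\residual_k^2\,\a(k)^\top\b(k)$ carries extra $\scale_k$ and $\a^\top\b$ terms that break the identity. My plan here is to apply Sedrakyan's inequality to the telescoping sum
\[
Q_0 - Q(\tau_1) \;=\; \eta^2 \sum_{k=0}^{\tau_1-1} \residual_k^2\, Q_k,
\]
paired with the telescoping sum of residual increments, using that (i) $|\residual_k| \geq \Phi$ throughout the Region-C portion of the trajectory, (ii) the total change of $\residual$ from $\residual_0 = \a(0)^\top\b(0)-\Phi$ to $\residual \geq -\Phi$ is at least $|\a(0)^\top\b(0)|$, and (iii) $Q_k \leq Q_0$. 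Equivalently, since $Q$ is monotonically decreasing, one can bound $Q(\tau_1) \leq Q(\tau_{\mathrm{exit}})$ for $\tau_{\mathrm{exit}} \leq \tau_1$ the actual exit time from Region C, and apply Sedrakyan only on the Region-C portion where $|\residual_k|\geq\Phi$. The technical core is controlling the sub-leading $\eta^2\residual_k^2\,\a(k)^\top\b(k)$ term in the $\residual$-update: since $\a(k)^\top\b(k) < 0$ in Region C this term pushes $|\residual|$ down further, so it should only strengthen the target bound rather than spoil it, and the leading-order $\eta^2\Phi^2(\a(0)^\top\b(0))^2/Q_0$ decrement should survive.
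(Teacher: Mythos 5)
Your proposal is correct and follows essentially the same route as the paper, which likewise derives the lower bound on $Q(\tau_1)$ and the bounds on $\tau_1$ by combining Lemma \ref{lemma:rate:1} with the sequence-comparison Lemma \ref{lemma:bound}, and obtains the upper bound on $Q(\tau_1)$ by rerunning the Sedrakyan-type argument of Lemma \ref{lemma:rate:1} on the actual iterates. If anything you are more explicit than the paper about the one genuinely delicate step (the fact that $w_k \leq Q_k$ goes the wrong way for the upper bound, forcing a direct argument on the real trajectory), and your reading of the second display as a typo for $\a(\tau_1)^\top\b(\tau_1) > 0$ matches the paper's intent.
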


\begin{proof}
    One bound comes from Lemma \ref{lemma:rate:1} and Lemma \ref{lemma:bound}. The other one comes by just following the proof of Lemma \ref{lemma:rate:1}.
\end{proof}

This concludes the proof of Proposition \ref{prop:final:PLAT} and shows that the $Q_i$ are lower bounded for all $i$ when initialization is in Region C and $\eta < \min\left\{\frac{1}{|\residual|}, \frac{2}{\bar \scale}\right\}$ .


\section{Convergence Speed Case by Case}
\label{app:speed}
This section serves as merger for all the theory made before. Precisely, here we use the analysis developed to prove Theorem \ref{theo:speed}.

We prove below and in Appendix \ref{Appendix:PLAT} that in the three different regions of the landscape we have different PL constants $\mu$ for $\residual$-and then for $L$.
Precisely, if $\residual \geq 0$ then $\mu > 2\Phi$, if $\residual < -\Phi/2$ then $\mu = Q(\tau)$, and if $-\Phi/2 \leq \residual < 0$ then $\mu = \Phi$.
This implies that we have convergence with the minimum of $Q(\tau)$ and $2\Phi$ as PL constant until $|\residual| > \Phi/2$, then we have convergence with $\Phi$ as PL constant from then on.

\subsection{The Slow Case}
We start by dealing with the case in which convergence is very slow. Imagine during the training $|\residual(k)| \ll 1$ and $\eta \geq 2/\scale(k)$.
In this case, convergence, if it happens, happens only at most logarithmically fast at least for a first phase, precisely in the best case with $\eta = 2/\scale$ we have
\begin{equation}
\begin{split}
    |\residual(k+1)|
    &= \left| (1 - \eta \scale) \residual(k) + \eta^2 \residual^2(k) (\residual(k) + \Phi) \right|
    \\&\leq
    \left|-\residual(k)
    + \frac{4}{\scale^2} \residual(k)^2 (\residual(k) + \Phi)
    \right|
    \\&\leq
    \left(1 - \frac{4}{\scale^2} \Phi \residual(k)\right)\ |\residual(k)|.
\end{split}
\end{equation}
After two steps thus, we have approximately to the second order in $\residual$
\begin{equation}
\begin{split}
    |\residual(k+2)|
    & \sim
    \left(1 - \frac{8}{\scale^2} \Phi \residual(k) \right)\ |\residual(k)| \ + \ O(\residual(k)^3).
\end{split}
\end{equation}
Analogously the norm changes to little
\begin{equation}
\begin{split}
    \scale(k+2)
    &= 
    \scale(k) - 8 \residual(k) (\residual(k) + \Phi) + 8 \residual(k+2) (\residual(k+2) + \Phi) + 4\residual(k)^2 - 4\residual(k+2)^2 
    \\& \quad - \ \eta^2 (1 - \eta^2 \residual(k)^2) Q(k)^2 \big( \residual(k)^2 + \residual(k+1)^2 (1 - \eta^2 \residual(k+1)^2) \big)
    \\& = 
    \scale(k) - O \big( \eta^3 \residual(k)^3 + \eta^2 \residual(k)^2 \big).
\end{split}
\end{equation}
Thus the situation does not change for the next 2 steps and this establishes the last comment of Theorem \ref{theo:speed}.

\subsection{Positive residuals}
First note that $\eta \scale \residual > \eta^2 \residual^2 (\residual + \Phi)$, indeed $\scale > 2 (\residual + \Phi)$ by Cauchy Schwartz and $\eta \leq \frac{\sqrt{2}}{\residual} < \frac{2}{\residual}$. This implies that when $\eta$ is small or infinitesimal, the gain is at least 
\begin{equation}
\begin{split}
    \residual(k+1)
    &= (1 - \eta \scale) \residual(k) + \eta^2 \residual (\residual + \Phi)
    \\&\leq
    \residual(k) - \eta \scale \residual(k) \left(1 - \frac{\eta}{2} \residual(k) \right)
    \\&\leq
    \left(1 - \frac{2-\sqrt{2}}{2} \eta \scale \right)\residual(k).
\end{split}
\end{equation}
Next note that $\frac{x}{\sqrt{x^2 + y^2}} = \sqrt{1 - \frac{y^2}{x^2+y^2}} \leq 1 - \frac{1}{2}\frac{y^2}{x^2+y^2}$.
When the step size is big, instead, $\eta \sim \frac{2}{\bar \scale}(1-\delta)$, $\delta>0$ we have that
\begin{equation}
\begin{split}
    |\residual(k+1)|
    &= \left| (1 - \eta \scale) \residual(k) + \eta^2 \residual^2(k) (\residual(k) + \Phi) \right|
    \\&\leq
    \left|\residual(k) - \frac{2 \scale (1 - \delta)}{\sqrt{\scale^2 + 4\Phi^2}} \residual(k)
    + \frac{4(1-\delta)^2}{\scale^2 + 4\Phi^2} \residual(k)^2 (\residual(k) + \Phi)
    \right|
    \\&\leq
    \left|(-1+2\delta)\residual(k) + \frac{4\Phi^2(1-\delta)}{\scale^2 + 4\Phi^2} \residual(k)
    + \frac{4(1-\delta)}{\scale^2 + 4\Phi^2} \residual(k)^2 (\residual(k) + \Phi)
    \right|
    \\&\leq
    (1 - 2\delta)\residual(k).
\end{split}
\end{equation}
This implies that within our learning rate boundaries we have exponential convergence with rate either controlled by $\eta$ or $\delta$ at power 1. 

In case $\frac{2}{\bar \scale} \leq \eta \leq \frac{2}{\scale(0)}(1 - \delta)$ then convergence happens exponentially but in time $O\big(\eta^{-2}\big)$. For instance
$\eta \sim \frac{2}{\bar \scale}$ we have that
\begin{equation}
\begin{split}
    |\residual(k+1)|
    &= \left| (1 - \eta \scale) \residual(k) + \eta^2 \residual^2(k) (\residual(k) + \Phi) \right|
    \\&\leq
    \left|\residual(k) - \frac{2 \scale}{\sqrt{\scale^2 + 4\Phi^2}} \residual(k)
    + \frac{4}{\scale^2 + 4\Phi^2} \residual(k)^2 (\residual(k) + \Phi)
    \right|
    \\&\leq
    \left|-\residual(k) + \frac{4\Phi^2}{\scale^2 + 4\Phi^2} \residual(k)
    + \frac{4}{\scale^2 + 4\Phi^2} \residual(k)^2 (\residual(k) + \Phi)
    \right|
    \\&\leq
    (1 - \eta^2 \Phi^2)\residual(k).
\end{split}
\end{equation}

\subsection{Negative residuals $-\Phi/2 < \residual < 0$}
When the residuals are small negative we have exponential convergence, precisely, for very small $\eta \ll 1$ we have rate at least $(1 - \eta \Phi)$:
\begin{equation}
\begin{split}
    |\residual(k+1)|
    &= \left| (1 - \eta \scale) \residual(k) + \eta^2 \residual^2(k) (\residual(k) + \Phi) \right|
    \\&\leq
    \left| (1 - \eta \scale) 
    + \frac{\eta^2 }{4} \Phi^2
    \right| |\residual(k)|
    \\&\leq
    \left(1 - \eta \Phi \right)\ |\residual(k)|.
\end{split}
\end{equation}
For bigger $\eta = \frac{2}{\bar \scale}$, we have convergence with rate about $\sim 2$. The maximum over $\scale(0), \residual(k)$ in the region in which $\residual(k)= -c\frac{\Phi}{2}$ with $c \in (0,1]$
\begin{equation}
\begin{split}
    \max|\residual(k+1)|
    &= \max|(1 - \eta \scale) \residual(k) + \eta^2 \residual^2(k) (\residual(k) + \Phi)|
    \\&\leq
    \max\left|\residual(k) - \frac{2 \scale}{\sqrt{\scale(0)^2 + 4\Phi^2}} \residual(k)
    + \frac{4}{\scale(0)^2 + 4\Phi^2} \residual(k)^2 (\residual(k) + \Phi)
    \right|.
\end{split}
\end{equation}
Note that the minimum in $\scale(0)$ of this last equation is for $\sqrt{\scale(0) + 4\Phi^2} = \delta + 2\Phi$ for some $\delta > 0$ which satisfies $\delta \ll 1$. This is independent of the size of $\residual$. Along this trajectory, $\a^\top \b = (1-c/2)\Phi$ and $\scale \leq (2-c)\Phi + \delta$ This implies
\begin{equation}
\begin{split}
    \max_{\scale(0), \residual(k)}|\residual(k+1)|
    \ &\leq
    \max_{\residual(k)}\left|\residual(k) - \frac{2(2-c)\Phi}{\delta + 2\Phi} \residual(k)
    + \frac{4}{(\delta+2\Phi)^2} \residual(k)^2 (\residual(k) + \Phi)
    \right|
    \\&\leq
    \max_{\residual(k)}\left|-\frac{(2 - 2c)\Phi + \delta}{\delta + 2\Phi}
    + \frac{c(2-c)}{(\delta+2\Phi)^2} \Phi^2
    \right| |\residual(k)|
    \\&\leq
    \left|-1 + c\frac{2\Phi}{\delta + 2\Phi}
    + c(2-c)\frac{\Phi^2}{(\delta+2\Phi)^2}
    \right| |\residual(k)|
    \ \leq \ \left|c-1 -\frac{c^2}{4}+\frac{c}{2}
    \right| |\residual(k)|
    \\&\underset{c=1}{\leq}
    \frac{1}{4}|\residual(k)| \ = \ \frac{1}{8}\Phi.
\end{split}
\end{equation}
The maximum of $|c^2/4 -3c/2 + 1|$ over $c \in (0, 1]$ is $c=1$.

In the case of $c=1$, on the next step, in this case, we are in the positive residuals setting with $\scale$ as follows
$2 \cdot \a^\top \b  = \frac{9}{4}\Phi + \delta$.
Here, then
\begin{equation}
\begin{split}
    |\residual(k+2)| 
    \ &\leq\ 
    \left| - \frac{5}{4} + \frac{1}{4\Phi^2} \residual(k+1) (\residual(k+1)+\Phi)\right| \residual(k+1) 
    \\&\leq\
    \frac{1}{16}\left(5 - \frac{1}{16}\right) |\residual(k)|. 
\end{split}
\end{equation}
So after 2 steps, we had a linear shrink of $5/16$ and the linear convergence with constant $\mu=\Phi$ restarts, this is the plus 2 of the theorem.

\subsection{Negative residuals $\residual \leq \Phi/2$} 
This case is taken care of in Appendix \ref{Appendix:PLAT} until $\residual = 0$. With the same $\mu>0$ we have exponential convergence until $\Phi/2$. As we said in Appendix \ref{Appendix:PLAT} as $\residual$ crosses $\Phi$, the norm $\scale$ restarts increasing. This implies that a good lower bound remains $Q$ of the time of crossing. The evolution of $\residual$
\begin{equation}
    \residual(k+1)
    \ = \ (1 - \eta \scale) \residual(k)  + \eta^2 \residual(k)^2 (\residual(k) + \Phi)
    \ \geq \ (1 - \eta Q) \residual(k).
\end{equation}
The time $t$ taken to $\residual$ to go from $\Phi$ to $\Phi/2$ is thus
\begin{equation}
    \Phi/2 \ \geq \ (1 - \eta Q)^t \Phi
\end{equation}
so we have
\begin{equation}
    t \leq \frac{\log(\Phi) - \log(\Phi/2)}{-\log(1 - \eta Q)} \leq 
    \frac{\log(\Phi) - \log(\Phi/2)}{\eta Q_\tau}.
\end{equation}

\subsection{Closing up: Tight rate}
The previous sections and Lemma \ref{lemma:conv:loss} allow us to conclude that we have loss convergence, i.e., $L\leq \delta$, in a number of steps which is
\begin{equation}
    t \ \ \leq \ \
    \tau \ + \
    \frac{\log(\Phi) - \log(\Phi/2)}{\eta \min\{Q_\tau, 2\Phi\}} 
    \ + \ 2 \ + \
    \frac{\log(\Phi/2) - \log(\delta)}{\eta \min\{Q_\tau, \Phi\}},
\end{equation}
where $\tau$ is the $\tau_1$ defined in Definition \ref{def:sequence} and evaluated in Proposition \ref{prop:final:PLAT}.
This establishes Theorem \ref{theo:speed}.

\section{Curiosity: Jumps between regions}
Note that if the dynamics does not jump from one side to the other of the landscape, then we have a clean exponential convergence and we can control the implicit regularization. We will see under which hypothesis on the learning rate this happens.

Note that Equation \ref{eq:residuals-update} tells us that after one step $\residual$ does not change sign (thus you remain in the same region in which you started) if and only if we have the following bound on the learning rate.
\begin{definition}
    For all $\a, \b \in \R^n$, let $\alpha = \frac{\residual ( \residual + \Phi )}{\scale^2}$, define
    \begin{equation}
        \eta_1
        \quad := \quad 
        \frac{1}{\scale} \left( 1 + \alpha + 2\alpha^2 + 5\alpha^3 + 14 \alpha^8 + \ldots \right),
    \end{equation}
    \begin{equation}
        \eta_2
        \quad := \quad 
        \frac{2}{\scale} \left( 1 + 2\alpha + 8\alpha^2 + 40\alpha^3 + 224\alpha^4 + \ldots \right).
    \end{equation}
\end{definition}

The way we obtain $\eta_1$ is by seeing for what $\eta$ we have that $\residual(t+1) = 0$. Precisely, 
\begin{lemma}
\label{lemma:eta}
    If $\eta = \eta_1$, we have that the residuals at the next steps are 0. If $\eta = \eta_2$, then the residuals at the next steps are the same but changed of sign. Moreover,
    \begin{itemize}[leftmargin=1em,itemsep=0.2em,parsep=0.1em]
        \item If $\eta \in (0, \eta_1)$ then $sign(\residual(1)) = sign(\residual)$ and $|\residual(1)| < |\residual|$.
        \item If $\eta \in (\eta_1, \eta_2)$ then $sign(\residual(1)) \neq sign(\residual)$ and $|\residual(1)| < |\residual|$.
    \end{itemize}
\end{lemma}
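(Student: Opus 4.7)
The whole statement reduces to analysing the scalar update $\residual(t+1)=q(\eta)\cdot\residual(t)$, where by Equation \eqref{eq:residuals-update}
\begin{equation*}
q(\eta)\;:=\;1-\eta\,\scale+\eta^2\,\residual(\residual+\Phi)\;=\;1-\eta\,\scale+\alpha\,\eta^2\,\scale^2
\end{equation*}
is a quadratic in $\eta$. The two identities underlying the first two assertions are $q(\eta)=0$ (equivalent to $\residual(t+1)=0$) and $q(\eta)=-1$ (equivalent to $\residual(t+1)=-\residual(t)$). Everything else follows from the sign of this parabola on $[0,\infty)$, so the plan is simply to solve these two quadratics and then trace the graph of $q$.

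For the closed forms, I would apply the quadratic formula to $\alpha\scale^2\eta^2-\scale\,\eta+c=0$ with $c\in\{1,2\}$ and take the smallest positive root, obtaining
\begin{equation*}
\eta_1\;=\;\frac{1-\sqrt{1-4\alpha}}{2\alpha\scale},\qquad \eta_2\;=\;\frac{1-\sqrt{1-8\alpha}}{2\alpha\scale}.
\end{equation*}
To match the power series in the definition, I would invoke the standard Catalan generating-function identity $\frac{1-\sqrt{1-4x}}{2x}=\sum_{n\geq 0}C_n x^n$ with $C_n=\tfrac{1}{n+1}\binom{2n}{n}$. Plugging $x=\alpha$ gives $\eta_1=\scale^{-1}\sum_n C_n\alpha^n$, with coefficients $1,1,2,5,14,\ldots$ matching $\eta_1$; plugging $x=2\alpha$ and factoring out $2$ gives $\eta_2=\tfrac{2}{\scale}\sum_n 2^n C_n\alpha^n$, with coefficients $1,2,8,40,224,\ldots$ matching $\eta_2$.

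For the two monotonicity bullets, I would trace $q$ on $[0,\infty)$: we have $q(0)=1$, $q'(0)=-\scale<0$, $q(\eta_1)=0$, $q(\eta_2)=-1$, and $\eta_1$ (resp.\ $\eta_2$) is by construction the \emph{smallest} positive value at which $q$ hits $0$ (resp.\ $-1$). Hence by the intermediate value theorem $q(\eta)\in(0,1)$ for $\eta\in(0,\eta_1)$, which gives $|\residual(t+1)|<|\residual(t)|$ with no sign change, and $q(\eta)\in(-1,0)$ for $\eta\in(\eta_1,\eta_2)$, which gives strict contraction together with a sign flip. The two bullet points follow immediately.

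The main obstacle is purely bookkeeping: selecting the correct root of each quadratic and recognising the Catalan generating function in the stated series. No analytical subtlety appears. One minor sanity check is that $\eta_1,\eta_2$ are real and positive: this holds automatically for $\alpha\leq 1/8$, and when $\alpha<0$ the parabola opens downward so each quadratic has a unique positive root, still given by the same closed-form expression, and the monotonicity argument is unchanged.
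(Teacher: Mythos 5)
Your proposal is correct and follows essentially the same route as the paper: both reduce the lemma to the quadratic $q(\eta)=1-\eta\scale+\eta^2\residual(\residual+\Phi)$, identify $\eta_1$ and $\eta_2$ as the smaller positive solutions of $q=0$ and $q=-1$ respectively, and read off the sign/contraction claims from the behaviour of $q$ between $q(0)=1$ and those roots (the paper additionally checks $q\le 1$ via $\eta\le \scale/(\residual(\residual+\Phi))$, which your parabola-tracing argument covers). Your identification of the stated power series as Catalan generating functions evaluated at $\alpha$ and $2\alpha$ is a cleaner verification than the paper's term-by-term Taylor expansion of the square root, and it also makes explicit the implicit well-definedness condition $\alpha\le 1/8$ that the paper leaves unstated.
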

\begin{proof}[Proof of Lemma \ref{lemma:eta}.]
Note that the residuals after one step are the same sign as the previous residuals if and only if 
\begin{equation}
    1 - \eta \scale + \eta^2 \residual ( \residual + \Phi ) \quad \geq \quad 0.
\end{equation}
Solving this one as a second degree equation gives
\begin{equation}
    \eta 
    \quad \leq \quad
    \frac{\scale-\sqrt{\scale^2 - 4\residual ( \residual + \Phi )}}{2\residual ( \residual + \Phi )}
    \quad \text{or} \quad
    \eta 
    \quad \geq \quad
    \frac{\scale+\sqrt{\scale^2 - 4\residual ( \residual + \Phi )}}{2\residual ( \residual + \Phi )}
\end{equation}
Now expanding in Taylor the square root, we obtain that 
\begin{equation}
    \eta_1 
    \quad \leq \quad
    \frac{1}{2\residual ( \residual + \Phi )}
    \left( 
        \frac{4\residual ( \residual + \Phi )}{2\scale}
        + 
        \frac{16\residual^2 ( \residual + \Phi )^2}{8\scale^3}
        +
        \ldots
    \right)
    \quad = \quad
    \frac{1}{\scale} + \frac{\residual ( \residual + \Phi )}{\scale^3}
    + \ldots
\end{equation}
This implies that the residuals are the same sign as the starting ones if
\begin{equation}
    \eta \quad \leq \quad
        \eta_1
    \quad \text{or} \quad
    \eta 
    \quad \geq \quad
    \frac{\scale}{\residual ( \residual + \Phi )} - \eta_1.
\end{equation}
Analogously, for $\eta_2$ we have that the absolute value of the residuals is smaller than the absolute value of the residuals one step before, if and only if
\begin{equation}
    2 - \eta \scale + \eta^2 \residual ( \residual + \Phi ) \quad \geq \quad 0.
\end{equation}
This implies that 
\begin{equation}
    \eta 
    \quad \leq \quad
    \frac{\scale-\sqrt{\scale^2 - 8\residual ( \residual + \Phi )}}{2\residual ( \residual + \Phi )}
    \quad \text{or} \quad
    \eta 
    \quad \geq \quad
    \frac{\scale+\sqrt{\scale^2 - 8\residual ( \residual + \Phi )}}{2\residual ( \residual + \Phi )}
\end{equation}
and analogously to before
\begin{equation}
    \eta \quad \leq \quad
        \eta_2
    \quad \text{or} \quad
    \eta 
    \quad \geq \quad
    \frac{\scale}{\residual ( \residual + \Phi )} - \eta_2.
\end{equation}
Also note that for $\residual > 0$ we have that $\residual(1)<\residual$ or for $\residual < 0$ we have that $\residual(1)>\residual$ if and only if
\begin{equation}
    1 - \eta \scale + \eta^2 \residual ( \residual + \Phi ) \quad \leq \quad 1.
\end{equation}
This solves when 
\begin{equation}
    \eta \quad \leq \quad \frac{\scale}{ \residual ( \residual + \Phi ) }
\end{equation}
    
\end{proof}

Note that what we did here implies that if $\eta \leq \eta_2$ and $\eta \leq \frac{\sqrt{2}}{\residual}$ for all the $\scale$s along the trajectory we thus always have exponential convergence if such PL condition holds. We know from the previous section that in this setting $\scale$ is always smaller than $\bar \scale$. So if such a $\mu$ exists and $\eta \leq \eta_2$ with $\bar \scale$ and we converge and we can properly bound the implicit regularization.


\section{Location of Convergence - Proof of Theorem \ref{theo:loc}}
\label{app:location}
We will bound here the final $Q$ for two reasons:
\begin{itemize}[leftmargin=1em,itemsep=0.2em,parsep=0.1em]
    \item Understanding the location of convergence.
    \item Picking the right learning rate.
\end{itemize}
Note that assuming $\eta \leq \frac{\sqrt{2}}{\residual}$ along the whole trajectory
we have that $Q$ strictly monotonically shrinks along the trajectory.
This means that the dynamics may seem to oscillate around in an uncontrollable way, but every time it oscillates is landing on a trajectory that takes to a global minimum with lower $Q$.

Note that this is true almost everywhere, indeed if the trajectory is such that at a certain point in time $t$ satisfies $\eta = \residual(t)^{-1}$ exactly, then the trajectory would land on the trajectory taking to the saddle, indeed
\begin{equation}
    \a(t+1) = -\b(t+1) = \a(t)-\b(t).
\end{equation}
Luckily, fixing a learning rate size, the set of starting points for which this is the case has measure zero. Observe also that $\eta = -\residual^{-1}$ is instead optimal and results in $\a(t+1) = \b(t+1)$, implying convergence to a balanced solution.
This means that assuming $\eta \leq \frac{\sqrt{2}}{\residual}$ implies that the dynamics may diverge or converge, but for sure at every step is getting closer and closer to the subspace in which $\a = \b$.
Moreover, note that all the $Q_i$ change sign if and only if $\eta |\residual| > 1$.


Regarding the proof of the upperbound of Theorem \ref{theo:loc} note that for all $t$
\begin{equation}
\begin{split}
    Q_i(t) 
    \quad &= \quad 
    Q_i(0) \cdot  \prod_{k=0}^{t-1} (1 - \eta^2 \residual_k^2)
    \quad = \quad
    Q_i(0) \cdot \exp \left( \sum_{k=0}^{t-1} \log(1 - \eta^2 \residual_k^2)\right)
    .
\end{split}
\end{equation}
In absolute value, we can thus upperbound the RHS as follows, by applying the Taylor expansion whenever $\eta |\residual| < 1$
\begin{lemma}[Upperbound to the inbalance, 1]
Let $\eta |\residual(t)| < 1$ for all $t \in \N$, then for all $t \in \N$ we have
\[
|Q_i(t)|
\ < \
|Q_i(0)| \cdot \exp \left( -\eta^2 \sum_{k=0}^{t-1} \residual_k^2\right).
\]
\end{lemma}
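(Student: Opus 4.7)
The plan is to iterate the per-step update for $Q_i$ recorded in Equation~\eqref{eq:Qi-update}, convert the resulting product into a sum via the logarithm, and then apply the elementary concavity bound $\log(1-x) \leq -x$ valid on $(0,1)$. Since the hypothesis $\eta\,|\residual(t)| < 1$ is assumed for every $t$, each factor $1 - \eta^2 \residual(t)^2$ lies strictly in $(0,1)$, which simultaneously takes care of the sign (no absolute values are needed inside the product) and legitimizes the Taylor-type inequality for the logarithm.

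Concretely, I would first iterate \eqref{eq:Qi-update} to obtain
\begin{equation*}
Q_i(t) \;=\; Q_i(0) \prod_{k=0}^{t-1} \bigl(1 - \eta^2 \residual(k)^2\bigr),
\end{equation*}
then observe that the hypothesis makes each factor positive, so taking absolute values commutes with the product:
\begin{equation*}
|Q_i(t)| \;=\; |Q_i(0)| \prod_{k=0}^{t-1} \bigl(1 - \eta^2 \residual(k)^2\bigr).
\end{equation*}
Rewriting the product as $\exp\!\bigl(\sum_{k=0}^{t-1} \log(1 - \eta^2 \residual(k)^2)\bigr)$ and applying $\log(1-x) \leq -x$ for $x \in [0,1)$ termwise yields the claimed bound. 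Strictness of the inequality follows either because $\log(1-x) < -x$ whenever $x>0$, or trivially if some $\residual(k)=0$ (in which case the iteration freezes $Q_i$ and the residual stays zero, so the bound becomes an equality of the form $|Q_i(0)|=|Q_i(0)|$; one can state the bound as $\leq$ to avoid pathological edge cases, or assume $\residual(0)\neq 0$ which is already implicit in the non-trivial setting).

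There is no real obstacle: the whole argument is a one-line iteration plus an elementary $\log$ inequality, and the hypothesis $\eta|\residual(t)|<1$ was designed precisely to make both manipulations legal. The only point worth a sentence of care is distinguishing strict from non-strict inequality, which is why I would present the result with ``$\leq$'' (or justify ``$<$'' under the mild side-condition that $\residual(k)\neq 0$ for at least one $k<t$, which holds away from minima and is the only regime of interest).
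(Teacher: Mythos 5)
Your proposal is correct and follows essentially the same route as the paper: iterate the update \eqref{eq:Qi-update} to get the product formula, rewrite it as the exponential of a sum of logarithms, and apply $\log(1-x)\le -x$ under the hypothesis $\eta\,|\residual(t)|<1$, which guarantees each factor lies in $(0,1]$. Your remark about the strict-versus-non-strict inequality (the degenerate case $\residual(k)=0$ for all $k<t$) is a fair point of care that the paper glosses over.
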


By combining this lemma and Lemma \ref{lemma:conserved_norm} we obtain
\begin{lemma}[Upperbound to the inbalance, 2]
Let $\eta |\residual(0)| < 1$ and $\eta \leq \tilde \eta$, then for all $t \in \N$ 
\[
|Q_i(t|
\ < \
|Q_i(0)| \cdot \exp \left( -\eta^2 \sum_{k=0}^{t-1} \residual_k^2\right).
\]
\end{lemma}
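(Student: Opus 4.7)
The plan is to iterate the exact recursion $Q_i(t+1) = (1 - \eta^2 \residual(t)^2) Q_i(t)$ from Equation \eqref{eq:Qi-update}: a simple induction on $t$ gives
\begin{equation*}
|Q_i(t)| \;=\; |Q_i(0)| \,\prod_{k=0}^{t-1}\bigl|1 - \eta^2 \residual(k)^2\bigr|,
\end{equation*}
so the claim reduces to verifying that each factor lies in $[0, 1)$. Once that is in place, applying $\log(1-x) \leq -x$ termwise inside the product and exponentiating produces the claimed bound directly, with strict inequality as soon as $\residual(k) \neq 0$ for some $k < t$ (which holds outside the trivial case of initializing at a global minimum).

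The key to getting every factor into $[0, 1)$ is to show that $|\residual(k)|$ is monotonically nonincreasing along the trajectory, so that $\eta|\residual(k)| \leq \eta|\residual(0)| < 1$ propagates to every step and hence $\eta^2 \residual(k)^2 \in [0, 1)$ for all $k$. I would establish this by a simultaneous induction on $t$ proving both (i)~the sharpness bound $\scale(t) \leq \bar\scale$, and (ii)~the residual monotonicity $|\residual(t+1)| \leq |\residual(t)|$. For (i), the main tool is Lemma \ref{lemma:lambda_GD}: the quantity $\scale^2 - 8\residual(\residual+\Phi) + 4\residual^2$ is nonincreasing under GD whenever $\eta \leq 1/|\residual(t)|$, which combined with the gradient-flow bound of Lemma \ref{lemma:lambda_GF_bound} caps $\scale(t)$ at $\bar\scale$. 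For (ii), one inspects the update \eqref{eq:residuals-update}: the bracket $[1 - \eta\scale(t) + \eta^2 \residual(t)(\residual(t)+\Phi)]$ lies in $[-1,1]$ provided $\eta\scale(t) \leq 2$ (which follows from the sharpness bound and $\eta \leq \tilde\eta$) together with a control on the quadratic term that depends on the sign of $\residual(t)(\residual(t)+\Phi)$.

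The main obstacle is the apparent circularity in the bootstrap: Lemma \ref{lemma:bound_norm} presupposes monotone decrease of $|\residual|$, while the monotone decrease itself depends on a sharpness bound. Disentangling these requires running both inductive claims together, rather than appealing to either in isolation. The delicate case is $\residual(t) \in (-\Phi, 0)$ (Region~B of Figure \ref{Fig:regions}), where $\residual(\residual+\Phi) < 0$ and the $\eta^2$-term in the residual update pushes against monotonicity; the remedy is to use $\scale \geq 2|\residual+\Phi|$ from Cauchy--Schwarz, together with $\eta|\residual| < 1$, to keep the bracket inside $[-1, 1]$. In Regions A and C the corresponding check is easier because $\residual(\residual+\Phi) \geq 0$ and only the linear term $\eta\scale$ matters. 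Once the monotone decrease of $|\residual|$ is secured for all $t$, the factor-wise bound $\eta^2 \residual(k)^2 < 1$ is immediate, and the telescoping product collapses to the stated exponential, completing the proof.
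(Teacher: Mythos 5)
Your proposal is correct and follows essentially the same route as the paper: iterate the exact update \eqref{eq:Qi-update} into a product, bound each factor via $1-x\le e^{-x}$ (the content of the ``Upperbound to the inbalance, 1'' lemma), and propagate $\eta|\residual(k)|<1$ to all $k$ by controlling $\scale$ through the decreasing quantity of Lemma \ref{lemma:conserved_norm}. Your explicit simultaneous induction on the sharpness bound and the monotone decrease of $|\residual|$ is exactly the bootstrap the paper compresses into ``by combining this lemma and Lemma \ref{lemma:conserved_norm},'' so you have merely spelled out what the paper leaves implicit.
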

This establishes the upper bound of Theorem \ref{theo:loc}.
Regarding the proof of the lower bound, notice that we have from Appendix \ref{appendix:mu_cases} that the rate of convergence of $\residual$ is at least $Q(\tau_1)$ in Region B and at least $2\Phi$ in Region A, once adding the right assumption on the learning rate. This implies that if the initialization is in Region B or C, then
\begin{lemma}[Lower bound to the imbalance]
Assume there exists $\tilde t$ such that for all $t \geq \tilde t$ we have $\eta |\residual(0)| < 1/2$ then
\[
Q_i(t) 
\]
\end{lemma}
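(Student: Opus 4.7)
The lemma as displayed is missing its conclusion, but in the context of Theorem~\ref{theo:loc} what is needed is a lower bound of the form
\[
|Q_i(t)| \;\geq\; |Q_i(0)|\,\exp\!\Bigl(-\,c\,\eta\,\residual(0)^2/\Phi\Bigr)
\]
for an absolute constant $c$, valid along the entire GD trajectory (and in particular at $t=\infty$). The plan is to iterate the exact update \eqref{eq:Qi-update}, pass to logarithms, and then control the telescoping sum $\sum_k \residual(k)^2$ by using the linear convergence rate for $\residual$ established in Theorem~\ref{theo:speed}.

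First I would use the update $Q_i(t+1) = (1-\eta^2\residual(t)^2)Q_i(t)$ directly. The hypothesis $\eta|\residual(t)| < 1/2$ (for $t \geq \tilde t$; I read the stated condition as a typo for $\residual(t)$) ensures $\eta^2\residual(t)^2 \leq 1/4$, so every factor $1-\eta^2\residual(t)^2$ lies in $[3/4,1]$. In particular signs do not flip and
\[
\log\!\frac{|Q_i(t)|}{|Q_i(\tilde t)|}
\;=\;
\sum_{k=\tilde t}^{t-1}\log\!\bigl(1-\eta^2\residual(k)^2\bigr)
\;\geq\;
-\,2\eta^2\sum_{k=\tilde t}^{t-1}\residual(k)^2,
\]
using the elementary inequality $\log(1-x) \geq -2x$ for $x \in [0,1/2]$.

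Next I would bound $\sum_k \residual(k)^2$. Once $\eta|\residual(t)|<1/2$, the dynamics lies in the regime governed by Theorem~\ref{theo:speed}: after at most a bounded number of steps the iterates enter Region~A (or Region~B where $\scale \geq Q(\tau)$), and from that point the residuals contract at rate at least $(1-2\eta\Phi)$ as shown in the case analysis of Appendix~\ref{app:speed}. Summing the geometric series
\[
\sum_{k=\tilde t}^{\infty}\residual(k)^2
\;\leq\;
\frac{\residual(\tilde t)^2}{1-(1-2\eta\Phi)^2}
\;\leq\;
\frac{\residual(\tilde t)^2}{2\eta\Phi},
\]
and plugging back yields $|Q_i(\infty)| \geq |Q_i(\tilde t)|\exp\!\bigl(-\eta\residual(\tilde t)^2/\Phi\bigr)$. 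Finally, for the initial burn-in segment $k < \tilde t$, one uses that this phase has $O(\eta^{-1})$ length (by the first term of \eqref{eq:theo:speed:1}) and that the per-step multiplicative loss in $|Q_i|$ is at most $1-\eta^2\residual(0)^2$, giving at most a factor $\exp(-C\eta\,\residual(0)^2)$ for an absolute $C$; combined with $|\residual(\tilde t)| \leq |\residual(0)|$ this produces the claimed bound.

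The main obstacle is the burn-in portion: when $\eta|\residual(t)|$ is close to $1$, the approximation $\log(1-\eta^2\residual^2) \approx -\eta^2\residual^2$ is loose, and $\residual$ need not decay monotonically (indeed in Region~C it does not). Handling this requires using Proposition~\ref{prop:final:PLAT}, which controls \emph{both} how many steps are spent with $|\residual|$ comparable to $|\residual(0)|$ and how much $Q$ can shrink during that window. Carefully combining these two estimates--one per-step bound from Proposition~\ref{prop:final:PLAT} before the geometric phase, and the geometric-sum bound above after--is the technical heart of the argument; the rest is bookkeeping on constants, which is what determines the exact form (e.g.\ $\sqrt{\eta}$ vs.\ $\eta$) appearing in Theorem~\ref{theo:loc}.
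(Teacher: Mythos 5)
Your overall strategy matches the paper's: iterate the exact update \eqref{eq:Qi-update}, pass to logarithms via an elementary bound on $\log(1-x)$ for $x\in[0,1/2]$ (the paper uses $1-x>e^{-x-x^2}$, you use $\log(1-x)\ge -2x$; both are fine), control $\sum_k\residual(k)^2$ by a geometric series coming from the linear convergence of the residuals, and invoke Proposition~\ref{prop:final:PLAT} for the early phase. The paper does not actually split off a burn-in segment: it applies the contraction rate $(1-\eta\,Q(\tau))$ to the entire trajectory, which is legitimate because $\mu(t)\ge Q(\tau)$ is established for \emph{all} $t$, not just after some $\tilde t$.

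The genuine gap is in the rate you feed into the geometric sum. You claim that after the burn-in the residuals contract at rate at least $(1-2\eta\Phi)$, but that rate is only available in Region~A, where Cauchy--Schwarz gives $\scale\ge 2\mathbf{a}^\top\mathbf{b}\ge 2\Phi$. For initializations in Regions~B or~C --- precisely the case this lemma is meant to cover --- the contraction is governed by $\scale(t)\ge Q(\tau)$, i.e.\ the rate is only $(1-\eta\,Q(\tau))$, and Lemma~\ref{lemma:conv:1} lower-bounds $Q(\tau)$ only by $c_1>2\sqrt{\eta}\,\Phi$. Carrying that through gives
\[
\eta^2\sum_{k}\residual(k)^2\;\lesssim\;\frac{\eta\,\residual(0)^2}{Q(\tau)}\;\lesssim\;\frac{\sqrt{\eta}\,\residual(0)^2}{\Phi},
\]
which is exactly why Theorem~\ref{theo:loc} states the lower bound with $\sqrt{\eta}$ rather than $\eta$. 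This is not ``bookkeeping on constants'': substituting $2\Phi$ for $Q(\tau)$ changes the functional form of the bound and, as written, your geometric-sum step is unjustified on the trajectories where the lemma is actually needed. To repair it, replace the rate $(1-2\eta\Phi)$ by $(1-\eta Q(\tau))$, sum the series to get the factor $1/(\eta Q(\tau)(2-\eta Q(\tau)))$, and only then insert the bound $Q(\tau)>2\sqrt{\eta}\,\Phi$ from Lemma~\ref{lemma:conv:1}.
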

\begin{proof}
Note that the fact that $\eta |\residual(k)| < 1/2$ for all $k$ makes sure that
\begin{equation}
\begin{split}
    Q_i(t) \ &= \ 
    Q_i(0) \cdot  \prod_{k=0}^{t-1} (1 - \eta^2 \residual_{k}^2)
    \ \geq \
    Q_i(0) \cdot  \exp \left( - \sum_{k=0}^{t-1} \eta^2 \residual_{k}^2 - \eta^4 \residual_{k}^4
    \right)
\end{split}
\end{equation}
since for $x \in [0,1/2]$ we have $1 - x > e^{-x-x^2}$.
Next note that in these hypothesis, by Theorem \ref{theo:speed}, we ahve exponential convergence, thus
\begin{equation}
\begin{split}
    Q_i(t) \ & \geq \
    Q(0) \cdot  \exp \left( - \eta^2 \residual(0)^2 \sum_{k=1}^\infty (1 - \eta Q(\tau))^{2k} - \eta^4 \residual_{0}^4 \sum_{1}^\infty (1 - \eta Q(\tau) )^{4k} \right)
    \\&\geq \
    Q(0) \cdot  \exp \left( - \frac{\eta \residual(0)^2}{Q(\tau) (2 - \eta Q(\tau))}  - \frac{\eta^3 \residual(0)^4}{Q(\tau) (8 - \eta Q(\tau))}  \right)
    \\&\geq \
    Q(0) \cdot  \exp \left( - \frac{\sqrt{\eta} \residual(0)^2}{2\Phi}\left(1 + \frac{\eta^2\residual(0)^2}{8} \right)  \right)
    \\&\geq \
    Q(0) \cdot  \exp \left( - \frac{\sqrt{\eta} \residual(0)^2}{\Phi} \right).
\end{split}
\end{equation}
By plugging in the lower bound in Lemma \ref{lemma:conv:1}. 
This concludes the proof of the lemma. 
\end{proof}
This concludes the proof of Theorem \ref{theo:loc}. 


\end{document}